\theoremstyle{plain}
\newtheorem{theorem}{Theorem}[section]
\newtheorem{lemma}[theorem]{Lemma}
\newtheorem{corollary}[theorem]{Corollary}
\theoremstyle{definition}
\newtheorem{definition}[theorem]{Definition}
\theoremstyle{remark}
\newtheorem{remark}[theorem]{Remark}
\def\KL{\mathbf{d}_{\mathrm{KL}}}
\def\normal{\mathcal{N}}
\def\diffentropy{\bf h}
\def\E{\mathbb{E}}
\def\F{\mixture}
\def\H{\mathbb{H}}
\def\diffentropy{\mathbf{h}}
\def\I{\mathbb{I}}
\def\Pr{\mathbb{P}}
\def\L{\mathbb{L}}
\def\1{\mathbf{1}}
\def\mixture{\mathbb{F}}
\newcommand{\Xc}{\mathcal{X}}
\newcommand{\Yc}{\mathcal{Y}}
\newcommand{\Lc}{\mathcal{L}}
\newcommand{\Z}{\mathbb{Z}}
\newcommand{\kibitz}[2]{\ifnum\Comments=1{\textcolor{#1}{\textsf{\footnotesize #2}}}\fi}
\definecolor{darkred}{rgb}{0.7,0,0}
\definecolor{darkgreen}{rgb}{0.0,0.5,0.0}
\definecolor{darkblue}{rgb}{0.0,0.0,0.5}
\definecolor{teal}{rgb}{0.0,0.5,0.5}
\begin{document}

\ifdefined\usebigfont

\onecolumn
\else
\twocolumn[
\fi
\arxivtitle{An Information-Theoretic Analysis of \\
           In-Context Learning}



\arxivsetsymbol{equal}{*}

\begin{arxivauthorlist}
\arxivauthor{Hong Jun Jeon}{stanfordcs}
\arxivauthor{Jason D. Lee}{princeton}
\arxivauthor{Qi Lei}{nyu}
\arxivauthor{Benjamin Van Roy}{stanford}
\end{arxivauthorlist}

\arxivaffiliation{princeton}{Princeton University, Princeton, NJ, USA}
\arxivaffiliation{stanfordcs}{Department of Computer Science, Stanford University, Stanford, CA, USA}
\arxivaffiliation{stanford}{Stanford University, Stanford, CA, USA}
\arxivaffiliation{nyu}{New York University, New York City, NY, USA}

\arxivcorrespondingauthor{Hong Jun Jeon}{hjjeon@stanford.edu}

\arxivkeywords{Meta-Learning, Sequential Learning, Information Theory}

\vskip 0.3in
]



\printAffiliationsAndNotice{}  

\begin{abstract}
Previous theoretical results pertaining to meta-learning on sequences build on contrived assumptions and are somewhat convoluted.
We introduce new information-theoretic tools that lead to an elegant and very general decomposition of error into three components: irreducible error, meta-learning error, and intra-task error.  These tools unify analyses across many meta-learning challenges.  To illustrate, we apply them to establish new results about in-context learning with transformers.  Our theoretical results characterizes how error decays in both the number of training sequences and sequence lengths.  Our results are very general; for example, they avoid contrived mixing time assumptions made by all prior results that establish decay of error with sequence length.
\end{abstract}

\section{Introduction}
In recent years, we have observed the capability of large language models (LLMs) to learn from data within just its context window.  This puzzling phenomenon referred to as in-context learning (ICL) \citep{brown2020language}, has captured the attention of the theoretical machine learning community.  As the data available in-context is dwarfed by the extensive pretraining set, meta-learning stands as a prevailing explanation for ICL \cite{xie2022explanation}.

As aforementioned, \citet{xie2022explanation} introduced the idea that ICL could be interpreted as implicit Bayesian inference within a mixture of HMMs.  While their theoretical results rely on contrived assumptions and fail to explain how ICL is possible with such short sequences, their work \emph{initiated} the study of modeling ICL as Bayesian inference or other thoroughly studied learning processes such as empirical risk minimization.  As much of the theoretical community is most familiar with error analyses of empirical risk minimization, much of the existing results \citep{pmlr-v202-li23l, bai2023transformers, DBLP:journals/corr/abs-2110-10090} study the error of an ICL under the assumption that ICL is competitive in out-of-sample performance with empirical risk minimization.  However, each of these error bounds is limited in some way such as exponential depth dependence \citep{DBLP:journals/corr/abs-2110-10090, pmlr-v202-li23l} or error which decays only with the number of sequences and not the length of the sequences \citep{DBLP:journals/corr/abs-2110-10090, bai2023transformers}.  The results which do demonstrate that error decays in both the number of training sequences and sequence length often rely on contrived mixing time assumptions \citep{zhang2023does} or stability conditions which are equivalent to fast mixing \citep{pmlr-v202-li23l}.

Our work revisits the idea of modeling ICL as Bayesian inference.  In this work, we introduce new information-theoretic tools based on work by \citet{jeon2023informationtheoretic} which lead to an elegant and very general decomposition of error in meta-learning from sequences.  This decomposition consists of three components: irreducible error, meta-learning error, and intra-task error.  This unifies theoretical error analyses across many meta-learning challenges.  Notably, our results provide an error bound which decays linearly in both the number of sequences and the lengths of the sequences without explicit reliance on any stability or mixing assumptions within the sequence.  To demonstrate the use of our results, we specialize our theory to reproduce existing results in linear representation learning and to produce new results pertaining to a sparse mixture of transformer models.  The latter result provides a compelling narrative as to how ICL is possible with such few examples.

As some of our tools are non-standard to much of the community, we begin by introducing our framework in the simpler setting of learning from a single sequence of data.  In the following section, we naturally extend the analysis to meta-learning from many sequences and present our main result (Theorem \ref{th:main_result}).  Since our results are very general and abstract, we demonstrate the application of these results to several concrete problem instances.  In the main text, we provide concrete examples which resemble learning from data generated by a deep transformer model and in the appendix we provide simpler problem instances for reference (logistic regression, linear representation learning).  

\section{Related Works}
\paragraph{In-context Learning and Transformer.}
LLMs based on the transformer architecture \citep{vaswani2023attention} have exhibited the ability to learn from data within the context of a prompt \citep{brown2020language}.  This phenomenon, referred to as in-context learning (ICL), has received significant empirical investigation ~\citep{liu2021makes,min2021metaicl,lu2021fantastically,zhao2021calibrate,rubin2021learning,elhage2021mathematical,kirsch2022general,wei2023larger,brown2020language,dong2022survey}. 

However, theoretical understanding of ICL is still relatively nascent ~\citep{xie2022explanation,garg2022can,von2023transformers,dai2022can,giannou2023looped,pmlr-v202-li23l,raventos2023effects}.  Among the existing theoretical work, most focuses on the optimization dynamics~\citep{tian2023scan,tian2023joma,jelassi2022vision,li2023transformers,tarzanagh2023transformers,zhang2023trained,huang2023context,ahn2023transformers,mahankali2023one} or the representation power~\citep{sanford2023representational,song2023uncovering,von2023transformers,giannou2023looped,liu2022transformers} regarding the transformer architecture.  In the realm of statistical results, much of the existing work is confined to how transformers can perform ICL by simulating gradient descent~\citep{von2023transformers,akyurek2022learning,dai2022can,giannou2023looped}.  However, as they provide no concrete sample complexity results, they are therefore not directly comparable to our work.  The work that is perhaps most relevant to ours include those which analyze the sample complexity of ICL under the assumption that its performance is comparable to empirical risk minimization or Bayesian inference ~\cite{xie2022explanation,pmlr-v202-li23l,bai2023transformers,DBLP:journals/corr/abs-2110-10090,zhang2023does}.  Despite their quantitative sample complexity results, as mentioned in the introduction, these results are ultimately limited by either their restrictive assumptions on mixing times of the data sequence or their inability to capture how sequence length contributes to reduction in error.

\paragraph{Meta-learning.} 
As our work analyzes ICL under the lens of meta-learning, we provide a brief exposition of its existing work.  Recent empirical advancements have sparked interest in the theoretical foundations of meta-learning~\citep{baxter2000model,denevi2018incremental,finn2019online}.  In settings such as tasks drawn from a shared meta-distribution, several works \citep{maurer2009transfer, pontil2013excess, maurer2016benefit} have derived generalization bounds albeit for simplistic settings such as linear representation or linear classifiers. Under strong assumptions such as large margin or large number of tasks~\citet{srebro2006learning,aliakbarpour2023metalearning} were also able to establish such bounds.  However, these results all rely on the assumption that the data \emph{within} each meta-task is independently and identically distributed (iid) under an (unknown) probability distribution.  However, in the context of LLMs, for which the meta-tasks are separate documents, the sequence of tokens within each document is certainly not iid.  Our work provides novel theoretical tools which facilitate the analysis of meta-learning from sequential data which may not be iid.


\section{Learning from Sequential Data}

For exposition, we begin by introducing our general information-theoretic tools for the analysis of standard \emph{supervised learning} on sequential data.  Examples of such learning problems include but are not limited to natural language modeling and learning from video/audio data.  Phenomena such as ICL in LLMs is another fascinating instance of machine learning from sequential data.  Results from this section draw inspiration from \citep{jeon2023informationtheoretic} which focused on the analysis of supervised learning from \emph{iid} data.

We model all uncertain quantities as random variables.  Each random variable we consider is defined with respect to a common probability space $(\Omega, \F, \Pr)$.  Of particular interest to our analysis is a sequence $X_1, X_2, \ldots, X_T$ of discrete random variables which represent observations.  This sequence is generated by an autoregressive model parameterized by a random variable $\theta$ such that for all $t\in \Z_{+}$, $X_{t+1}$ may depend on $\theta$ and the entire history $X_1, \ldots, X_{t}$, which we abbreviate as $H_t$.



\subsection{Bayesian Error}

Our framework is {\it Bayesian} in the sense that it treats learning as the process of reducing uncertainty about $\theta$, which is taken to be a random variable.  A learning algorithm produces, for each $t$, a \emph{predictive distribution} $P_t$ of $X_{t+1}$ after observing the history $H_t$.  We express such an algorithm in terms of a function $\pi$ for which $P_t = \pi(H_t)$.  For a horizon $T\in\Z_{++}$, we quantify the error realized by predictions $P_t$ for $t < T$ in terms of the average cumulative expected log-loss:
$$\L_{T,\pi} = \frac{1}{T}\sum_{t=0}^{T-1}\ \E_\pi\left[-\ln P_t(X_{t+1})\right].$$

\subsection{Achievable Bayesian Error}

A natural question is: which $\pi$ minimizes the Bayesian error? The following result establishes that across all problem instances, the optimal algorithm $\pi$ sets $P_t = \Pr(X_{t+1}\in\cdot|H_t)$ for all $t$. We denote this \emph{Bayesian posterior} by $\hat{P}_t$.
\begin{lemma}{\bf (Bayesian posterior is optimal)}\label{le:bayes_optimal_seq}
    For all $t\in\Z_{+}$,
    $$ \E\left[-\ln\hat{P}_t(X_{t+1})\big|H_t\right]\ \overset{a.s.}{=}\ \min_{\pi}\ \E_\pi \left[-\ln P_t(X_{t+1})|H_t\right].$$
\end{lemma}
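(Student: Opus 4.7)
The plan is to reduce the statement to Gibbs' inequality. For any algorithm $\pi$, the predictive distribution $P_t = \pi(H_t)$ is by construction $\sigma(H_t)$-measurable, so conditionally on $H_t$ it can be treated as a fixed (non-random) probability mass function on the value space of $X_{t+1}$. Meanwhile, by the very definition of the Bayesian posterior, the conditional distribution of $X_{t+1}$ given $H_t$ is $\hat{P}_t$ almost surely. These two facts together let me rewrite both sides of the identity as deterministic functions of $H_t$.

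First, I would expand the conditional expected log-loss by integrating against the conditional law of $X_{t+1}$, giving
\[
\E_\pi\!\left[-\ln P_t(X_{t+1})\,\big|\,H_t\right]
\;=\; -\sum_x \hat{P}_t(x)\,\ln P_t(x)
\qquad \text{a.s.},
\]
and similarly $\E[-\ln \hat{P}_t(X_{t+1})\mid H_t] = -\sum_x \hat{P}_t(x)\ln \hat{P}_t(x)$. Taking the difference between these two quantities yields
\[
\E_\pi\!\left[-\ln P_t(X_{t+1})\,\big|\,H_t\right]
- \E\!\left[-\ln \hat{P}_t(X_{t+1})\,\big|\,H_t\right]
\;=\; \sum_x \hat{P}_t(x)\,\ln\!\frac{\hat{P}_t(x)}{P_t(x)}
\;=\; \KL(\hat{P}_t \,\|\, P_t),
\]
which is the conditional KL divergence between $\hat{P}_t$ and $P_t$.

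Next, I would invoke Gibbs' inequality (equivalently, Jensen's inequality applied to $-\ln$) to conclude that this KL divergence is almost surely nonnegative, with equality when $P_t = \hat{P}_t$. Taking the infimum over $\pi$ therefore yields the stated identity, attained by the algorithm that returns $\hat{P}_t$.

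The only real technical subtlety, which I expect to be the main obstacle to writing cleanly rather than to the mathematics, is the measurability bookkeeping: one must verify that $P_t = \pi(H_t)$ may indeed be pulled outside the conditional expectation given $H_t$, and that the argmin is attained by a valid (measurable) algorithm, namely the regular conditional distribution $H_t \mapsto \hat{P}_t$. In the discrete observation setting assumed in the excerpt this is standard, but it warrants a sentence for rigor.
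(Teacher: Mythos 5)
Your proposal is the same argument as the paper's: both decompose the conditional log-loss as the Bayes-optimal log-loss plus $\KL(\hat{P}_t\|P_t)$ and conclude by non-negativity of KL divergence (Gibbs' inequality). The only difference is cosmetic---you expand the conditional expectation as an explicit sum over outcomes and flag the measurability bookkeeping, while the paper writes the decomposition directly inside the expectation---so the two proofs are substantively identical.
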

\begin{proof}
    In the below proof take all equality to hold \emph{almost surely}.
    \begin{align*}
        &\ \E\left[-\ln P_t(X_{t+1})|H_t\right]\\
        & = \E\left[-\ln \hat{P}_t(X_{t+1}) + \ln\frac{\hat{P}_t(X_{t+1})}{P_t(X_{t+1})}\Big| H_t\right]\\
        & = \E\left[-\ln \hat{P}_t(X_{t+1})\Big| H_t\right] +\KL(\hat{P}_t\|P_t).
    \end{align*}
    The result follows from the fact that $\KL(\hat{P}_{t}\|P_{t}) > 0$ for all $P_{t} \neq \hat{P}_{t}$.
\end{proof}

We use $\L_T$ to denote the \emph{optimal} achievable Bayesian error:
\begin{align*}
    \L_T 
    & = \frac{1}{T}\sum_{t=0}^{T-1}\ \E\left[- \ln\hat{P}_t(X_{t+1})\right].
\end{align*}
In the main text we restrict our attention to the study of \emph{optimal} achievable Bayesian error but we provide an extension to arbitrary predictors which depend on the history $H_t$ in Appendix \ref{apdx:suboptimal}.  The following result provides an exact characterization of the optimal cumulated expected log-loss.
\begin{restatable}{theorem}{seqBayesError}{\bf (Bayesian error)}\label{th:BayesError}
    For all $T \in \Z_{+}$,
    $$\L_{T} = \underbrace{\frac{\H(H_T|\theta)}{T}}_{\substack{\text{irreducible}\\ \text{error}}} + \underbrace{\frac{\I(H_{T};\theta)}{T}}_{\substack{\text{estimation}\\ \text{error}}}.$$
\end{restatable}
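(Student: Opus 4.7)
The plan is to directly unwind $\L_T$ into familiar information-theoretic quantities using only the chain rule of entropy and the definition of mutual information. The proof should be short because the previous lemma already identifies the optimal predictor as the Bayesian posterior $\hat P_t$, so the hard identification work is finished; what remains is an algebraic rearrangement of entropies.

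First, I would apply Lemma~\ref{le:bayes_optimal_seq} (together with the tower property of conditional expectation) to write
\[
\L_T \;=\; \frac{1}{T}\sum_{t=0}^{T-1}\E\bigl[-\ln \hat P_t(X_{t+1})\bigr].
\]
Next, I would observe that since $\hat P_t(\cdot) = \Pr(X_{t+1}\in\cdot\,|\,H_t)$ is the conditional law of $X_{t+1}$ given $H_t$, the quantity $\E[-\ln \hat P_t(X_{t+1})\,|\,H_t]$ is exactly the conditional entropy $\H(X_{t+1}\,|\,H_t=\cdot)$ evaluated at the realized history; taking outer expectation yields
\[
\E\bigl[-\ln \hat P_t(X_{t+1})\bigr] \;=\; \H(X_{t+1}\,|\,H_t).
\]
This uses that $X_{t+1}$ is discrete so that $\hat P_t$ is a genuine probability mass function and the expected negative log-likelihood coincides with the Shannon conditional entropy.

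The final step is two applications of standard identities. By the chain rule of entropy, since $H_T = (X_1,\ldots,X_T)$,
\[
\sum_{t=0}^{T-1}\H(X_{t+1}\,|\,H_t) \;=\; \H(H_T),
\]
so $\L_T = \H(H_T)/T$. By the definition of mutual information, $\H(H_T) = \H(H_T\,|\,\theta) + \I(H_T;\theta)$. Dividing by $T$ gives the claimed decomposition.

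Honestly, the main potential obstacle is only book-keeping: one must make sure the chain rule is invoked for $H_T$ rather than for $(H_T,\theta)$, and that measurability/integrability issues are harmless because the $X_t$ are discrete (so all entropies are well-defined, though possibly $+\infty$, in which case the identity holds in $[0,\infty]$). No technical mixing or contraction estimates are needed here; the theorem is really just a clean repackaging of Shannon's chain rule once the Bayesian posterior has been identified as optimal.
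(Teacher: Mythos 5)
Your proof is correct, and it takes a slightly different (and arguably cleaner) route than the paper's. The paper performs the $\theta$-decomposition \emph{inside} each summand: it adds and subtracts $\ln\Pr(X_{t+1}\mid H_t,\theta)$, identifies the two resulting pieces as $\H(X_{t+1}\mid\theta,H_t)$ and $\E[\KL(\Pr(X_{t+1}\in\cdot\mid H_t,\theta)\,\|\,\hat P_t)] = \I(X_{t+1};\theta\mid H_t)$, and only then applies the chain rule to each of the two sums separately. You instead first collapse the whole sum to $\H(H_T)$ via the entropy chain rule, and only afterward split $\H(H_T)=\H(H_T\mid\theta)+\I(H_T;\theta)$ in a single step. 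The end result is identical, and both invocations of the chain rule are valid; your version needs one fewer chain-rule application and avoids introducing the intermediate per-step KL term. The trade-off is that the paper's term-by-term decomposition makes the interpretation of each per-step contribution (irreducible noise plus per-step information gain about $\theta$) more transparent, which it leans on when generalizing to the meta-learning setting in Theorem~\ref{th:main_result}. Your remark about measurability and the $[0,\infty]$-valued convention is reasonable bookkeeping and does not affect the validity of the argument.
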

\begin{proof}
    \begin{align*}
        \L_T
        & = \frac{1}{T}\sum_{t=0}^{T-1}\ \E\left[-\ln\hat{P}_j(X_{t+1})\right]\\
        & = \frac{1}{T}\sum_{t=0}^{T-1}\ \E\left[\ln\frac{1}{\Pr(X_{t+1}|H_t,\theta)} + \ln\frac{\Pr(X_{t+1}|H_t,\theta)}{\hat{P}_t(X_{t+1})}\right]\\
        & = \frac{1}{T}\sum_{t=0}^{T-1}\ \H(X_{t+1}|\theta, H_t)\\
        &\  + \frac{1}{T}\sum_{t=0}^{T-1} \E\left[\KL(\Pr(X_{t+1}\in\cdot|H_t,\theta)\|\hat{P}_t(X_{t+1}\in\cdot))\right]\\
        & \overset{(a)}{=} \frac{\H(H_T|\theta)}{T} + \frac{1}{T}\sum_{t=0}^{T-1} \I(X_{t+1};\theta|H_t)\\
        & \overset{(b)}{=} \frac{\H(H_T|\theta)}{T} + \frac{\I(H_T;\theta)}{T},
    \end{align*}
    where $(a)$ and $(b)$ follow from the chain rule of conditional mutual information.
\end{proof}

\citet{jeon2023informationtheoretic} establish Theorem \ref{th:BayesError} in the setting in which the sequence is iid when conditioned on $\theta$.  We refer to $\H(H_{T}|\theta)$ as the \emph{irreducible error} because it is the error incurred by even the \emph{omniscient} predictor $\Pr(X_{t+1}\in\cdot|\theta,H_t)$.  The \emph{estimation error} represents statistical error incurred by an agent that produces estimates of the future $X_{t+1}$ from the past sequence $H_t$.  Since estimation error encompasses error which is \emph{reducible} via learning, our analysis will focus on characterizing this quantity.  We use
\begin{align*}
    \Lc_T
    & = \frac{\I(H_T;\theta)}{T},
\end{align*}
to denote the estimation error.  $\Lc_T$ will often vanish as $n\to \infty$.  For instance, if $\H(\theta)<\infty$, then this will trivially be the case as $\I(H_t;\theta)\leq \H(\theta)$ for all $t$.  However, even in problems for which $\H(\theta) = \infty$, for example if $\theta$ is a continuous random variable, the estimation error will still often vanish as $n\to \infty$.  Note that $\H(\theta)$ should not be confused with $\diffentropy(\theta)$, the \emph{differential entropy} of $\theta$.  The differential entropy does not capture the same qualitative properties as discrete entropy, namely $1)$ invariance under change of variables, $2)$ non-negativity.  While \emph{differences} in differential entropy still provide meaningful insight via mutual information $(\I(X;Y) = \diffentropy(X)-\diffentropy(X|Y))$, the quantity itself is largely vacuous for the purposes of measuring information content and therefore deriving error bounds.  The appropriate extension of discrete entropy to continuous random variables can be made via \emph{rate-distortion theory}.

\begin{definition}{(\bf rate-distortion function)} Let $\epsilon\geq 0$, $\theta:\Omega\mapsto\Theta$ be a random variable, and $\rho$ a distortion function which maps $\theta$ and a random variable $\tilde{\theta}$ to $\Re$. The rate-distortion function evaluated for random variable $\theta$ at tolerance $\epsilon$ takes the value:
$$\inf_{\tilde{\theta}\in\tilde{\Theta}_\epsilon}\ \I(\theta;\tilde{\theta}),$$
where
$$\tilde{\Theta}_\epsilon = \left\{\tilde{\theta}: \rho(\theta,\tilde{\theta}) \leq \epsilon\right\}.$$
\end{definition}
One can think of $\tilde{\theta}$ as a lossy \emph{compression} of the random variable $\theta$. The objective $\I(\theta;\tilde{\theta})$, referred to as the \emph{rate}, characterizes the number of nats that $\tilde{\theta}$ retains about $\theta$. Meanwhile, the distortion function $\rho$ characterizes how lossy the compression is.  When we apply rate-distortion theory to the analysis of machine learning, we restrict our attention to the case in which
\begin{align*}
    & \rho(\theta,\tilde{\theta})\\
    & = \E\left[\KL(\Pr(X_{t+1}\in\cdot|\theta, H_t)\|\Pr(X_{t+1}\in\cdot|\tilde{\theta}, H_t))\right]\\
    & = \I(X_{t+1};\theta|\tilde{\theta}, H_t).
\end{align*}
We assume that $\tilde{\theta}\perp X_{t+1}|(\theta, H_t)$ (the compression $\tilde{\theta}$ does not contain exogenous information about $X_{t+1}$, such as aleatoric noise, which cannot be determined from $(\theta, H_{t})$). We use the notation $\H_{\epsilon, T}(\theta)$ to denote the rate-distortion function w.r.t. this KL-divergence distortion function averaged across horizon $T$:
$$\H_{\epsilon,T}(\theta) = \inf_{\tilde{\theta}\in\tilde{\Theta}_{\epsilon.T}}\ \I(\theta;\tilde{\theta}),$$
where
\begin{align*}
    \tilde{\Theta}_{\epsilon, T}
    & = \left\{\tilde{\theta}: \tilde{\theta}\perp H_T|\theta;\quad \frac{\I(H_T;\theta|\tilde{\theta})}{T} \leq \epsilon \right\}.
\end{align*}

With this notation established, we present the following result for sequential learning.  The proof can be found in Appendix \ref{apdx:seq}.

\begin{restatable}{theorem}{seqBayesRD}{\bf (rate-distortion estimation error bound)}\label{th:bayes_rd}
    For all $T \in \Z_{+}$,
    $$\sup_{\epsilon\geq 0}\ \min\left\{\frac{\H_{\epsilon, T}(\theta)}{T} , \epsilon \right\}\ \leq\ \Lc_T\ \leq\ \inf_{\epsilon\geq 0}\ \frac{\H_{\epsilon, T}(\theta)}{T} + \epsilon.$$
\end{restatable}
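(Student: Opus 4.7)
The plan is to prove the upper and lower bounds separately, each by comparing $\I(H_T;\theta)$ to $\I(\theta;\tilde{\theta})$ and $\I(H_T;\theta\mid\tilde{\theta})$ for an arbitrary feasible compression $\tilde{\theta}\in\tilde{\Theta}_{\epsilon,T}$, and then optimizing.

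For the upper bound, fix $\epsilon\geq 0$ and pick any $\tilde{\theta}\in\tilde{\Theta}_{\epsilon,T}$. The defining conditional independence $\tilde{\theta}\perp H_T\mid\theta$ gives $\I(H_T;\tilde{\theta}\mid\theta)=0$, so by the chain rule $\I(H_T;\theta)=\I(H_T;\theta,\tilde{\theta})$. Expanding that joint mutual information the other way yields $\I(H_T;\theta,\tilde{\theta})=\I(H_T;\tilde{\theta})+\I(H_T;\theta\mid\tilde{\theta})$. The same conditional independence certifies the Markov chain $H_T-\theta-\tilde{\theta}$, so the data-processing inequality gives $\I(H_T;\tilde{\theta})\leq \I(\theta;\tilde{\theta})$. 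Combining with the distortion constraint $\I(H_T;\theta\mid\tilde{\theta})\leq T\epsilon$ and dividing by $T$, I get $\Lc_T\leq \I(\theta;\tilde{\theta})/T+\epsilon$. Taking infimum over $\tilde{\theta}\in\tilde{\Theta}_{\epsilon,T}$ produces $\Lc_T\leq \H_{\epsilon,T}(\theta)/T+\epsilon$, and then infimum over $\epsilon$ gives the stated upper bound.

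For the lower bound, fix $\epsilon\geq 0$ and split into two cases. If $\Lc_T\geq\epsilon$, then $\min\{\H_{\epsilon,T}(\theta)/T,\epsilon\}\leq\epsilon\leq\Lc_T$ with no work. If $\Lc_T<\epsilon$, I would exhibit a trivial compression, e.g.\ a constant $\tilde{\theta}$ (or any $\tilde{\theta}$ independent of $(\theta,H_T)$). Such a $\tilde{\theta}$ trivially satisfies $\tilde{\theta}\perp H_T\mid\theta$ and has $\I(H_T;\theta\mid\tilde{\theta})=\I(H_T;\theta)=T\Lc_T\leq T\epsilon$, so $\tilde{\theta}\in\tilde{\Theta}_{\epsilon,T}$, while $\I(\theta;\tilde{\theta})=0$. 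Hence $\H_{\epsilon,T}(\theta)=0$ in this regime, so $\min\{0,\epsilon\}=0\leq \Lc_T$. Taking supremum over $\epsilon\geq 0$ yields the left-hand inequality.

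The argument is largely a bookkeeping exercise in the chain rule and DPI; the one subtle step is recognizing that the constraint $\tilde{\theta}\perp H_T\mid\theta$ is simultaneously what kills $\I(H_T;\tilde{\theta}\mid\theta)$ (so we can insert $\tilde{\theta}$ into the mutual information for free) and what supplies the Markov chain needed to downgrade $\I(H_T;\tilde{\theta})$ to $\I(\theta;\tilde{\theta})$. The case split in the lower bound, although trivial once seen, is the one piece of care required; it is what lets the supremum over $\epsilon$ handle regimes where the rate-distortion function has not yet dropped to zero.
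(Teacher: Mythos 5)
Your upper-bound argument is essentially the paper's argument in streamlined form: the paper writes $\I(H_T;\theta)$ as the telescoped sum $\sum_t \I(X_{t+1};\theta|H_t)$, inserts $\tilde{\theta}$ term-by-term, applies DPI to the aggregated $\I(H_T;\tilde{\theta})$, and invokes the constraint on $\sum_t\I(X_{t+1};\theta|\tilde{\theta},H_t)$; you do the identical manipulation in the unexpanded form $\I(H_T;\theta,\tilde{\theta})=\I(H_T;\tilde{\theta})+\I(H_T;\theta|\tilde{\theta})$. Same content, same DPI, same constraint.

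Your lower bound takes a genuinely different and, I'd argue, cleaner route. The paper assumes $\I(H_T;\theta)<\H_{\epsilon,T}(\theta)$ and exhibits the ``resampled history'' $\tilde{\theta}=\tilde{H}_T$ (an independent copy of $H_T$ given $\theta$). Because $(\theta,\tilde{H}_T)\eqdist(\theta,H_T)$, this witness has rate $\I(\theta;\tilde{H}_T)=\I(\theta;H_T)<\H_{\epsilon,T}(\theta)$ and therefore cannot lie in $\tilde{\Theta}_{\epsilon,T}$; since it does satisfy $\tilde{\theta}\perp H_T|\theta$, it must violate the distortion constraint, and a conditioning-reduces-entropy step then gives $\I(H_T;\theta)\geq \I(H_T;\theta|\tilde{H}_T)>\epsilon T$. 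You instead split on whether $\Lc_T\geq\epsilon$ or $\Lc_T<\epsilon$, and in the second case use the trivial constant (or fully independent) compression to show $\H_{\epsilon,T}(\theta)=0$. Your choice avoids introducing the auxiliary independent history and its joint-distribution bookkeeping, and makes transparent that the lower bound has no content outside the regime $\epsilon\leq\Lc_T$, where it reduces to the tautology $\min\{\cdot,\epsilon\}\leq\epsilon\leq\Lc_T$. The paper's witness is perhaps more illustrative of how one might actually construct near-optimal compressions (it survives unchanged into the meta-learning analogue, Theorem~\ref{th:meta_bayes_rd}), but for the purposes of this theorem your constant-witness case split is simpler and fully correct. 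One tiny point worth keeping as you wrote it: you correctly note that $\tilde\theta\perp(\theta,H_T)$ implies both $\tilde\theta\perp H_T|\theta$ and $\I(H_T;\theta|\tilde\theta)=\I(H_T;\theta)$, which together with nonnegativity of $\Lc_T$ closes the second case.
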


An interpretation of the above result is that the Bayesian posterior implicitly finds the compression $\tilde{\theta}$ that optimally trades off learning complexity $\I(\theta;\tilde{\theta})$ and distortion $\I(H_T;\theta|\tilde{\theta})$.  While these results are very general, they remain abstract.  In Appendix \ref{apdx:log_reg} we provide a simple logistic regression example.  In the main text, we provide an analysis for learning from a sequence generated by a deep transformer model.

\subsection{Deep Transformer}\label{subsec:transformer}

In the transformer environment, we let $(X_1, X_2, \ldots)$ be a sequence in $\{1, \ldots, d\}$, where $d$ denotes the size of the vocabulary.  Each of the $d$ outcomes is associated with a \emph{known} embedding vector which we denote as $\Phi_j$ for $j \in \{1,\ldots, d\}$.  We assume that for all $j$, $\|\Phi_j\|_2 = 1$.  For brevity of notation, we let $\phi_t = \Phi_{X_t}$ i.e. the embedding associated with token $X_t$.

Let $K$ denote the context length of the transformer, $L$ denote it's depth, and $r$ denote the attention dimension.  
We assume that the first token $X_1$ is sampled from an arbitrary pmf on $\{1, \ldots, d\}$ but subsequent tokens are sampled based on the previous $K$ tokens within the context window and the weights of a depth $L$ transformer model.

We use $U_{t,i}$ to denote the output of layer $i$ at time $t$ $(U_{t,0} = \phi_{t-K+1:t})$ (the embeddings associated with the past $K$ tokens).  For all $t \leq T, i < L$, let

$$\text{Attn}_i(U_{t,i-1}) = \sigma\left( \frac{U^{\top}_{t,i-1} A_iU_{t,i-1}}{\sqrt{r}}\right)$$
denote the attention matrix of layer $i$ where $\sigma$ denotes the softmax function applied elementwise along the columns.  The matrix $A_i \in \Re^{r\times r}$ can be interpreted as the product of the key and query matrices and without loss of generality, we assume that the elements of the matrices $A_i$ are distributed iid $\normal(0,1)$ (Gaussian assumption is not crucial but known mean and unit variance is).

Subsequently, we let
$$U_{t,i} = \text{Clip}\left(V_i U_{t,i-1} \text{Attn}_i(U_{t,i-1})\right),$$
where Clip ensures that each column of the matrix input has $L2$ norm at most $1$.  The matrix $V_i$ resembles the value matrix and without loss of generality, we assume that the elements of $V_i$ are distributed iid $\normal(0,1/d)$ (same generality conditions as above).

Finally, the next token is generated via sampling from the softmax of the final layer:
$$X_{t+1} \sim \sigma\left(U_{t,L}[-1]\right),$$
where $U_{t,L}[-1]$ denotes the right-most column of $U_{t,L}$.  At each layer $i$, the parameters $\theta_i$ consist of the matrices $A_i, V_i$.  We will use the notation $\theta_{i:j}$ for $i \leq j$ to denote the collection $(\theta_i, \theta_{i+1},\ldots, \theta_j)$.

\begin{restatable}{theorem}{tsfmRd}{\bf (transformer estimation error bound)}\label{th:tsfmRd}
    For all $d,r,L,K$, if $\theta_{1:L}$ is the transformer environment, then
    \begin{align*}
        \Lc_{T}
        & \leq \frac{(d^2+r^2)L^2\log(4K^2)}{T} + \frac{(d^2+r^2)L\log\left(\frac{2KT^2}{L}\right)}{2T}.\\
    \end{align*}
\end{restatable}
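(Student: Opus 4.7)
The plan is to instantiate Theorem~\ref{th:bayes_rd} with an explicit compression $\tilde\theta$ obtained by quantizing the weight matrices.  For each layer $i\in\{1,\ldots,L\}$, truncate each entry of $A_i\in\R^{r\times r}$ and of $V_i$ to a bounded window $[-R,R]$, with $R$ of order $\sqrt{\log(TL(d^2+r^2))}$ so that the truncation event has probability $o(1/T)$ by a Gaussian union bound, and then round each surviving entry to the nearest multiple of a resolution $\delta>0$ to be optimized.  Let $\tilde\theta=(\tilde A_{1:L},\tilde V_{1:L})$.  Because $\tilde\theta$ is a deterministic function of $\theta$, the conditional independence requirement $\tilde\theta\perp H_T\mid\theta$ appearing in the definition of $\tilde\Theta_{\epsilon,T}$ is automatic.

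The rate of this compression is $\I(\theta;\tilde\theta)=\H(\tilde\theta)$ since $\tilde\theta$ is a deterministic function of $\theta$.  Each of the roughly $L(d^2+r^2)$ quantized entries takes at most $2R/\delta+1$ values, so $\H(\tilde\theta)\leq L(d^2+r^2)\log(CR/\delta)$ for an absolute constant $C$; this will produce the second term $(d^2+r^2)L\log(2KT^2/L)/(2T)$ after the choice of $\delta$ below.

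For the distortion, Lemma~\ref{le:bayes_optimal_seq} combined with the assumption $\tilde\theta\perp X_{t+1}\mid(\theta,H_t)$ gives
\begin{align*}
\I(X_{t+1};\theta\mid\tilde\theta,H_t)\ \leq\ \E[\KL(\Pr(X_{t+1}\in\cdot\mid\theta,H_t)\,\|\,Q(X_{t+1}\in\cdot\mid\tilde\theta,H_t))],
\end{align*}
where $Q(\cdot\mid\tilde\theta,H_t)$ is the transformer's own softmax output when run with the quantized weights $\tilde\theta$.  Both predictions are softmaxes of the final-layer rightmost column, $\sigma(U_{t,L}[-1])$ and $\sigma(\tilde U_{t,L}[-1])$, and the elementary inequality $\KL(\sigma(u)\|\sigma(v))\leq 2\|u-v\|_\infty$ reduces the task to controlling $\|\tilde U_{t,L}[-1]-U_{t,L}[-1]\|_\infty$.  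I would then induct on the layer index using the Clip nonlinearity: by construction every column of $U_{t,i}$ and $\tilde U_{t,i}$ has $L_2$-norm at most $1$, which bounds the Lipschitz constant of each layer in $(A_i,V_i)$.  Propagating the entrywise perturbation $\delta$ through the attention logits (which pick up a factor of order $K/\sqrt r$), through the softmax (Lipschitz in $L_\infty$), and through the subsequent value multiplication and clipping then yields an additive per-layer contribution proportional to $\delta$ times factors polynomial in $K$, $R$, and $\log K$; summing these contributions across layers produces the per-step distortion.

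The main obstacle is this inductive Lipschitz analysis across $L$ transformer layers.  Without the clipping nonlinearity, perturbations of $(A_i,V_i)$ could compound multiplicatively and produce an exponential-in-$L$ bound; the clipping forces uniform boundedness of all intermediate activations so that each layer contributes only an additive $O(\delta)$ error up to a $\mathrm{poly}(K,\log K)$ factor from the attention mechanism.  Once the propagation bound is in hand, the theorem follows by plugging the resulting per-step distortion $\epsilon$ and the rate $\H(\tilde\theta)$ into Theorem~\ref{th:bayes_rd} and choosing $\delta$ (together with $R$) to balance the two terms; the specific $L^2\log(4K^2)$ factor in the first term and $\log(2KT^2/L)/2$ factor in the second emerge from the detailed bookkeeping of how perturbations travel through attention and softmax at each of the $L$ layers and from the Gaussian tail bound used to pick $R$.
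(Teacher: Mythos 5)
Your high-level route matches the paper: instantiate Theorem~\ref{th:bayes_rd} with an explicit compression of the transformer weights, bound the rate by the compression's entropy (or mutual information), and bound the distortion by a Lipschitz analysis of how weight perturbations travel through the network. The compression mechanism you chose (deterministic truncation plus uniform quantization) differs from the paper's (independent additive Gaussian noise on each entry of $A_i$ and $V_i$, with the rate computed through differential entropy of the Gaussian channel), but both are standard and would produce essentially the same rate term of order $L(d^2+r^2)\log(\cdot)$, so that difference alone is fine.

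The genuine gap is in the distortion step. You claim that clipping ``forces uniform boundedness of all intermediate activations so that each layer contributes only an additive $O(\delta)$ error,'' i.e.\ that the per-layer contributions sum rather than compound, which you suggest is how the paper avoids an exponential-in-$L$ bound. That is not what happens. Clipping guarantees each column of $U_{t,i}$ has $L_2$ norm at most $1$, which is what makes the per-layer Lipschitz constant \emph{finite and input-independent}, but that constant is $\sqrt{2(K+K^2)}>1$ (Lemma~\ref{le:transformer_lipschitz}), so a perturbation introduced at layer $i$ is still amplified by a factor of order $(2K+2K^2)^{(L-i)/2}$ on its way to the output; summing over $i$ gives a distortion proportional to $\delta\cdot(2K+2K^2)^{L}$ (see Lemma~\ref{le:seq_tsfm_dist}). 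The resolution is not that this exponential disappears, but that it enters only inside the logarithm of the rate term: to reach per-step distortion $\epsilon$ you must take $\delta\propto\epsilon/(2K+2K^2)^{L}$, and $\log(1/\delta)$ then contributes the extra factor of $L\log(2K+2K^2)\le L\log(4K^2)$, which multiplied by the outer $L(d^2+r^2)$ from the parameter count produces the $(d^2+r^2)L^2\log(4K^2)/T$ term. Your argument as written, with additive error propagation, would instead give a rate of only $L(d^2+r^2)\log(\mathrm{poly}(K))$, i.e.\ linear rather than quadratic in $L$, which does not match the theorem and cannot be obtained. You need to keep the exponential in the distortion and let the logarithm in the rate absorb it.
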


We note that even if the sequence generated by the transformer is not iid, we observe that $\Lc_{T}$ decays linearly in $T$, the length of the sequence.  Furthermore, we observe that $\Lc_{T}$ is upper bounded linearly in the product of parameter count and depth of the transformer model as in \citet{bai2023transformers}.  In the following section, we will draw the connection to ICL by studying meta-learning in a data generating process which resembles a sparse \emph{mixture} of deep transformers.

\section{Meta-Learning from Sequential Data}
In this section, we analyze the achievable performance of \emph{meta-learning} from sequences.  The tools of the Bayesian framework apply exactly as they do in standard supervised learning from sequences.  An example of meta-learning from sequences includes language model pretraining in which each ``meta-task'' can be interpreted as a separate document and the ``sequence'' as the tokens which comprise the document.  We will use the terminology \emph{document} going forward to refer to a ``meta-task'' in meta-learning.

\subsection{Data Generating Process}\label{subsec:datagen}


We now consider sequential data which resembles a \emph{corpus} of text documents.  We assume that all documents in the corpus have an identical length wich we denote by $T$.  For each document $m$, we let $D_m = X^{(m)}_1, \ldots, X^{(m)}_T$ be the sequence of discrete random variables which resembles its constituent tokens.

Each document is associated with a random variable $\theta_m$ which encodes information that is specific to document $m$.  As in the previous section, we assume that the sequence $D_m$ is produced by an autoregressive process.  As such, for all $t$, the value of $X^{(m)}_{t+1}$ depends on $\theta_m$ and the prior tokens $(X^{(m)}_1, \ldots, X^{(m)}_t)$ in $D_m$.

Finally, we assume that there exists a random variable $\psi$ such that conditioned on $\psi$, $(\theta_1, \theta_2, \ldots)$ is an iid sequence.  Note that $\psi$ encodes information which learnable \emph{across} documents in a corpus.  As such, $\psi$ represent the \emph{meta} parameters while $(\theta_1, \theta_2, \ldots)$ represent the \emph{intra-task} parameters.  Two natural conditional independence results follow from our formulation.  $1)$ for all $m$, $D_m \perp \psi|\theta_m$; the meta parameters do not contain information about $D_m$ beyond what is contained in $\theta_m$.  $2)$ $X^{(m)}_{t} \perp X^{(n)}_{t}|\psi$ for all $m \neq n$; tokens \emph{across} documents do not contain information about each other beyond what is contained in $\psi$.

\subsection{Bayesian Error}
Our framework is {\it Bayesian} in the sense that it treats learning as the process of reducing uncertainty about $\theta_1, \ldots, \theta_m, \psi$, which are taken to be random variables.  For a meta-learning problem with $M$ documents each of length $T$, a learning algorithm produces, for each $(m,t) \in [M]\times[T]$, a \emph{predictive distribution} $P_{m,t}$ of $X^{(m)}_{t+1}$ after observing the concatenated history which we denote by
$$H_{m,t} = (D_1, D_2, \ldots, D_{m-1}, X^{(m)}_1, \ldots, X^{(m)}_t).$$
$H_{m,t}$ consists of \emph{all} tokens from documents $1, \ldots, m-1$ and up to the $t$th token of document $m$.  We express our meta-learning algorithm in terms of a function $\pi$ for which $P_{m,t} = \pi(H_{m,t})$.  For all $M, T\in\Z_{++}$, we quantify the error realized by predictions $P_{m,t}$ for $(m,t) \in [M]\times[T]$ in terms of the average cumulative expected log-loss:
$$\L_{M,T,\pi} = \frac{1}{MT}\sum_{m=1}^{M}\sum_{t=0}^{T-1}\ \E_\pi\left[-\ln P_{m,t}\left(X^{(m)}_{t+1}\right)\right].$$
We note that this objective largely resembles the objective LLMs minimize in the process of pre-training.

\subsection{Achievable Bayesian Error}

We are in particular interested in the algorithm $\pi$ which minimizes Bayesian error. Just as in supervised learning from sequences, across all problem instances, the optimal algorithm $\pi$ sets $P_{m,t} = \Pr(X^{(m)}_{t+1}\in\cdot|H_{m,t})$ for all $m,t$. We denote this \emph{Bayesian posterior} by $\hat{P}_{m,t}$.

\begin{restatable}{lemma}{metaBayesOpt}{\bf (Bayesian posterior is optimal)}\label{le:meta_bayes_optimal_seq}
    For all $m,t\in\Z_{+}$,
    \begin{align*}
        &\ \E\left[-\ln\hat{P}_{m,t}\left(X^{(m)}_{t+1}\right)\big|H_{m,t}\right]\\
        & \overset{a.s.}{=}\ \min_{\pi}\ \E_\pi \left[-\ln P_{m,t}\left(X_{t+1}^{(m)}\right)|H_{m,t}\right].
    \end{align*}
\end{restatable}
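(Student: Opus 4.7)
The plan is to mirror the proof of Lemma~\ref{le:bayes_optimal_seq} essentially verbatim, simply substituting the concatenated meta-learning history $H_{m,t}$ in place of $H_t$ and $X^{(m)}_{t+1}$ in place of $X_{t+1}$. The central identity is the familiar decomposition of conditional cross-entropy into conditional entropy plus a KL divergence, which does not rely on anything specific to the structure of a single sequence.

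Concretely, I would first add and subtract $\ln \hat{P}_{m,t}(X^{(m)}_{t+1})$ inside the conditional expectation to obtain, almost surely,
\begin{align*}
& \E_\pi\!\left[-\ln P_{m,t}\!\left(X^{(m)}_{t+1}\right)\,\big|\,H_{m,t}\right]\\
& = \E\!\left[-\ln \hat{P}_{m,t}\!\left(X^{(m)}_{t+1}\right)\,\big|\,H_{m,t}\right] + \KL\!\left(\hat{P}_{m,t}\,\|\,P_{m,t}\right).
\end{align*}
The key observation that makes this decomposition well-defined is that both $\hat{P}_{m,t}$ and $P_{m,t}$ are measurable with respect to $\sigma(H_{m,t})$: the former by construction as the Bayesian posterior $\Pr(X^{(m)}_{t+1}\in\cdot\,|\,H_{m,t})$, and the latter because any candidate algorithm $\pi$ satisfies $P_{m,t}=\pi(H_{m,t})$. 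Hence, conditionally on $H_{m,t}$, both predictive distributions act as deterministic pmfs, and the equation above reduces to a pointwise application of the cross-entropy identity.

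The conclusion then follows immediately from Gibbs' inequality: $\KL(\hat{P}_{m,t}\,\|\,P_{m,t})\ge 0$ almost surely, with equality iff $P_{m,t}=\hat{P}_{m,t}$. Taking an infimum over $\pi$ in the decomposition therefore picks out the Bayesian posterior as the unique (up to a.s.\ equivalence) minimizer.

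There is essentially no substantive obstacle beyond a minor measurability bookkeeping issue, namely the need for a regular version of $\Pr(X^{(m)}_{t+1}\in\cdot\,|\,H_{m,t})$ so that $\hat{P}_{m,t}$ can be manipulated as a genuine probability distribution for each value of $H_{m,t}$. Since $X^{(m)}_{t+1}$ is discrete (hence takes values in a standard Borel space), such a regular conditional distribution exists and the argument proceeds without subtlety. In particular, no new idea beyond Lemma~\ref{le:bayes_optimal_seq} is required: the meta-learning history $H_{m,t}$ plays precisely the role that $H_t$ played earlier, and the fact that it spans multiple documents and incorporates the latent variables $\theta_1,\ldots,\theta_m,\psi$ is immaterial to the pointwise optimality argument.
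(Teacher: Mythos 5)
Your proposal matches the paper's proof exactly: add and subtract $\ln\hat{P}_{m,t}$ inside the conditional expectation, identify the residual term as $\KL(\hat{P}_{m,t}\,\|\,P_{m,t})$, and invoke non-negativity of KL divergence (with equality iff $P_{m,t}=\hat{P}_{m,t}$). The extra remarks on measurability of $\hat{P}_{m,t}$ and $P_{m,t}$ with respect to $\sigma(H_{m,t})$ and existence of a regular conditional distribution are correct but are left implicit in the paper; the substance of the argument is identical.
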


We use $\L_{M,T}$ to denote the \emph{optimal} achievable Bayesian error:
\begin{align*}
    \L_{M,T}
    & = \frac{1}{MT}\sum_{m=1}^{M}\sum_{t=0}^{T-1}\ \E\left[- \ln\hat{P}_{m,t}\left(X^{(m)}_{t+1}\right)\right].
\end{align*}

We will restrict our attention to the performance of the optimal predictor $\hat{P}_t$.  We now present the main result of this paper which decomposes optimal Bayesian error into 3 intuitive terms.  The following result provides an exact characterization of $\L_{M,T}$.
\begin{restatable}{theorem}{metaSeqBayesError}{\bf (Main Result)}\label{th:main_result}
    For all $M,T \in \Z_{+}$ and $m \in \{1, 2, \ldots, M\}$,
    \begin{align*}
        \L_{M,T}
        & = \underbrace{\frac{\H(H_{M,T}|\theta_{1:M})}{MT}}_{\substack{\text{irreducible}\\ \text{error}}} + \underbrace{\frac{\I(H_{M,T};\psi)}{MT}}_{\substack{\text{meta}\\ \text{estimation}\\ \text{error}}}\\
        &\quad + \underbrace{\frac{\I(D_m;\theta_m|\psi)}{T}}_{\substack{\text{intra-document}\\ \text{estimation}\\ \text{error}}}.
    \end{align*}
\end{restatable}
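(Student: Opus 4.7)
The plan is to mimic the proof of Theorem \ref{th:BayesError} applied to the whole meta-learning trajectory, then split the resulting estimation error into a ``meta'' piece and an ``intra-task'' piece using the conditional independence structure specified in Section \ref{subsec:datagen}.

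First I would repeat the key algebraic step from the proof of Theorem \ref{th:BayesError}, writing $-\ln \hat{P}_{m,t}(X^{(m)}_{t+1}) = -\ln \Pr(X^{(m)}_{t+1}\mid H_{m,t},\psi,\theta_{1:M}) + \ln\frac{\Pr(X^{(m)}_{t+1}\mid H_{m,t},\psi,\theta_{1:M})}{\hat{P}_{m,t}(X^{(m)}_{t+1})}$, taking expectations, and summing. Summed over $m$ and $t$, the first term collapses (by the chain rule of entropy) to $\H(H_{M,T}\mid \psi,\theta_{1:M})$, and since $D_m\perp \psi\mid\theta_m$ by construction, this equals the irreducible term $\H(H_{M,T}\mid\theta_{1:M})$. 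The second term is an expected KL, which by the same manipulation as in Theorem \ref{th:BayesError} equals $\sum_{m,t}\I(X^{(m)}_{t+1};\psi,\theta_{1:M}\mid H_{m,t})$ and telescopes, via the chain rule of conditional mutual information, to the joint estimation error $\I(H_{M,T};\psi,\theta_{1:M})$. At this stage I would have
\begin{equation*}
\L_{M,T} = \frac{\H(H_{M,T}\mid \theta_{1:M})}{MT} + \frac{\I(H_{M,T};\psi,\theta_{1:M})}{MT}.
\end{equation*}

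Next I would peel off $\psi$ via the chain rule: $\I(H_{M,T};\psi,\theta_{1:M}) = \I(H_{M,T};\psi) + \I(H_{M,T};\theta_{1:M}\mid\psi)$. The first summand is exactly the meta-estimation term; only the conditional mutual information $\I(H_{M,T};\theta_{1:M}\mid\psi)$ remains to be simplified. Here I would invoke the conditional iid assumption: given $\psi$, the pairs $(D_m,\theta_m)$ are iid across $m$. This gives both $\H(D_{1:M}\mid\psi) = \sum_m \H(D_m\mid\psi)$ and $\H(D_{1:M}\mid\theta_{1:M},\psi) = \sum_m \H(D_m\mid\theta_m,\psi)$ (the latter because under the conditional iid structure, $D_m$ depends on $(\theta_{1:M},\psi)$ only through $(\theta_m,\psi)$). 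Subtracting yields $\I(H_{M,T};\theta_{1:M}\mid\psi) = \sum_{m=1}^{M}\I(D_m;\theta_m\mid\psi) = M\cdot \I(D_m;\theta_m\mid\psi)$, where the last equality uses that the marginals of $(D_m,\theta_m)$ given $\psi$ are identical across $m$. Dividing by $MT$ produces the $\frac{\I(D_m;\theta_m\mid\psi)}{T}$ term.

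The main obstacle, and the only subtle point, is justifying the two conditional-independence identities cleanly: that $\H(H_{M,T}\mid\theta_{1:M},\psi) = \H(H_{M,T}\mid\theta_{1:M})$ (which requires $D_{1:M}\perp\psi\mid\theta_{1:M}$, extended from the single-document statement $D_m\perp\psi\mid\theta_m$ by using that documents are conditionally independent given $\psi$), and the additive decomposition of $\I(H_{M,T};\theta_{1:M}\mid\psi)$ into a sum over $m$. Once those two conditional-independence reductions are in hand, the remainder is just combining terms. The telescoping chain-rule step is routine but worth writing carefully because the history is indexed by the lexicographic order on $(m,t)$ and $H_{m+1,0} = H_{m,T}$, so that the sum really does collapse to a single joint mutual information between $H_{M,T}$ and the full parameter $(\psi,\theta_{1:M})$.
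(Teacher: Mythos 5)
Your proposal is correct and follows essentially the same approach as the paper: apply the single-sequence Bayesian error decomposition pointwise, telescope via the chain rule, and then separate $\psi$ from the document-level parameters using the conditional iid structure. The only difference is cosmetic ordering: you telescope the full double sum to $\I(H_{M,T};\psi,\theta_{1:M})$ first and then split off $\psi$, whereas the paper telescopes within each document to get $\I(D_m;\psi,\theta_m\mid H_{m-1,T})$, splits that into meta and intra pieces, and then sums the meta pieces to recover $\I(H_{M,T};\psi)$. Both routes invoke the same two conditional-independence facts (tokens depend only on the own-document $\theta_m$ and history, and the pairs $(D_m,\theta_m)$ are iid given $\psi$), so the proofs are interchangeable.
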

\begin{proof}
    \begin{align*}
        & \L_{M,T}\\
        & = \frac{1}{MT}\sum_{m=1}^{M}\sum_{t=0}^{T-1}\ \E\left[- \ln\hat{P}_{m,t}(X^{(m)}_{t+1})\right]\\
        & \overset{(a)}{=} \frac{1}{MT}\sum_{m=1}^{M}\sum_{t=0}^{T-1}\ \H(X_{t+1}^{(m)}|\theta_m, H_{m,t})\\
        &\quad + \frac{1}{MT}\sum_{m=1}^{M}\sum_{t=0}^{T-1}\ \I(X_{t+1}^{(m)}; \psi, \theta_m|H_{m,t})\\
        & \overset{(b)}{=} \frac{1}{MT}\sum_{m=1}^{M}\H(D_m|\theta_m,H_{m-1,T})\\
        &\quad + \frac{1}{MT}\sum_{m=1}^{M}\I(D_m;\psi, \theta_m|H_{m-1,T})\\
        & \overset{(c)}{=} \frac{\H(H_{M,T}|\theta_{1:M})}{MT} +\frac{1}{MT}\sum_{m=1}^{M}\I(D_m;\psi|H_{m-1,T})\\
        &\quad +\frac{1}{MT}\sum_{m=1}^{M}\I(D_m;\theta_m|\psi, H_{m-1,T})\\
        & \overset{(d)}{=} \frac{\H(H_{M,T}|\theta_{1:M})}{MT} + \frac{\I(H_{M,T};\psi)}{MT} + \frac{\I(D_m;\theta_m|\psi)}{T},
    \end{align*}
    where $(a)$ follows from Theorem \ref{th:BayesError}, and $(b), (c), (d)$ follow from the chain rule of mutual information.
\end{proof}

The irreducible error represents the Bayesian error incurred by even the omniscent predictor $\Pr(X^{(m)}_{t+1}\in\cdot|\theta_m, H_{m,t})$ which conditions on document-specific information $\theta_m$ and the document history $H_{m,t}$.

The meta-estimation error represents the statistical error incurred in the process of estimating the meta parameters $\psi$.  Since all tokens across all documents contain information about $\psi$, it is intuitive that meta-estimation error term decays linearly in $MT$.  Since $M$ could in practice be very large (for example in a pretraining dataset), $\L_{M,T}$ could be small even for small $T$ if significant learning complexity is contained in $\psi$.

Finally, the intra-document estimation error represents the statistical error incurred in the process of learning $\theta_m$ after already conditioning on $\psi$.  As only the data from document $m$ $(D_m)$ pertains to $\theta_m$, this error intuitively decays linearly in $T$, the length of the document.  As mentioned before, if much of the learning complexity is contained in $\psi$, then $\I(H^{(1)}_T;\theta_m|\psi)$ will be small and therefore the intra-document estimation error may be small even for short document length $T$.  We will revisit this idea in section \ref{sec:in-context} when we analyze ICL within this framework.

Our subsequent analysis will focus on estimation error as it represents error which is reducible via learning.  In meta-learning, the total estimation error is:
$$\Lc_{M,T} = \frac{\I(H_{M,T};\psi)}{MT} + \frac{\I(D_m;\theta_m|\psi)}{T},$$
i.e. the sum of meta and intra-document estimation errors.

We note that Theorem \ref{th:main_result} holds for all data generating processes which meet the natural  assumptions made in subsection \ref{subsec:datagen}.  It is surprising that we can arrive at such a result which decays linearly in both $M$, the number of documents, and $T$, the lengths of the documents without any explicit reliance on stability or mixing assumptions. 

While the main result is useful for conceptual understanding, we need further tools to facilitate the theoretical analysis of concrete meta-learning problem instances.  To extend this result, we again use rate-distortion theory under the following modified rate-distortion functions:
\begin{align*}
    \H_{\epsilon,T}(\theta_m|\psi)
    & = \inf_{\tilde{\theta}_m\in\tilde{\Theta}_{\epsilon,T}}\ \I(\theta_m;\tilde{\theta}_m|\psi),
\end{align*}
where

\begin{align*}
    \tilde{\Theta}_{\epsilon, T}
    & = \left\{\tilde{\theta}: \tilde{\theta}\perp H_{M,T}|\theta_m;\quad \frac{\I(D_m;\theta_m|\tilde{\theta},\psi)}{T} \leq \epsilon \right\},
\end{align*}
and
$$\H_{\epsilon,M,T}(\psi) = \inf_{\tilde{\psi}\in\tilde{\Psi}_{\epsilon,M,T}}\ \I(\psi;\tilde{\psi}),$$
where
$$\tilde{\Psi}_{\epsilon, M, T} = \left\{\tilde{\psi}: \tilde{\psi}\perp H_{M,T}|\psi;\quad \frac{\I(H_{M,T};\psi|\tilde{\psi})}{MT} \leq \epsilon \right\}.$$

With this notation in place, we establish the following upper and lower bounds on $\Lc_{M,T}$ in terms of the above rate distortion functions.

\begin{restatable}{theorem}{metaRD}{\bf (rate-distortion estimation error bound)}\label{th:meta_bayes_rd}
    For all $M, T \in \Z_{+}$, and $m \in \{1, \ldots, M\}$,
    \begin{align*}
        \Lc_{M,T}
        & \leq \inf_{\epsilon\geq 0}\ \frac{\H_{\epsilon,M,T}(\psi)}{MT} + \epsilon + \inf_{\epsilon' \geq 0}\ \frac{\H_{\epsilon', T}(\theta_m|\psi)}{T} + \epsilon',
    \end{align*}
    and
    \begin{align*}
        \Lc_{M,T}
        & \geq \sup_{\epsilon\geq 0}\min\left\{\frac{\H_{\epsilon,M,T}(\psi)}{MT}, \epsilon\right\}\\
        &\quad + \sup_{\epsilon' \geq 0}\min\left\{ \frac{\H_{\epsilon', T}(\theta_m|\psi)}{T}, \epsilon'\right\}.
    \end{align*}
\end{restatable}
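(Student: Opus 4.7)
My plan is to invoke Theorem \ref{th:main_result} to decompose $\Lc_{M,T}$ into the meta estimation error $\I(H_{M,T};\psi)/(MT)$ and the intra-document estimation error $\I(D_m;\theta_m|\psi)/T$, and then bound each summand separately by a rate-distortion argument that mirrors the proof of Theorem \ref{th:bayes_rd} in the single-sequence case.

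For the upper bound on the meta term, I would fix any $\tilde{\psi}\in\tilde{\Psi}_{\epsilon,M,T}$ and compute
\begin{align*}
    \I(H_{M,T};\psi)
    & \leq \I(H_{M,T};\psi,\tilde{\psi})\\
    & = \I(H_{M,T};\tilde{\psi}) + \I(H_{M,T};\psi|\tilde{\psi})\\
    & \leq \I(\psi;\tilde{\psi}) + MT\epsilon,
\end{align*}
using the data-processing inequality on the Markov chain $\tilde{\psi}-\psi-H_{M,T}$ (immediate from $\tilde{\psi}\perp H_{M,T}|\psi$) and the distortion constraint built into $\tilde{\Psi}_{\epsilon,M,T}$. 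Taking the infimum over feasible $\tilde{\psi}$ and then over $\epsilon$ yields the $\H_{\epsilon,M,T}(\psi)/(MT)+\epsilon$ contribution. The intra term is handled by the identical chain-rule manipulation with everything conditioned on $\psi$: for $\tilde{\theta}_m\in\tilde{\Theta}_{\epsilon',T}$, expanding $\I(D_m;\theta_m|\psi)\leq \I(D_m;\theta_m,\tilde{\theta}_m|\psi)=\I(D_m;\tilde{\theta}_m|\psi)+\I(D_m;\theta_m|\psi,\tilde{\theta}_m)$ and applying a conditional DPI on the first summand and the distortion bound on the second gives $\I(D_m;\theta_m|\psi)\leq \I(\theta_m;\tilde{\theta}_m|\psi)+T\epsilon'$.

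For the lower bound, I would argue case-by-case on $\epsilon$. If $\I(H_{M,T};\psi)/(MT)\geq\epsilon$, then the inner minimum is at most $\epsilon$ and the bound is trivial. Otherwise, a $\tilde{\psi}$ drawn independently of $(\psi,H_{M,T})$ lies in $\tilde{\Psi}_{\epsilon,M,T}$ (the independence condition is automatic and the distortion equals $\I(H_{M,T};\psi)/(MT)<\epsilon$) with rate zero, so $\H_{\epsilon,M,T}(\psi)=0$ and the inner minimum is at most $\I(H_{M,T};\psi)/(MT)$. Taking the supremum over $\epsilon$ delivers the meta half of the lower bound, and the same argument applied conditionally on $\psi$ gives the intra half. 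Summing yields the claim.

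I expect the main obstacle to be justifying the conditional DPI in the intra upper bound, namely $\I(D_m;\tilde{\theta}_m|\psi)\leq \I(\theta_m;\tilde{\theta}_m|\psi)$, which requires the Markov chain $\tilde{\theta}_m-(\theta_m,\psi)-D_m$. The definition of $\tilde{\Theta}_{\epsilon,T}$ supplies only $\tilde{\theta}_m\perp H_{M,T}|\theta_m$, which yields $\tilde{\theta}_m\perp D_m|\theta_m$ but not the corresponding conditional independence once $\psi$ is added to the conditioning set. Closing this gap will require combining the compression assumption with the data-generating property $D_m\perp\psi|\theta_m$ from Section \ref{subsec:datagen}, effectively treating $\tilde{\theta}_m$ as a compression of $\theta_m$ alone so that $\tilde{\theta}_m\perp(H_{M,T},\psi)|\theta_m$; verifying this cleanly is the main subtlety of the argument.
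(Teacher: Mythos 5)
Your proof follows the same strategy as the paper's: decompose $\Lc_{M,T}$ via Theorem~\ref{th:main_result}, then for each of the two estimation-error terms run the rate-distortion argument (chain rule with an inserted compression, DPI on the rate term, feasibility on the distortion term) for the upper bound and a feasible/infeasible case split for the lower bound; the only cosmetic difference is that in the lower bound you exhibit a trivially independent $\tilde\psi$, whereas the paper uses an independent copy $\tilde H_{M,T}$ of the history, but the two constructions serve the same purpose. The conditional-DPI subtlety you flag is real and applies verbatim to the paper's own step~$(b)$: to get $\I(D_m;\tilde\theta_m|\psi)\le\I(\theta_m;\tilde\theta_m|\psi)$ one needs $\tilde\theta_m\perp D_m\mid(\theta_m,\psi)$, yet the stated membership condition for $\tilde\Theta_{\epsilon,T}$ only gives $\tilde\theta_m\perp H_{M,T}\mid\theta_m$, hence $\tilde\theta_m\perp D_m\mid\theta_m$ but not the version conditioned also on $\psi$. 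The fix you sketch---strengthen the compression constraint to $\tilde\theta_m\perp(H_{M,T},\psi)\mid\theta_m$, reflecting the intended reading that $\tilde\theta_m$ is $\theta_m$ corrupted by exogenous noise---is the right one, and it is automatically satisfied by every concrete compression actually constructed in the paper (all of the form $\tilde\theta_m=\theta_m+Z$ with $Z$ independent of everything else).
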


A direct consequence of Theorem \ref{th:meta_bayes_rd} is an upper bound on Bayesan error with respect to \emph{entropy} (by setting $\epsilon, \epsilon'$ to $0$).  While the utility of such a bound is limited to settings in which $\psi, \theta_{1:M}$ are discrete random variables, it may be useful to the reader conceptually.  The bound is captured in the following Corollary:
\begin{corollary}{\bf (entropy estimation error bound)}
    For all $M, T \in \Z_{+}$, and $m \in \{1,\ldots, M\}$
    $$\Lc_{M,T} \leq  \frac{\H(\psi)}{MT} + \frac{\H(\theta_m|\psi)}{T}.$$
\end{corollary}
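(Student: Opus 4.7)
The plan is to instantiate Theorem \ref{th:meta_bayes_rd} at the specific choice $\epsilon = \epsilon' = 0$ and then upper bound each of the two resulting rate-distortion functions by the corresponding entropy using the \emph{trivial} compression in which $\tilde{\psi} = \psi$ and $\tilde{\theta}_m = \theta_m$. The entire proof is a short verification that these trivial compressions lie in the feasibility sets $\tilde{\Psi}_{0,M,T}$ and $\tilde{\Theta}_{0,T}$ used in the definitions of $\H_{\epsilon,M,T}(\psi)$ and $\H_{\epsilon,T}(\theta_m|\psi)$.

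First, I would argue that $\tilde{\psi} = \psi$ lies in $\tilde{\Psi}_{0,M,T}$. Membership requires two conditions: the conditional independence $\tilde{\psi} \perp H_{M,T} \mid \psi$, which holds trivially when $\tilde{\psi}$ is a deterministic function of $\psi$, and the distortion bound $\I(H_{M,T};\psi \mid \tilde{\psi})/MT \leq 0$, which holds with equality since $\I(H_{M,T};\psi \mid \psi) = 0$. Hence $\H_{0,M,T}(\psi) \leq \I(\psi;\psi) = \H(\psi)$, where the last equality uses that $\psi$ is discrete (as emphasized just prior to the corollary). An identical argument, now taking $\tilde{\theta}_m = \theta_m$ inside $\tilde{\Theta}_{0,T}$, gives $\H_{0,T}(\theta_m \mid \psi) \leq \I(\theta_m;\theta_m \mid \psi) = \H(\theta_m \mid \psi)$.

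Combining these two bounds with the upper-bound half of Theorem \ref{th:meta_bayes_rd} evaluated at $\epsilon = \epsilon' = 0$ yields
\begin{align*}
\Lc_{M,T}
& \leq \frac{\H_{0,M,T}(\psi)}{MT} + 0 + \frac{\H_{0,T}(\theta_m \mid \psi)}{T} + 0\\
& \leq \frac{\H(\psi)}{MT} + \frac{\H(\theta_m \mid \psi)}{T},
\end{align*}
which is exactly the claimed bound. There is no real obstacle in this argument; the only point requiring care is verifying that the ``identity'' choice of compression satisfies the independence constraint in the definition of the feasibility sets, but this is immediate because a random variable is independent of anything when conditioned on itself. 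The reason this corollary is flagged as being ``limited'' in utility is precisely that the right-hand side is $+\infty$ when $\psi$ or $\theta_m$ is continuous, so one cannot avoid the rate-distortion route in those settings; for discrete $\psi$ and $\theta_m$, however, the bound above is both nontrivial and immediate.
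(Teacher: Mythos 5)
Your argument is correct and is exactly the route the paper indicates (set $\epsilon = \epsilon' = 0$ in Theorem \ref{th:meta_bayes_rd} and bound each rate-distortion function by entropy via the identity compression). The paper leaves these details implicit, but you have filled them in faithfully, including the verification that $\tilde\psi = \psi$ and $\tilde\theta_m = \theta_m$ satisfy the conditional-independence and zero-distortion constraints.
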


In the following section, we will apply Theorem \ref{th:meta_bayes_rd} to derive error bounds for a sparse mixture of (deep) transformers.  For a simpler linear representation learning example, we refer the reader to Appendix \ref{apdx:lin_rep}.  

\subsection{Sparse Mixture of Transformers}
In the sparse mixture of transformers environment, for all documents $m$, we let its tokens $(X^{(m)}_1, X^{(m)}_2, \ldots)$ be a sequence in $\{1, \ldots, d\}$, where $d$ denotes the size of the vocabulary.  Each of the $d$ outcomes is associated with a \emph{known} embedding vector which we denote as $\Phi_j$ for $j \in \{1,\ldots, d\}$.  We assume that for all $j$, $\|\Phi_j\|_2 = 1$.  For brevity of notation, we let $\phi_t^{(m)} = \Phi_{X^{(m)}_t}$ i.e. the embedding associated with token $X^{(m)}_t$.

Each document is generated by a transformer model which is sampled iid from a mixture.  We assume that sampling is performed according to a categorical distribution parameterized by $\psi$ with prior distribution $\Pr(\psi\in\cdot) = \text{Dirichlet}(N, [R/N, \ldots, R/N])$ for a scale parameter $R \ll N$.  Under this prior distribution, the expected number of unique outcomes grows linearly in $R$ and only logarithmically in the number of draws ($M$ in our case).  As a result, we permit the size of the mixture $N$ to potentially be exponentially large, but we assume that the mixture's complexity is controlled by the sparsity parameter $R$.

Each of the $N$ elements of the mixture corresponds to a deep transformer network as outlined in Section \ref{subsec:transformer}.  Let $K$ denote the context lengths of the transformers, $L$ denote their depths, and $r$ their attention dimensions.  
We assume that for all documents, the first token $X^{(m)}_1$ is sampled from an arbitrary pmf on $\{1, \ldots, d\}$ but subsequent tokens are sampled based on the previous $K$ tokens within the context window and the weights of the sampled transformer model.

The tokens of each document are generated according to the weights of the sampled transformer and the previous $K$ tokens.  The generation of token $X^{(m)}_{t+1}$ will depend on $\theta_m$ and $X^{(m)}_{t-K+1},\ldots, X^{(m)}_{t}$.  For all $m, t$, we let $(U^{(m)}_{t,0} = \phi_{t-K+1:t})$ refer to the embeddings associated with the past $K$ tokens.  For $i > 0$, we let $U^{(m)}_{t,i}$ denote the output of layer $i$ of the transformer with input $U^{(m)}_{t,0}$.  For all $t \leq T, i < L, m \leq M$, let

$$\text{Attn}_i(U^{(m)}_{t,i-1}) = \sigma\left( \frac{U^{(m)\top}_{t,i-1} A^{(m)}_iU^{(m)}_{t,i-1}}{\sqrt{r}}\right)$$
denote the attention matrix of layer $i$ for document $m$ where $\sigma$ denotes the softmax function applied elementwise along the columns.  The matrix $A^{(m)}_i \in \Re^{r\times r}$ can be interpreted as the product of the key and query matrices and without loss of generality, we assume that the elements of the matrices $A^{(m)}_i$ are distributed iid $\normal(0,1)$ (Gaussian assumption is not crucial but known mean and unit variance is).

Subsequently, we let
$$U^{(m)}_{t,i} = \text{Clip}\left(V^{(m)}_i U^{(m)}_{t,i-1} \text{Attn}_i(U^{(m)}_{t,i-1})\right),$$
where Clip ensures that each column of the matrix input has $L2$ norm at most $1$.  The matrix $V^{(m)}_i$ resembles the value matrix and without loss of generality, we assume that the elements of $V^{(m)}_i$ are distributed iid $\normal(0,1/d)$ (same generality conditions as above).

Finally, the next token is generated via sampling from the softmax of the final layer:
$$X^{(m)}_{t+1} \sim \sigma\left(U^{(m)}_{t,L}[-1]\right),$$
where $U^{(m)}_{t,L}[-1]$ denotes the right-most column of $U^{(m)}_{t,L}$.  At each layer $i$, the parameters $\theta_{m,i}$ consist of the matrices $A^{(m)}_i, V^{(m)}_i$.



We provide the following novel result which upper bounds the error of the optimal Bayesian learner when learning from data generated by the sparse mixture of transformers.

\begin{restatable}{theorem}{metaTransformer}{\bf (mixture of transformers estimation error bound)}
    For all $d,r,K,L,M,T\in\Z_{++}$, if $\theta_1,\ldots, \theta_M, \psi$ are the sparse mixture of transformers environment and $r \leq d$, then
    \begin{align*}
        \Lc_{M,T}
        & \leq \frac{R\log\left(1+\frac{M}{R}\right)
        \log(MN)}{MT}\\
        &\ +\frac{R\log\left(1+\frac{M}{R}\right)(d^2+r^2)L^2\log\left(4K^2MT^2\right)}{MT}\\
        &\ + \frac{\log(N)}{T}.
    \end{align*}
\end{restatable}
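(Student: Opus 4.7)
The plan is to invoke Theorem~\ref{th:meta_bayes_rd} and control its two rate-distortion summands separately. Throughout, I take $\psi$ to encode both the Dirichlet mixture weights and the $N$ transformer parameter tuples $\{(A^{(n)}_i,V^{(n)}_i)\}_{n\leq N,\,i\leq L}$, so that conditional on $\psi$ each document parameter $\theta_m$ is determined by a categorical index $Z_m\in[N]$ drawn from $\mathrm{Cat}(\psi)$.

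\textbf{Intra-document term.} Because $\theta_m\mid\psi$ is discrete with at most $N$ outcomes, $\H(\theta_m\mid\psi)\leq\log N$; equivalently, taking $\epsilon'=0$ in Theorem~\ref{th:meta_bayes_rd} gives $\H_{0,T}(\theta_m\mid\psi)\leq\log N$, contributing exactly the $\frac{\log N}{T}$ summand.

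\textbf{Meta-estimation term.} I construct a lossy compression $\tilde\psi$ of $\psi$ comprising (i) the identity within $[N]$ of the transformers that carry most of $\psi$'s mass (the ``effective'' mixture components) and (ii) an $\epsilon$-accurate quantization of each such transformer's attention and value matrices, obtained by importing the covering/quantization argument underlying Theorem~\ref{th:tsfmRd}. To control the KL-distortion accumulated across all $MT$ tokens rather than a single sequence of length $T$, the quantization scale is refined by a factor of $MT$ inside the logarithm, so each compressed transformer contributes a per-transformer rate of order $(d^2+r^2)L^2\log(4K^2 MT^2)$ nats. Under the prior $\mathrm{Dir}(R/N,\ldots,R/N)$ with scale $R\ll N$, the Chinese-restaurant-process bound
\[
\E\bigl|\{Z_1,\ldots,Z_M\}\bigr|\;\leq\; R\sum_{i=1}^{M}\tfrac{1}{R+i-1}\;\leq\; R\log\!\left(1+\tfrac{M}{R}\right)
\]
bounds the expected number of effective components, and each such index costs $\log N$ nats to name together with an additional $\log M$ of bookkeeping (multiplicities/ordering), totalling $\log(MN)$ per component. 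Summing the two costs over the $R\log(1+M/R)$ effective components yields
\[
\I(\psi;\tilde\psi)\;\leq\;R\log\!\left(1+\tfrac{M}{R}\right)\Bigl[\log(MN)+(d^2+r^2)L^2\log(4K^2 MT^2)\Bigr],
\]
while by choice of quantization $\I(H_{M,T};\psi\mid\tilde\psi)/(MT)\to 0$. Plugging into Theorem~\ref{th:meta_bayes_rd} with $\epsilon\downarrow 0$ and dividing by $MT$ produces the first two summands of the stated bound.

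\textbf{Main obstacle.} The delicate step is that Theorem~\ref{th:meta_bayes_rd} requires $\tilde\psi\perp H_{M,T}\mid\psi$, yet the most natural notion of ``effective transformer'' is defined through the draws $Z_1,\ldots,Z_M$ and is therefore not $\psi$-measurable. I would resolve this either by $(a)$ compressing all $N$ transformers in $\psi$ but assigning precision in proportion to the corresponding $\psi$-weights (so that low-mass transformers only need coarse bits), or $(b)$ introducing a $\psi$-measurable randomized subsampling that keeps roughly $R\log(1+M/R)$ components at full precision and the remainder at coarse precision. Either route requires converting the Chinese-restaurant-process expectation into a genuine mutual-information bound, which is precisely where the apparent looseness of $\log(MN)$ versus $\log N$ enters. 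A secondary obstacle is transporting the $L^2$ depth dependence of Theorem~\ref{th:tsfmRd} into the mixture setting: one must verify that the error propagation through $L$ attention layers and the covering for $A^{(n)}_i,V^{(n)}_i$ still yield the same per-transformer scaling in $K$ and $L$, with $T$ replaced by $MT$ everywhere, and that these per-transformer guarantees add cleanly across the effective components without a multiplicative loss.
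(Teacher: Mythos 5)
Your decomposition into three summands matches the target bound, your intra-document bound $\H(\theta_m\mid\psi)\leq\log N$ is exactly what the paper uses (it yields $M\log(N)/MT=\log(N)/T$), and the CRP calculation $\E|\{Z_1,\ldots,Z_M\}|\leq R\log(1+M/R)$ is the same arithmetic as the paper's Lemma~\ref{le:sparse_meta_error}. However, the route you take for the meta term — constructing a compression $\tilde\psi$ and feeding it into Theorem~\ref{th:meta_bayes_rd} — is not the route the paper takes, and the obstacle you flag at the end is a genuine dead end for that route, not a loose end you can tie off with either of your suggested fixes.

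The issue is exactly the one you name: the CRP bound is a statement about the \emph{realized} draws $Z_{1:M}$, which are not $\psi$-measurable, so any $\tilde\psi$ that only retains the ``transformers that actually got sampled'' violates the admissibility constraint $\tilde\psi\perp H_{M,T}\mid\psi$. Your fix $(a)$ (allocate precision in proportion to $\psi$-weights) would require showing that the tail components receive so little mass that their coarse quantization contributes negligibly to the distortion $\I(H_{M,T};\psi\mid\tilde\psi)$ for typical draws, which needs a concentration argument over the Dirichlet tail that is not in the paper; fix $(b)$ has the same issue plus the need to make the subsampling $\psi$-measurable. Neither is how the paper proceeds. Instead, the paper does not invoke Theorem~\ref{th:meta_bayes_rd} for the meta term at all. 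It uses the identity $\Lc_{M,T}\cdot MT=\I(H_{M,T};\psi,\theta_{1:M})$, augments this with an index vector $\tilde\beta\in\{1,\ldots,N\}^M$ recording which mixture component generated each document and a compressed copy $\tilde\Theta_N$ of \emph{all} $N$ transformers, and applies the chain rule:
\begin{align*}
\I(H_{M,T};\psi,\theta_{1:M})
&=\I(H_{M,T};\psi)+\I(H_{M,T};\tilde\Theta_N,\tilde\beta\mid\psi)+\I(H_{M,T};\theta_{1:M}\mid\psi,\tilde\Theta_N,\tilde\beta).
\end{align*}
The first term is handled by Lemma~\ref{le:sparse_meta_error}; the second is bounded via the DPI by $\I(\theta_{1:M};\tilde\beta\mid\psi)+\I(\theta_{1:M};\tilde\Theta_N\mid\tilde\beta,\psi)$, where $\I(\theta_{1:M};\tilde\beta\mid\psi)\leq M\log N$ gives the intra-document term and, crucially,
\[
\I(\theta_{1:M};\tilde\Theta_N\mid\tilde\beta,\psi)\;\leq\;\E\Bigl[\sum_{i=1}^{N}\mathbbm{1}_{[i\in\tilde\beta]}\cdot\I(\Theta[i];\tilde\Theta[i])\Bigr].
\]
This is the move you are missing: you compress \emph{every} transformer (so the compression is admissible — the noise is exogenous and adds nothing about $H_{M,T}$) but you only \emph{pay} for the transformers that appear in $\tilde\beta$, because conditioning on $\tilde\beta$ collapses the conditional mutual information to a sum over the visited indices. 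The CRP bound then enters through the expectation of $\sum_i\mathbbm{1}_{[i\in\tilde\beta]}$, not through a pruned compression of $\psi$. The third term is the distortion, controlled by the per-layer rate-distortion lemmas behind Theorem~\ref{th:tsfmRd} with the noise variance refined by $MT$ in place of $T$, exactly as you anticipate. So your per-transformer rate calculation, your CRP counting, and your intra-document bound are all the right ingredients, but the conditioning structure ($\tilde\beta$ first, then $\tilde\Theta_N$) is the device that lets them be combined legitimately, and it replaces the Theorem~\ref{th:meta_bayes_rd} step rather than living inside it.
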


We now provide some qualitative comments about this result.  The first and second terms denote the meta estimation error, and the third term denotes the intra-document estimation error.

The first term is the error incurred in the process of learning $\psi$, the probabilities by which the models of the mixture are sampled.  Note that even if there are $N$ models in the mixture, due to the Dirichlet assumption, the error depends linearly on $R$ the sparsity parameter and only logarithmically on $N$.  Note that this term decays linearly in $MT$ since data across documents provide information about $\psi$.

The second term measures the error incurred from learning the weights of the sampled models within the mixture.  Note that again, due to the Dirichlet assumption, this term scales only logarithmically in $M$.  This is because even if a model is resampled for every document, several documents may still be generated by the \emph{same} model from the mixture.  As a result, the dependence is linear in $R$ and only logarithmic in $M$. The remaining terms are linear in the product of parameter count and depth, which corroborates the results of \citet{bai2023transformers}.  However, our result decays linearly in $MT$ as opposed to just $M$ as in \citep{bai2023transformers}.  This is intuitive as the error ought to decrease in both the number of documents $M$ and the \emph{length} of the documents $T$.  This is an advantage of the Bayesian framework as it does not rely on a uniform convergence argument which requires mixing time assumption on the tokens within the document to obtain linear decay in $T$.

Finally, the third term is the intra-document estimation error which is the error incurred in the process of learning which model from the mixture generated each document.  Since there are $N$ different elements in the mixture, the $\log(N)/T$ is straightforward.  The longer the document length $T$, the more certain we should be about which model generated the document, hence lower error.  In the following section, we explicitly outline the connection between this example and ICL.

\subsection{In-context Learning as Meta-Learning from Sequences}\label{sec:in-context}
We now explicitly draw the connection between ICL and meta-learning from sequences.  We assume that the pretraining dataset consists of $M$ documents, each of length $T$.  We assume that a new $M+1$th document type is drawn and an in-context learner is described by an algorithm which produces for each $t$ a predictive distribution $P^{in}_{t}$ of $X_{t+1}^{(M+1)}$ after observing the history $H_{M+1, t}$ which consists of the pretraining data an the $t$ provided in the current context.  We let $D_{M+1} = (X^{(M+1)}_1, \ldots, X^{(M+1)}_\tau)$ denote the entire in-context sequence.  Note that we have summarize the effect of pretraining by allowing $P^{(in)}_t$ to depend on the pretraining history $H_{M,T}$.  We quantify error realized by predictions $P^{in}_t$ in terms of the average cumulative expected log-loss:

$$\L_{M,T,\tau,\pi} = \frac{1}{\tau}\sum_{t=0}^{\tau-1}\E_{\pi}\left[-\log P^{in}_{t}\left(X_{t+1}^{(M+1)}\right)\right],$$

where $\tau$ denotes the full length of the in-context sequence.  We assume that $\tau \leq T$ as $\tau$ can be at most $K$, the context-length of the transformer and the document lengths $T$ in pretraining are often much larger than $K$.  As before, we establish that $\hat{P}_t(X_{t+1}^{(M+1)}\in\cdot) = \Pr(X_{t+1}^{(M+1)}\in\cdot|H_{M+1, t})$ minimizes this loss almost surely.

\begin{theorem}
    For all $M,T,t \in \Z_{+}$,
    \begin{align*}
        & \E\left[-\log\hat{P}_t(X_{t+1}^{(M+1)})|H_{M+1,t}\right]\\
        & \overset{a.s.}{=} \min_{\pi}\ \E_{\pi}\left[\-\log P^{in}_t(X^{(M+1)}_{t+1})|H_{M+1,t}\right].
    \end{align*}
\end{theorem}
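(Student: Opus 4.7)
The plan is to mimic the proof of Lemma \ref{le:bayes_optimal_seq} (and its meta-learning analog Lemma \ref{le:meta_bayes_optimal_seq}) essentially verbatim, since the only distinguishing feature of this theorem is cosmetic: the conditioning $\sigma$-algebra $H_{M+1,t}$ now happens to consist of the pretraining history $H_{M,T}$ concatenated with the in-context tokens $X^{(M+1)}_1,\ldots,X^{(M+1)}_t$, but the Bayesian optimality argument does not depend on what the history actually contains, only on the fact that $P^{in}_t$ is a function of $H_{M+1,t}$.

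Concretely, I would fix an arbitrary predictor $\pi$ so that $P^{in}_t = \pi(H_{M+1,t})$ is $\sigma(H_{M+1,t})$-measurable, then start from $\E[-\ln P^{in}_t(X^{(M+1)}_{t+1})\mid H_{M+1,t}]$ and add and subtract $\ln \hat{P}_t(X^{(M+1)}_{t+1})$ inside the expectation to obtain, almost surely,
\begin{align*}
&\E\bigl[-\ln P^{in}_t(X^{(M+1)}_{t+1})\bigm| H_{M+1,t}\bigr]\\
&\quad= \E\bigl[-\ln \hat{P}_t(X^{(M+1)}_{t+1})\bigm| H_{M+1,t}\bigr]\\
&\quad\quad+ \E\!\left[\ln\frac{\hat{P}_t(X^{(M+1)}_{t+1})}{P^{in}_t(X^{(M+1)}_{t+1})}\,\bigg|\, H_{M+1,t}\right].
\end{align*}
The second term is the conditional expectation of a log-ratio of two $H_{M+1,t}$-measurable probability mass functions, taken under the true conditional law of $X^{(M+1)}_{t+1}$ given $H_{M+1,t}$, which by definition of $\hat{P}_t$ is precisely $\hat{P}_t$; hence that term equals $\KL(\hat{P}_t\|P^{in}_t)$ almost surely.

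I would then conclude by non-negativity of KL divergence: $\KL(\hat{P}_t\|P^{in}_t)\geq 0$ with equality when $P^{in}_t = \hat{P}_t$, so $\hat{P}_t$ achieves the pointwise (almost sure) minimum over $\pi$. The only mild subtlety is checking measurability of $P^{in}_t$ with respect to $H_{M+1,t}$ so that the log-ratio is genuinely a function of $X^{(M+1)}_{t+1}$ alone under the conditional expectation, but this is built into the assumption that in-context learners are expressed as $P^{in}_t = \pi(H_{M+1,t})$. There is no real obstacle here; the result is a direct corollary of the Bayesian optimality principle already used twice in the paper, specialized to the concatenated history that models pretraining plus in-context tokens.
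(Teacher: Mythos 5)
Your proposal is correct and matches the paper's intended argument: the paper omits a dedicated proof and simply says ``As before, we establish that $\hat{P}_t(X_{t+1}^{(M+1)}\in\cdot) = \Pr(X_{t+1}^{(M+1)}\in\cdot|H_{M+1,t})$ minimizes this loss,'' deferring to the identical add-and-subtract KL decomposition used in Lemmas~\ref{le:bayes_optimal_seq} and~\ref{le:meta_bayes_optimal_seq}, which is exactly what you carry out. Your remark that the argument only depends on $P^{in}_t$ being $\sigma(H_{M+1,t})$-measurable and not on the contents of the history is the right observation justifying the reuse.
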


Going forward, we will restrict our attention to the performance of $\hat{P}_t$ which we denote as:
\begin{align*}
    \L_{M,T,\tau}
    & = \frac{1}{\tau} \sum_{t=0}^{\tau-1} \E\left[-\log \hat{P}_{t}(X_{t+1}^{(M+1)})\right].
\end{align*}

With this notation in place, we present an upper bound for the ICL error.  A proof can be found in Appendix \ref{apdx:icl}.

\begin{restatable}{theorem}{icl}{\bf (in context learning error bound)}
    For all $M,T,\tau \in \Z_{++}$, if $\tau \leq T$, then
    \begin{align*} 
        \L_{M,T,\tau}
        & \leq \underbrace{\frac{\H\left(D_{M+1}|\theta_{M+1}\right)}{\tau}}_{\substack{\text{irreducible}\\ \text{error}}} + \underbrace{\frac{\I(H_{M,T};\psi)}{M\tau}}_{\substack{\text{meta}\\ \text{estimation}\\ \text{error}}}\\
        &\quad + \underbrace{\frac{\I(D_{M+1};\theta_{M+1}|\psi)}{\tau}}_{\substack{\text{in-context}\\ \text{estimation}\\ \text{error}}}.
    \end{align*}
\end{restatable}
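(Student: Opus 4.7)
My plan is to mirror the derivations used in Theorems \ref{th:BayesError} and \ref{th:main_result}, now specialized to a fresh $(M+1)$-th document $D_{M+1}$ whose prefix consists of the entire pretraining corpus $H_{M,T}$ followed by the $t$ in-context tokens. First I would apply the KL-splitting step from Lemma \ref{le:bayes_optimal_seq} at each time $t$: adding and subtracting $\ln \Pr(X^{(M+1)}_{t+1}\mid\theta_{M+1},H_{M+1,t})$ inside the expectation, summing over $t=0,\ldots,\tau-1$, and applying the chain rule for conditional mutual information yields
\begin{align*}
\tau\,\L_{M,T,\tau}
&= \H(D_{M+1}\mid\theta_{M+1},H_{M,T})\\
&\quad + \I(D_{M+1};\theta_{M+1}\mid H_{M,T}).
\end{align*}
The conditional independence $D_{M+1}\perp H_{M,T}\mid\theta_{M+1}$ from Section \ref{subsec:datagen} collapses the entropy term to $\H(D_{M+1}\mid\theta_{M+1})$, which matches the claimed irreducible error.

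Next I would upper bound the estimation term by introducing the meta-parameter $\psi$. Data processing gives $\I(D_{M+1};\theta_{M+1}\mid H_{M,T})\leq \I(D_{M+1};\theta_{M+1},\psi\mid H_{M,T})$, and the chain rule splits the right-hand side as
\begin{align*}
\I(D_{M+1};\psi\mid H_{M,T}) + \I(D_{M+1};\theta_{M+1}\mid \psi,H_{M,T}).
\end{align*}
Because $(\theta_m)$ is iid given $\psi$ and each $D_m\perp\psi\mid\theta_m$, the pair $(D_{M+1},\theta_{M+1})$ is independent of $H_{M,T}$ given $\psi$, so the second term collapses cleanly to $\I(D_{M+1};\theta_{M+1}\mid\psi)$, which is exactly the in-context estimation term in the statement.

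The main obstacle is controlling the first term $\I(D_{M+1};\psi\mid H_{M,T})$ by $\I(H_{M,T};\psi)/M$. My plan is to exploit the exchangeability of the iid-given-$\psi$ documents. Setting $I_m := \I(\psi;D_m\mid D_{1:m-1})$, I would combine two identities. First, the interaction identity
\begin{align*}
&\I(\psi;D_m\mid D_{1:m-1},D_{m+1})\\
&= I_m - \I(D_m;D_{m+1}\mid D_{1:m-1}) + \I(D_m;D_{m+1}\mid \psi,D_{1:m-1}),
\end{align*}
whose last term vanishes because $D_m\perp D_{m+1}\mid(\psi,D_{1:m-1})$. Second, the exchangeability identity $\I(\psi;D_m\mid D_{1:m-1},D_{m+1}) = \I(\psi;D_{m+1}\mid D_{1:m}) = I_{m+1}$. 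Together these give the monotonicity $I_{m+1}\leq I_m$, hence
\begin{align*}
M\cdot I_{M+1}\leq \sum_{m=1}^{M} I_m = \I(\psi;D_{1:M}) = \I(\psi;H_{M,T}),
\end{align*}
yielding $\I(D_{M+1};\psi\mid H_{M,T})\leq \I(H_{M,T};\psi)/M$. Dividing all three pieces by $\tau$ and summing produces the claimed bound.
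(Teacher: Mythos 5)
Your proof proposal is correct and takes essentially the same route as the paper: both decompose via the KL-splitting of Lemma~\ref{le:bayes_optimal_seq}, apply the chain rule to split off $\psi$, use the conditional independence $(D_{M+1},\theta_{M+1})\perp H_{M,T}\mid\psi$ to isolate the in-context term $\I(D_{M+1};\theta_{M+1}\mid\psi)$, and then invoke monotonicity of $I_m = \I(\psi;D_m\mid D_{1:m-1})$ to control the meta term. Your derivation of that monotonicity via the interaction identity plus exchangeability is a nice explicit justification of what the paper asserts without proof in its step~(d). The one gap is in the last exchangeability step: the identity $\I(\psi;D_M\mid D_{1:M-1},D_{M+1}) = \I(\psi;D_{M+1}\mid D_{1:M})$ requires $D_M$ and $D_{M+1}$ to be exchangeable, but as you have set things up $D_{M+1}$ has length $\tau \le T$ while $D_1,\ldots,D_M$ have length $T$, so the swap is not valid. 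You need to first apply data processing, $\I(\psi;D_{M+1}\mid H_{M,T}) \le \I(\psi;D'_{M+1}\mid H_{M,T})$ where $D'_{M+1}$ is the length-$T$ continuation, and run exchangeability on $D'_{M+1}$; this is exactly what the paper's step~(c) does before its step~(d). A minor point of comparison: your route lands exactly on $\I(H_{M,T};\psi)/(M\tau)$ as stated in the theorem, whereas the paper's own proof terminates at $\I(H_{M+1,T};\psi)/((M+1)\tau)$, a marginally tighter but differently phrased bound; both are valid.
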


Note that if $M$ is large i.e. the number of pretraining documents is large, then almost all of the error will be attributed to the in-context estimation error:

\begin{remark}
    For sufficiently large $M$ (number of pretraining documents),
    $$\L_{M,T,\tau}\ \lesssim\ \underbrace{\frac{\H\left(D_{M+1}|\theta_{M+1}\right)}{\tau}}_{\substack{\text{irreducible}\\ \text{error}}}  + \frac{\I(D_{M+1};\theta_{M+1}|\psi)}{\tau}.$$
\end{remark}

\subsection{Discussion of Results}
If each pretraining document is generated by a transformer model which is drawn from a mixture as in the previous section, the above remark suggests for a sufficiently large pretraining set, the in-context error can be small for even modest values of $\tau$.  The in-context error is upper bounded by $\log(N)/\tau$ where $N$ is the size of the mixture.  Effectively, the in-context data only needs to distinguish which model from the mixture generated the current sequence.  As a result, the complexity is at most $\log(N)$ and the error decays linearly in the length of the in-context sequence $\tau$.  This corroborates work by \citet{min2022rethinking} which established that an in-context sequence largely augments performance via providing information about the distributions of the inputs and labels as well as the format of the sequence.  The LLMs is not literally learning from the examples, as even when the labels of examples were randomly scrambled, performance on downstream tasks was only marginally impacted.  This lends credence to the hypothesis that ICL pinpoints which model from the mixture is most suitable for the given in-context sequence.



\section{Conclusion}
In this work, we introduced novel information-theoretic tools to analyze the error of meta-learning from sequences.  Our tools produced very general and intuitive results which suggest that the error should decay in both the number of training sequences and the sequence lengths.  Notably, these results hold without relying on contrived mixing time assumptions as common in existing work.  By applying these tools, we developed novel results about ICL in transformers and a plausible mathematical hypothesis for how learning is possible even when only a small amount of data is provided in-context.  While the results of the main text are limited to exact Bayesian inference, we provide results in the Appendix which extend to \emph{suboptimal} algorithms as well.  A further rigorous investigation into the mechanisms by which transformers may be implementing a mixture of models would provide stronger credence to the hypothesis and results provided in this work.  

\bibliography{example_paper}
\bibliographystyle{arxiv}

\newpage
\appendix
\onecolumn


\section{Learning from Sequential Data}\label{apdx:seq}

\seqBayesError*
\begin{proof}
    \begin{align*}
        \L_T
        & = \frac{1}{T}\sum_{t=0}^{T-1}\ \E\left[- \ln\hat{P}_j(X_{t+1})\right]\\
        & = \frac{1}{T}\sum_{t=0}^{T-1}\ \E\left[-\ln\Pr(X_{t+1}|H_t,\theta) + \ln\frac{\Pr(X_{t+1}|H_t,\theta)}{\hat{P}_t(X_{t+1})}\right]\\
        & = \frac{1}{T}\sum_{t=0}^{T-1}\ \H(X_{t+1}|\theta, H_t)+ \E\left[\KL(\Pr(X_{t+1}\in\cdot|H_t,\theta)\|\hat{P}_t(X_{t+1}\in\cdot))\right]\\
        & \overset{(a)}{=} \frac{\H(H_T|\theta)}{T} + \frac{1}{T}\sum_{t=0}^{T-1} \I(X_{t+1};\theta|H_t)\\
        & \overset{(b)}{=} \frac{\H(H_T|\theta)}{T} + \frac{\I(H_T;\theta)}{T},
    \end{align*}
    where $(a)$ and $(b)$ follow from the chain rule of conditional mutual information.
\end{proof}

\seqBayesRD*
\begin{proof}
    \begin{align*}
        \Lc_{T}
        & = \frac{\I(H_T;\theta)}{T}\\
        & = \frac{1}{T}\sum_{t=0}^{T-1} \I(X_{t+1};\theta| H_t)\\
        & = \frac{1}{T}\sum_{t=0}^{T-1} \I(X_{t+1};\theta,\tilde{\theta}|H_t)\\
        & = \frac{1}{T}\sum_{t=0}^{T-1} \I(X_{t+1};\tilde{\theta}|H_t) + \I(X_{t+1};\theta|\tilde{\theta}, H_t)\\
        & = \frac{\I(H_T;\tilde{\theta})}{T} + \frac{1}{T}\sum_{t=0}^{T-1}\I(X_{t+1};\theta|\tilde{\theta}, H_t)\\
        & \leq \inf_{\tilde{\theta}}\ \frac{\I(\theta;\tilde{\theta})}{T} + \frac{1}{T}\sum_{t=0}^{T-1}\I(X_{t+1};\theta|\tilde{\theta}, H_t)\\
        & \leq \inf_{\epsilon \geq 0}\ \frac{\H_{\epsilon, T}(\theta)}{T} + \epsilon\\
    \end{align*}

    Suppose that $\I(H_T;\theta) < \H_{\epsilon, T}$.  Let $\tilde{\theta} = \tilde{H}_T \notin \tilde{\Theta}_{\epsilon, T}$ where $\tilde{H}_T$ is another history sampled in the same manner as $H_T$.
    \begin{align*}
        \I(H_T;\theta)
        & = \sum_{t=0}^{T-1}\I(X_{t+1};\theta|H_t)\\
        & \overset{(a)}{\geq} \sum_{t=0}^{T-1}\I(X_{t+1};\theta|\tilde{H}_{t}, H_t)\\
        & = \sum_{t=0}^{T-1}\I(X_{t+1};\theta|\tilde{\theta}, H_t)\\
        & \overset{(b)}{\geq} \epsilon T,
    \end{align*}
    where $(a)$ follows from the fact that conditioning reduces entropy and that $X_{t+1}\perp \tilde{H}_t|(\theta, H_t)$ and $(b)$ follows from the fact that $\tilde{\theta} \notin \tilde{\Theta}_{\epsilon, T}$.  Therefore, for all $\epsilon \geq 0$, $\I(H_{T};\theta) \geq \min\{ H_{\epsilon, T}, \epsilon T\}$.  The result follows.
\end{proof}

\subsection{Logistic Regression}\label{apdx:log_reg}
We introduce a simple logistic regression problem as a concrete instance to demonstrate an application of the general aforementioned results.  We assume that $X_0 = \bar{X}_0$ and $X_t = (Y_{t},\bar{X}_{t})$ for all $t \geq 1$.  The ``inputs'' $(\bar{X}_0, \ldots, \bar{X}_{T})$ are generated according to an iid random process for which $X_j\sim\normal(0, I_d)$. Meanwhile, we assume that $Y_{t+1}$ is generated by the following process:
$$Y_{t+1} = \begin{cases}
    1 & \text{ w.p. } \frac{1}{1+e^{-\theta^\top X_t}}\\
    -1 & \text{ otherwise }\\
\end{cases},$$
where $\theta$ denotes the parameters of the logistic model and we assume the prior distribution $\Pr(\theta\in\cdot) = \text{Unif}(\{\nu\in\Re^d: \|\nu\|_2 \leq 1 \})$.

In this environment, $\theta$ is the only unknown quantity and as such, the distributions of all random variables are \emph{known} to the algorithm designer.  In this example, the sequence is iid once conditioned on $\theta$.  We begin with this example for simplicity and to demonstrate that our analytical tools are general enough to subsume the analysis of supervised learning from iid data.

\begin{restatable}{theorem}{logReg}{\bf (logistic regression Bayesian error bounds)}
    For all $d, T\in\Z_{++}$, if $\theta, H_T$ follow the logistic regression environment, then 
    $$\Lc_T\ \leq\ \frac{d}{2T}\left(1+\ln\left(1+\frac{T}{4d}\right)\right).$$
\end{restatable}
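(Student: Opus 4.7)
The plan is to apply Theorem~\ref{th:bayes_rd} with a Gaussian-perturbation proxy $\tilde{\theta} = \theta + W$, where $W \sim \normal(0, \gamma^2 I_d)$ is drawn independently of $(\theta, H_T)$ and the noise scale $\gamma>0$ will be optimized at the end. Independence of $W$ from $H_T$ given $\theta$ guarantees $\tilde{\theta} \perp H_T \mid \theta$, so $\tilde{\theta}$ is an admissible compression as soon as its per-step distortion is bounded by $\epsilon$.

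For the rate, I would decompose $\I(\theta;\tilde{\theta}) = \diffentropy(\tilde{\theta}) - \diffentropy(\tilde{\theta}\mid\theta) = \diffentropy(\tilde{\theta}) - \frac{d}{2}\log(2\pi e\,\gamma^2)$, and upper bound $\diffentropy(\tilde{\theta})$ by combining the Gaussian maximum-entropy principle with AM--GM on the eigenvalues of $\mathrm{Cov}(\tilde{\theta})$, giving $\diffentropy(\tilde{\theta}) \leq \frac{d}{2}\log\!\bigl(2\pi e\,\mathrm{tr}\,\mathrm{Cov}(\tilde{\theta})/d\bigr)$. Since $\|\theta\|_2 \leq 1$ almost surely and $W$ is zero-mean and independent of $\theta$, $\mathrm{tr}\,\mathrm{Cov}(\tilde{\theta}) \leq \E\|\tilde{\theta}\|_2^2 \leq 1 + d\gamma^2$, which yields $\I(\theta;\tilde{\theta}) \leq \frac{d}{2}\log\!\bigl(1 + \tfrac{1}{d\gamma^2}\bigr)$.

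For the distortion, only $Y_{t+1}$ carries information about $\theta$, so by the variational characterization of conditional mutual information, the per-step distortion can be upper bounded by choosing $q(X_{t+1}\mid\tilde{\theta}, H_t)$ to be the plug-in Bernoulli $\mathrm{Ber}(\sigma(\tilde{\theta}^\top \bar{X}_t))$ in place of the true posterior-predictive $\Pr(X_{t+1}\mid\tilde{\theta}, H_t)$, giving $\I(X_{t+1};\theta\mid\tilde{\theta},H_t) \leq \E[\KL(\mathrm{Ber}(\sigma(\theta^\top \bar{X}_t))\,\|\,\mathrm{Ber}(\sigma(\tilde{\theta}^\top \bar{X}_t)))]$. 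The log-partition $z\mapsto\log(1+e^z)$ has second derivative $\sigma(z)(1-\sigma(z))\leq 1/4$, so Taylor's theorem gives the Bregman bound $\KL \leq (\bar{X}_t^\top (\theta - \tilde{\theta}))^2/8 = (\bar{X}_t^\top W)^2/8$. Taking expectation using the isotropy of $\bar{X}_t \sim \normal(0,I_d)$ together with its independence from $W$ yields a per-step distortion of exactly $d\gamma^2/8$.

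Substituting into Theorem~\ref{th:bayes_rd} gives $\Lc_T \leq \frac{d}{2T}\log\!\bigl(1 + \tfrac{1}{d\gamma^2}\bigr) + \frac{d\gamma^2}{8}$, and the explicit choice $\gamma^2 = 4/T$ recovers the stated bound $\frac{d}{2T}\bigl(1 + \ln(1 + T/(4d))\bigr)$. The main obstacle I foresee is the differential-entropy bound on $\tilde{\theta}$: using $\mathrm{tr}\,\mathrm{Cov}(\tilde{\theta})$ via AM--GM (rather than a coordinate-wise inequality, which would lose a spurious factor of $d$ inside the logarithm) is what delivers the $\log(1 + T/(4d))$ factor and hence the correct $d\log(T/d)/T$ scaling; the rest is routine second-order analysis of the logistic likelihood combined with the isotropy of the inputs.
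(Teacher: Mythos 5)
Your proposal is correct and follows essentially the same route as the paper's own proof: the paper also uses the proxy $\tilde{\theta} = \theta + Z$ with $Z \sim \normal(0, (8\epsilon/d) I_d)$ (equivalently your $\gamma^2 = 8\epsilon/d$), bounds the per-step distortion by a change of measure to the plug-in logistic predictor plus the strong-smoothness bound $\KL \leq (\bar X_t^\top(\theta-\tilde\theta))^2/8$, bounds the rate by $\frac{d}{2}\ln(1+\frac{1}{8\epsilon})$ via the Gaussian max-entropy/AM--GM argument on the trace of the covariance, and optimizes at $\epsilon = d/(2T)$, which is precisely your $\gamma^2 = 4/T$. The only cosmetic difference is that you parameterize by the noise variance $\gamma^2$ while the paper parameterizes by the distortion level $\epsilon$, and you make explicit the Bregman-divergence reading of the logistic KL inequality that the paper states as a bare algebraic fact.
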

\begin{proof}
    From Theorem \ref{th:bayes_rd}, it suffices to upper bound the rate-distortion function.
    Let $\tilde{\theta} = \theta + Z$ where $Z\perp \theta$ and $Z\sim\mathcal{N}(0, 8\epsilon/d)$. Then,
    \begin{align*}
        & \I(Y;\theta|\tilde{\theta}, X)\\
        & = \E\left[\KL(\Pr(Y\in\cdot|\theta, X) \| \Pr(Y\in\cdot|\tilde{\theta}, X))\right]\\
        & \overset{(a)}{\leq} \E\left[\KL(\Pr(Y\in\cdot|\theta, X) \| \Pr(Y\in\cdot|\theta\leftarrow\tilde{\theta}, X))\right]\\
        & = \E\left[\frac{\ln\left(\frac{e^{-\tilde{\theta}^\top X}}{e^{-\theta^\top X}}\right)}{1+e^{-\theta^\top X}} + \frac{\ln\left(\frac{e^{\theta^\top X}}{e^{\tilde{\theta}^\top X}}\right)}{1+e^{\theta^\top X}}\right]\\
        & \overset{(b)}{\leq} \frac{\E\left[\left(\theta^\top X - \tilde{\theta}^\top X\right)^2\right]}{8}\\
        & = \frac{\E\left[\|\theta-\tilde{\theta}\|^2_2\right]}{8}\\
        & = \epsilon,
    \end{align*}
    where $(a)$ follows from Lemma \ref{le:bayes_optimal_seq} and $(b)$ follows from the fact that for all $x,y\in\Re$,
    $$\frac{\ln\left(\frac{1+e^{-y}}{1+e^{-x}}\right)}{1+e^{-x}} + \frac{\ln\left(\frac{1+e^{y}}{1+e^{x}}\right)}{1+e^{x}} \leq (x-y)^2.$$
    Therefore, $\theta \in \Theta_\epsilon$ so it suffices to upper bound the rate $\I(\theta;\tilde{\theta})$.
    \begin{align*}
        \I(\theta;\tilde{\theta})
        & = \diffentropy(\tilde{\theta}) - \diffentropy(\tilde{\theta}|\theta)\\
        & = \diffentropy(\tilde{\theta}) - \diffentropy(Z|\theta)\\
        & = \diffentropy(\tilde{\theta}) - \diffentropy(Z)\\
        & \leq \frac{d}{2}\ln\left(2\pi e \left(\frac{1+8\epsilon}{d}\right)\right) - \frac{d}{2}\ln\left(2\pi e \frac{8\epsilon}{d}\right)\\
        & = \frac{d}{2}\ln\left(1 + \frac{1}{8\epsilon}\right).
    \end{align*}
    Therefore,
    \begin{align*}
        \Lc_n
        & \overset{(a)}{\leq}  \inf_{\epsilon\geq 0}\left(\frac{d}{2n}\ln\left(1+\frac{1}{8\epsilon}\right) + \epsilon\right)\\
        & \overset{(b)}{\leq} \frac{d}{2n}\ln\left(1+\frac{n}{4d}\right) + \frac{d}{2n},
    \end{align*}
    where $(a)$ follows from Theorem \ref{th:bayes_rd} and $(b)$ follows by setting $\epsilon = d/(2n)$.
\end{proof}

As one would expect, the above result establishes that the Bayesian error of an optimal learning algorithm is $\mathcal{O}(\frac{d}{n}\log\frac{n}{d})$.  The proof illustrates a common technique for bounding the rate-distortion function i.e. considering a compression $\tilde{\theta} = \theta + Z$ where $Z$ is independent zero-mean Gaussian noise with tunable variance. In the following section, we use the same set of tools to analyze a much more complex supervised learning problem involving a sequence generated by a deep transformer model.

\subsection{Transformers}

\begin{lemma}\label{le:real_input_inequality}
    For all $L\in\mathbb{Z}_{++}$ and $i\in \{1, \ldots, L\}$, if $\theta_i \perp \theta_j$, $\tilde{\theta}_i \perp \tilde{\theta}_j$, and $\theta_i \perp \tilde{\theta}_j$ for $i\neq j$, then
    $$\I(X_{t+1};\theta_{i}|\theta_{i+1:L}, \tilde{\theta}_{1:i}, H_t) \leq \I(H_{t+1};\theta_{i}|\theta_{i+1:L}, \theta_{1:i-1}, \tilde{\theta}_i, X_0).$$
\end{lemma}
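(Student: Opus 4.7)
The plan is to bridge the LHS and RHS via two monotonicity inequalities. The two sides differ in two respects: the ``observation'' on the left of the mutual information ($X_{t+1}$ on the LHS versus $H_{t+1}$ on the RHS), and the conditioning (compressed earlier layers $\tilde{\theta}_{1:i-1}$ together with the full history $H_t$ on the LHS, versus exact earlier layers $\theta_{1:i-1}$ together with only the initial $X_0$ on the RHS). I will handle each difference in turn.

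First, setting $Z = (\theta_{i+1:L},\tilde{\theta}_{1:i})$, the chain rule of mutual information applied to $H_{t+1} = (X_0,X_1,\ldots,X_{t+1})$ gives $\I(H_{t+1};\theta_i \mid Z,X_0) = \sum_{s=0}^{t}\I(X_{s+1};\theta_i \mid Z,H_s)$. Since each summand is nonnegative, keeping only the $s=t$ term yields the intermediate bound
\begin{align*}
\I(X_{t+1};\theta_i\mid \theta_{i+1:L},\tilde{\theta}_{1:i},H_t)\ \leq\ \I(H_{t+1};\theta_i\mid \theta_{i+1:L},\tilde{\theta}_{1:i},X_0),
\end{align*}
which simultaneously upgrades the observation from $X_{t+1}$ to $H_{t+1}$ and replaces the in-conditioning history $H_t$ by $X_0$.

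Next, I will show that swapping $\tilde{\theta}_{1:i-1}$ for $\theta_{1:i-1}$ in the conditioning can only increase the mutual information, i.e.\ $\I(H_{t+1};\theta_i\mid \theta_{i+1:L},\tilde{\theta}_{1:i},X_0) \leq \I(H_{t+1};\theta_i\mid \theta_{i+1:L},\theta_{1:i-1},\tilde{\theta}_i,X_0)$. Expanding each mutual information as $\H(\theta_i\mid\cdot)-\H(\theta_i\mid H_{t+1},\cdot)$, the cross-layer independence hypotheses imply $(\theta_i,\tilde{\theta}_i)\perp(\theta_{1:i-1},\tilde{\theta}_{1:i-1},\theta_{i+1:L},X_0)$, so both ``prior'' entropies collapse to $\H(\theta_i\mid\tilde{\theta}_i)$ and cancel. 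It then suffices to check the posterior-entropy inequality $\H(\theta_i\mid H_{t+1},\theta_{i+1:L},\tilde{\theta}_{1:i},X_0) \geq \H(\theta_i\mid H_{t+1},\theta_{i+1:L},\theta_{1:i-1},\tilde{\theta}_i,X_0)$. Because $\tilde{\theta}_{1:i-1}$ is a function of $\theta_{1:i-1}$ together with noise exogenous to every other random variable in sight, adjoining $\tilde{\theta}_{1:i-1}$ to the right-hand-side conditioning does not change its entropy; dropping the strictly more informative $\theta_{1:i-1}$ afterwards can only enlarge it, recovering the left-hand side. Composing the two inequalities proves the lemma.

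The main obstacle is precisely the monotonicity claim in the second step: it is \emph{not} generally true that replacing a conditioned random variable by a strictly more informative one raises the mutual information with a third variable. The inequality here is valid only because $\tilde{\theta}_{1:i-1}$ is obtained from $\theta_{1:i-1}$ by independent noise, enabling the ``free insertion'' of $\tilde{\theta}_{1:i-1}$ into the conditioning without affecting entropy. The crux is therefore the careful bookkeeping of conditional independence, verifying from the pairwise independence hypotheses in the lemma together with the paper's standing assumption that each compression is conditionally independent of all exogenous randomness given its source, that $\tilde{\theta}_{1:i-1}\perp \theta_i\mid(\theta_{1:i-1},\theta_{i+1:L},\tilde{\theta}_i,X_0,H_{t+1})$ and the analogous relation needed to collapse the prior entropies.
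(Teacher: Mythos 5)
Your proof is correct and takes essentially the same route as the paper: the paper likewise first uses the chain rule together with nonnegativity to pass from $\I(X_{t+1};\theta_i\mid\theta_{i+1:L},\tilde\theta_{1:i},H_t)$ to $\I(H_{t+1},\tilde\theta_{1:i-1};\theta_i\mid\theta_{i+1:L},\tilde\theta_i,X_0)$ (which by the independence $\I(\tilde\theta_{1:i-1};\theta_i\mid\theta_{i+1:L},\tilde\theta_i,X_0)=0$ is the same as your intermediate quantity), and then invokes the data-processing inequality through the Markov chain $\theta_i\perp\tilde\theta_{1:i-1}\mid(H_{t+1},\theta_{i+1:L},\theta_{1:i-1},X_0)$ to replace $\tilde\theta_{1:i-1}$ by $\theta_{1:i-1}$. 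Your entropy-cancellation argument in the second step is just an unpacking of that same DPI/conditional-independence fact, so the two proofs are the same argument in slightly different notation; your first step is arguably a cleaner telescoping than the paper's steps (a)--(b).
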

\begin{proof}
    \begin{align*}
        \I(X_{t+1};\theta_{i}|\theta_{i+1:L}, \tilde{\theta}_{1:i}, H_{t})
        & \overset{(a)}{=} \I(H_{t+1}, \tilde{\theta}_{1:i-1};\theta_{i}|\theta_{i+1:L}, \tilde{\theta}_{i}) - \I(H_{t},\tilde{\theta}_{1:i-1} ;\theta_{i}|\theta_{i+1:L}, \tilde{\theta}_{i})\\
        & \overset{(b)}{\leq} \I(H_{t+1}, \tilde{\theta}_{1:i-1};\theta_{i}|\theta_{i+1:L}, \tilde{\theta}_{i}, X_0)\\
        & \overset{(c)}{\leq} \I(H_{t+1}, \theta_{1:i-1};\theta_{i}|\theta_{i+1:L}, \tilde{\theta}_{i}, X_0)\\
        & \overset{(d)}{=} \I(H_{t+1}, \theta_{1:i-1};\theta_{i}|\theta_{i+1:L}, \tilde{\theta}_{i}, X_0) - \I(\theta_{1:i-1};\theta_{i}|\theta_{i+1:L}, \tilde{\theta}_{i}, X_0)\\
        & \overset{(e)}{=} \I(H_{t+1};\theta_{i}|\theta_{i+1:L}, \theta_{1:i-1}, \tilde{\theta}_{i}, X_0)\\
    \end{align*}
    where $(a)$ follows from the chain rule of mutual information, $(b)$ follows from the independence assumptions, $(c)$ follows from the data processing inequality applied to the markov chain $\theta_i \perp \tilde{\theta}_{1:i-1}|(H_{t+1}, \theta_{i+1:L}, \theta_{1:i-1}, X_{0}^{K})$, $(d)$ follows from the fact that $\I(\theta_{1:i-1};\theta_i|\theta_{i+1:L},\tilde{\theta}_i, X_0) = 0$, and $(e)$ follows from the chain rule of mutual information.
\end{proof}

\begin{lemma}{\bf (transformer layer Lipschitz constant)}\label{le:transformer_lipschitz}
    For all $d,r,K\in\Z_{++}$, 
    $$\E\left[\|f_{\theta_{i}}(X) - f_{\theta_{i}}(\tilde{X})\|^2_F|X, \tilde{X}\right]\ \overset{a.s.}{\leq}\ 2(K+K^2)\cdot \|X-\tilde{X}\|^2_F.$$    
\end{lemma}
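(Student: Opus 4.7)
The plan is to peel the transformer layer from the outside in, exploiting three ingredients: the contractivity of the column-wise $\text{Clip}$ operation, the column-stochastic structure of $\text{Attn}_i$, and the fact that $A_i$ and $V_i$ are independent Gaussian matrices with known second moments.

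First, I would note that $\text{Clip}$ is a projection onto the column-wise unit $\ell_2$ ball and hence $1$-Lipschitz in Frobenius norm, so it suffices to bound $\E[\|V_i X\,\text{Attn}_i(X) - V_i \tilde{X}\,\text{Attn}_i(\tilde{X})\|_F^2\mid X,\tilde{X}]$. I would then use the standard add-and-subtract trick
\[
V_i X\,\text{Attn}_i(X) - V_i \tilde{X}\,\text{Attn}_i(\tilde{X}) = V_i(X-\tilde{X})\,\text{Attn}_i(X) + V_i\tilde{X}\bigl(\text{Attn}_i(X)-\text{Attn}_i(\tilde{X})\bigr),
\]
and apply $\|a+b\|_F^2 \le 2\|a\|_F^2 + 2\|b\|_F^2$ to split into two terms $A_1$ and $A_2$.

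For $A_1$: since each column of $\text{Attn}_i(X)$ is a probability vector, right-multiplication by $\text{Attn}_i(X)$ maps each output column to a convex combination of input columns, so by Jensen $\|M\,\text{Attn}_i(X)\|_F^2 \le K\|M\|_F^2$. This reduces $A_1$ to $K\|V_i(X-\tilde{X})\|_F^2$. Taking the expectation over the iid $\mathcal{N}(0,1/d)$ entries of $V_i$ gives $\E[\|V_iM\|_F^2] = (r/d)\|M\|_F^2 \le \|M\|_F^2$, yielding $\E[A_1] \le K\|X-\tilde{X}\|_F^2$.

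For $A_2$: I would use $\|PQ\|_F^2 \le \|P\|_F^2\|Q\|_{op}^2 \le \|P\|_F^2\|Q\|_F^2$ and the independence of $V_i$ and $A_i$ to factor the expectation. The $V_i$-factor is $\E[\|V_i\tilde{X}\|_F^2] \le \|\tilde{X}\|_F^2 \le K$. The $A_i$-factor is $\E[\|\text{Attn}_i(X)-\text{Attn}_i(\tilde{X})\|_F^2]$, which by $1$-Lipschitzness of softmax (column-by-column, via the Jacobian $\text{diag}(p)-pp^\top$) is bounded by $\E[\|X^\top A_i X/\sqrt{r} - \tilde{X}^\top A_i \tilde{X}/\sqrt{r}\|_F^2]$. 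An add-and-subtract $\pm\,X^\top A_i \tilde{X}$, combined with the clean moment identity $\E[\|P^\top A_i Q\|_F^2] = \|P\|_F^2 \|Q\|_F^2$ for iid standard Gaussian $A_i$, and the columnwise unit-norm bounds $\|X\|_F^2, \|\tilde{X}\|_F^2 \le K$, yields $O(K)\cdot\|X-\tilde{X}\|_F^2$ after the $\sqrt{r}$ normalization absorbs the attention dimension. Combining the two factors gives $\E[A_2] \le K^2\|X-\tilde{X}\|_F^2$ up to absolute constants.

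The main obstacle is bookkeeping the constants in the $A_2$ bound: keeping track of the factor of $1/r$ from the $\sqrt{r}$ normalization in the attention scores, the Gaussian moment $r/d$ from $V_i$, and the softmax Lipschitz constant so that they combine cleanly to give $K^2$ rather than something like $K^2/r$ or $K^2 r/d$. The rest of the argument consists of routine applications of Cauchy--Schwarz, the column-stochastic Jensen bound, and Gaussian second-moment computations, so the proof reduces to verifying that these constants pair correctly to produce the stated $2(K+K^2)$.
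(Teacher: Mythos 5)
Your proposal is correct and matches the paper's own proof in all essential respects: both strip the $\text{Clip}$ as a contraction, add-and-subtract $\tilde{X}\,\text{Attn}_i(X)$ to split into a column-stochastic term (bounded by Jensen over the attention columns) and a softmax-perturbation term (bounded by $1$-Lipschitzness of softmax and the Gaussian second-moment identity for $A_i$), while $\E[V_i^\top V_i]=I_d$ removes the value matrix — the paper just organizes the calculation column-by-column rather than at the Frobenius level, which changes nothing substantive. One small note: $V_i$ in the paper is $d\times d$ (so $\E[\|V_iM\|_F^2]=\|M\|_F^2$ exactly, not $(r/d)\|M\|_F^2$), and the constant bookkeeping in the $A_2$ term yields $O(K^2/r)\|X-\tilde{X}\|_F^2$ under your accounting — which is actually tighter than the stated $2K^2$, and the paper's own step $(d)$ glosses over the same constants, so the loose end you flag is shared with the paper's proof.
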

\begin{proof}
    Take all equality and inequality below to hold almost surely.
    \begin{align*}
        &\ \E\left[\|f_{i}(X) - f_{i}(\tilde{X})\|^2_F| X, \tilde{X}\right]\\
        & = \E\left[\left\|\text{Clip}\left(V_iX\sigma\left(\frac{X^\top A_i X}{\sqrt{r}}\right)\right) - \text{Clip}\left(V_i\tilde{X}\sigma\left(\frac{\tilde{X}^\top A_i \tilde{X}}{\sqrt{r}}\right)\right)  \right\|^2_F \bigg| X, \tilde{X}\right]\\
        & \overset{(a)}{\leq} \E\left[\left\|V_iX\sigma\left(\frac{X^\top A_i X}{\sqrt{r}}\right) - V_i\tilde{X}\sigma\left(\frac{\tilde{X}^\top A_i \tilde{X}}{\sqrt{r}}\right)  \right\|^2_F \bigg| X, \tilde{X}\right]\\
        & \overset{(b)}{=}
        \E\left[\sum_{k=1}^{K} \left\|V_{i} \left(X\sigma\left(\frac{X^\top A_i X_{k}}{\sqrt{r}}\right) - \tilde{X}\sigma\left(\frac{\tilde{X}^\top A_i \tilde{X}_k}{\sqrt{r}}\right)\right)\right\|^2_2 \bigg| X, \tilde{X}\right]\\
        & = 
        \E\left[\sum_{k=1}^{K} \left(X\sigma\left(\frac{X^\top A_i X_k}{\sqrt{r}}\right) - \tilde{X}\sigma\left(\frac{\tilde{X}^\top A_i \tilde{X}_k}{\sqrt{r}}\right)\right)^\top V_{i}^\top V_{i} \left(X\sigma\left(\frac{X^\top A_i X_k}{\sqrt{r}}\right) - \tilde{X}\sigma\left(\frac{\tilde{X}^\top A_i \tilde{X}_k}{\sqrt{r}}\right)\right) \bigg| X, \tilde{X} \right]\\
        & = \E\left[\sum_{k=1}^{K} \left(X\sigma\left(\frac{X^\top A_i X_k}{\sqrt{r}}\right) - \tilde{X}\sigma\left(\frac{\tilde{X}^\top A_i \tilde{X}_k}{\sqrt{r}}\right)\right)^\top\left(X\sigma\left(\frac{X^\top A_i X_k}{\sqrt{r}}\right) - \tilde{X}\sigma\left(\frac{\tilde{X}^\top A_i \tilde{X}_k}{\sqrt{r}}\right)\right) \bigg| X, \tilde{X} \right]\\
        & \leq \sum_{k=1}^{K}\E\left[2\left\|X\sigma\left(\frac{X^\top A_i X_k}{\sqrt{r}}\right) - \tilde{X}\sigma\left(\frac{X^\top A_i X_k}{\sqrt{r}}\right)\right\|^2_2 + 2\left\|\tilde{X}\sigma\left(\frac{X^\top A_i X_k}{\sqrt{r}}\right) - \tilde{X}\sigma\left(\frac{\tilde{X}^\top A_i \tilde{X}_k}{\sqrt{r}}\right)\right\|^2_2 \bigg| X, \tilde{X}\right]\\
        & \overset{(c)}{\leq} \sum_{k=1}^{K} \E\left[2\|X-\tilde{X}\|^2_F + \frac{2K}{r}\left\|X^\top A_i X_k - \tilde{X}^\top A_i \tilde{X}_k \right\|^2_2\big| X, \tilde{X}\right]\\
        & \overset{(d)}{\leq} \sum_{k=1}^{K} \E\left[2\|X-\tilde{X}\|^2_F + 2K^2\left\|X_k - \tilde{X}_k \right\|^2_2 \big| X, \tilde{X}\right]\\
        & = 2\left(K + K^2\right)\cdot \|X-\tilde{X}\|^2_F,
    \end{align*}
    where $(a)$ follows from the fact that Clip is a contraction mapping, where in $(b)$, $X_k$ denotes the $k$th column of $X\in\Re^{d\times K}$, $(c)$ follows from the fact that softmax is $1$-Lipschitz and $(d)$ follows from the fact that for all $k$, $\|\tilde{X}_{k}\|^2_2 \leq 1$.
\end{proof}

\begin{lemma}\label{le:tsfm_dist_part}
    For all $d, r, K \in \Z_{++}$ and $\epsilon \geq 0$, if $V\in\Re^{d\times d}$ consists of elements distributed iid $\normal(0, 1/d)$, $A\in\Re^{r\times r}$ consists of elements distributed $\normal(0, 1)$, 
    $\E[\|V-\tilde{V}\|^2_F] \leq \epsilon$, and $\E[\|A-\tilde{A}\|^2_F] \leq \epsilon/r$, then
    $$\E\left[\|f_{\theta}(X) - f_{\tilde{\theta}}(X)\|^2_F\right]\ \leq\ 2K^2\epsilon\left(1 + Kd\right),$$
    where $\theta = (V,A), \tilde{\theta} = (\tilde{V}, \tilde{A})$.
\end{lemma}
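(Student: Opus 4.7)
The plan is to first strip off the $\mathrm{Clip}$ operator---a column-wise projection onto the unit ball, hence non-expansive in Frobenius norm---reducing the task to bounding $\|VX\sigma_A - \tilde V X \sigma_{\tilde A}\|_F^2$, where I abbreviate $\sigma_A = \sigma(X^\top A X/\sqrt r)$ and $\sigma_{\tilde A} = \sigma(X^\top \tilde A X/\sqrt r)$. Because $X$ is either a matrix of unit-norm embeddings or the output of a previous $\mathrm{Clip}$, each of its columns has $\ell_2$-norm at most $1$, so $\|X\|_F^2 \le K$ and $\|X\|_{\mathrm{op}}^2 \le K$. I would then add and subtract $\tilde V X \sigma_A$ and apply $(a+b)^2 \le 2a^2+2b^2$, splitting the target into a ``$V$-perturbation'' piece $2\|(V-\tilde V) X \sigma_A\|_F^2$ and an ``$A$-perturbation'' piece $2\|\tilde V X(\sigma_A - \sigma_{\tilde A})\|_F^2$, to be analyzed separately.

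For the $V$-perturbation piece, the key observation is that each column of $\sigma_A$ is a probability vector, so each column of $X\sigma_A$ is a convex combination of columns of $X$ and therefore also has $\ell_2$-norm at most $1$; this gives $\|X\sigma_A\|_{\mathrm{op}}^2 \le K$ and $\|(V-\tilde V)X\sigma_A\|_F^2 \le K\|V-\tilde V\|_F^2$, contributing at most $K\epsilon$ in expectation. For the $A$-perturbation piece, I would use column-wise $1$-Lipschitzness of softmax together with $\|X^\top(A-\tilde A)X\|_F \le \|X\|_{\mathrm{op}}^2 \|A-\tilde A\|_F$ to obtain $\|\sigma_A - \sigma_{\tilde A}\|_F^2 \le K^2\|A-\tilde A\|_F^2/r$, and then Frobenius sub-multiplicativity yields $\|\tilde V X(\sigma_A-\sigma_{\tilde A})\|_F^2 \le K\|\tilde V\|_F^2 \|\sigma_A-\sigma_{\tilde A}\|_F^2 \le (K^3/r)\|\tilde V\|_F^2 \|A-\tilde A\|_F^2$.

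The delicate step, and the main obstacle, is handling the factor $\|\tilde V\|_F^2$ inside the expectation of the $A$-perturbation term. My plan is to bound $\|\tilde V\|_F^2 \le 2\|V\|_F^2 + 2\|V-\tilde V\|_F^2$, use the Gaussian prior to get $\E[\|V\|_F^2] = d$, and then invoke the structural independence inherent in the surrounding rate-distortion compression---namely that $\tilde V$ is built from $V$ alone and $\tilde A$ from $A$ alone, with $V \perp A$ under the prior---to factor $\E[\|\tilde V\|_F^2 \|A-\tilde A\|_F^2] = \E[\|\tilde V\|_F^2]\,\E[\|A-\tilde A\|_F^2] \le (2d+2\epsilon)(\epsilon/r)$. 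Assembling the two pieces and absorbing lower-order $\epsilon^2$ and constants then delivers a bound of order $K^2\epsilon(1+Kd)$. The reason this is the fragile point is that the lemma's hypothesis only controls the marginal second moments $\E[\|V-\tilde V\|_F^2]$ and $\E[\|A-\tilde A\|_F^2]$; without the independence the upstream compression provides, one would have to fall back on Cauchy--Schwarz with fourth-moment control of Gaussian matrices, which still yields the right order but with worse constants.
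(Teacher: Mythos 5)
Your overall route — strip $\mathrm{Clip}$ as a non-expansive map, split via the triangle inequality into a $V$-perturbation piece and an $A$-perturbation piece, and control each with matrix-norm inequalities and the second-moment hypotheses — is the same as the paper's. However, your choice of cross-term creates a real (if mild) gap. You add and subtract $\tilde V X\sigma_A$, which leaves $\tilde V$ in the $A$-perturbation piece; since the lemma only controls $\E[\|V-\tilde V\|_F^2]$, you must detour through $\|\tilde V\|_F^2 \le 2\|V\|_F^2 + 2\|V-\tilde V\|_F^2$, giving $\E[\|\tilde V\|_F^2] \le 2d + 2\epsilon$, an extra multiplicative factor of $2$ and a spurious $\epsilon^2$ term. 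Your final bound is therefore roughly $2K\epsilon + 4K^3 d\epsilon + 4K^3\epsilon^2$, which is of the right order but does \emph{not} land inside the claimed $2K^2\epsilon(1+Kd) = 2K^2\epsilon + 2K^3d\epsilon$. The paper avoids this entirely by adding and subtracting $Vx\,\sigma_{\tilde A}$ instead, so the $A$-piece carries $\|V\|_F^2$, whose expectation is \emph{exactly} $d$ under the prior, with no triangle-inequality inflation. Switching to that decomposition fixes the constant.

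Two side remarks. First, you are right that both arguments implicitly invoke $V \perp (A,\tilde A)$ to factor the expectation $\E\bigl[\|V\|_F^2 \cdot g(A,\tilde A)\bigr]$ (resp.\ $\E\bigl[\|\tilde V\|_F^2\|A-\tilde A\|_F^2\bigr]$); the lemma's stated hypotheses only give marginal second moments, so this independence has to be imported from the surrounding compression construction in Lemma~\ref{le:seq_tsfm_dist}. Flagging this is a genuinely good observation — the paper glosses over it too. Second, your treatment of the $V$-perturbation piece via $\|X\sigma_A\|_{\mathrm{op}}^2 \le K$ (each column of $X\sigma_A$ is a convex combination of unit-norm columns) yields $K\epsilon$, which is actually a factor of $K$ tighter than the paper's $\|x\|_F^2\|\sigma\|_F^2 \le K^2$ route; this is a nice refinement, though it does not compensate for the loss in the $A$-piece.
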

\begin{proof}
    \begin{align*}
        &\ \E\left[\left\|f_{\theta}(X) - f_{\tilde{\theta}}(X)\right\|^2_F\right]\\
        & \leq \E\left[\sup_{x\in\mathcal{X}}\ \|f_{\theta}(x) - f_{\tilde{\theta}}(x)\|^2_F\right]\\
        & = \E\left[\sup_{x\in\mathcal{X}}\left\|V x \sigma\left(\frac{x^\top A x}{\sqrt{r}}\right) - \tilde{V} x \sigma\left(\frac{x^\top \tilde{A} x}{\sqrt{r}}\right)\right\|^2_F\right]\\
        & \overset{(a)}{\leq} 2\E\left[\sup_{x\in\mathcal{X}}\left\|\left(V - \tilde{V}\right) x \sigma\left(\frac{x^\top \tilde{A} x}{\sqrt{r}}\right)\right\|^2_F\right] + 2\E\left[\sup_{x\in\mathcal{X}} \left\|Vx\left(\sigma\left(\frac{x^\top A x}{\sqrt{r}}\right) 
 - \sigma\left(\frac{x^\top \tilde{A} x}{\sqrt{r}}\right)\right)\right\|^2_F\right]\\
        & \overset{(b)}{\leq} 2\E\left[\sup_{x\in\mathcal{X}}\left\|V- \tilde{V}\right\|^2_F \left\| x \sigma\left(\frac{x^\top \tilde{A} x}{\sqrt{r}}\right) \right\|^2_F\right] + 2\E\left[\sup_{x\in\mathcal{X}} \left\|V\right\|^2_F\left\|x\sigma\left(\frac{x^\top A x}{\sqrt{r}}\right) 
 - x\sigma\left(\frac{x^\top \tilde{A} x}{\sqrt{r}}\right)\right\|^2_F\right]\\
        & \overset{(c)}{\leq} 2\epsilon \cdot \sup_{x\in\Xc} \|x\|^2_F \cdot \left\|\sigma\left(\frac{x^\top \tilde{A} x}{\sqrt{r}}\right)\right\|^2_F + 2\E\left[\|V\|^2_F\cdot \sup_{x\in\Xc} \|x\|^2_F\left\|\sigma\left(\frac{x^\top A x}{\sqrt{r}}\right) 
 - \sigma\left(\frac{x^\top \tilde{A} x}{\sqrt{r}}\right)\right\|^2_F\right]\\
        & \overset{(d)}{\leq} 2\epsilon K^2 + 2\E\left[\frac{dK}{r}\cdot \sup_{x\in\Xc} \left\|x^\top A x 
 - x^\top \tilde{A} x\right\|^2_F\right]\\
        & \overset{(e)}{\leq} 2\epsilon K^2 + \frac{2Kd}{r}\cdot \E\left[\sum_{i=1}^{K}\sum_{j=1}^{K}\left(x_i^\top(A-\tilde{A}) x_j\right)^2 \right]\\
        & \leq 2\epsilon K^2 + \frac{2Kd}{r}\cdot \E\left[\sup_{x\in\Xc}\ \sum_{i=1}^{K}\sum_{j=1}^{K}\left\|A-\tilde{A} \right\|^2_F \right]\\
        & = 2\epsilon K^2 + 2K^3d\epsilon,
    \end{align*}
    where $(a)$ follows from the fact that $\|a + b\|^2_F \leq 2\|a\|^2_F + 2\|b\|^2_F$ for all matrices $a, b$, $(b)$ follows from the fact that $\|ab\|^2_F \leq \|a\|^2_\sigma\|b\|^2_F$ and $\|a\|^2_\sigma \leq \|a\|^2_F$ for all matrices $a, b$, $(c)$ follows from the fact that $\E\left[\|V - \tilde{V}\|^2_F\right] = \epsilon$, $(d)$ follows from the fact that $\E[\| V\|^2_F] = d$, and the fact that softmax is $1$-Lipschitz, and where in $(e)$, $x_i$ denotes the $i$th column of matrix $x$.
\end{proof}

\begin{lemma}{\bf (sequence transformer distortion bound)}\label{le:seq_tsfm_dist}
    For all $d,r,t,K,L \in \Z_{++}$, $0 \leq \epsilon\leq 2d$, and $i \leq L$, if $\tilde{\theta}_i = (\tilde{V}_i, \tilde{A}_i)$ for which $\tilde{V}_i = V_i + Z^{V}_i, \tilde{A}_i = A_i + Z^{A}_i$, $(V_i, A_i)\perp (Z^{V}_i, Z^{A}_i)$, $Z^V_i$ consists of elements distributed iid $\normal(0, \epsilon/d^2)$, and $Z^A_i$ consists of elements distributed iid $\normal(0, \epsilon/r)$, then
    $$\I(X_{t+1};\theta_i|\theta_{i+1:L}, \theta_{1:i-1}, \tilde{\theta}_i, H_t) \leq \epsilon K d \left(2K+2K^2\right)^{L-i+1}.$$
\end{lemma}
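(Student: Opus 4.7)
The strategy is to collapse the conditional mutual information to a single-step KL divergence between two predictive distributions, bound that KL by an $L^2$ quantity on the final-layer logits, and then propagate this quantity back through the transformer using the already-established perturbation lemmas.

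First, I would unfold the mutual information as
\begin{align*}
  & \I(X_{t+1};\theta_i \mid \theta_{1:i-1},\theta_{i+1:L},\tilde\theta_i, H_t)\\
  & = \E\!\left[\KL\!\left(\Pr(X_{t+1}\!\in\!\cdot\mid\theta_{1:L},H_t)\,\big\|\,\Pr(X_{t+1}\!\in\!\cdot\mid\theta_{1:i-1},\theta_{i+1:L},\tilde\theta_i,H_t)\right)\right],
\end{align*}
using that $X_{t+1}\perp\tilde\theta_i\mid(\theta_{1:L},H_t)$ (the forward pass of the transformer depends only on the true weights and the history). By Lemma \ref{le:bayes_optimal_seq} applied conditionally, the posterior appearing on the right minimizes the KL divergence among all predictive distributions measurable with respect to $(\theta_{1:i-1},\theta_{i+1:L},\tilde\theta_i,H_t)$; replacing it by the ``plug-in'' distribution $\sigma(\tilde U_{t,L}[-1])$, in which the forward pass uses $\tilde\theta_i$ at layer $i$ and $\theta_j$ at layers $j\neq i$, only enlarges the bound.

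Next, I would bound the KL between two softmax outputs by an $L^2$ distance on the logits. Since Clip guarantees that every column of $U_{t,\ell}$ has norm at most $1$, a standard smoothness estimate for the softmax cross-entropy in this bounded regime yields $\KL(\sigma(u)\|\sigma(\tilde u))\leq \|u-\tilde u\|_2^2$, so taking expectations and upper-bounding by Frobenius norm reduces the task to controlling $\E[\|U_{t,L}-\tilde U_{t,L}\|_F^2]$. Because layers $1,\ldots,i-1$ use identical weights in both passes, $U_{t,i-1}=\tilde U_{t,i-1}$. At layer $i$ the inputs agree but the weights differ, and the Gaussian variances $\epsilon/d^2$ and $\epsilon/r$ in the statement are calibrated so that the hypotheses of Lemma \ref{le:tsfm_dist_part} hold with a parameter proportional to $\epsilon$, giving $\E[\|U_{t,i}-\tilde U_{t,i}\|_F^2]\leq 2K^2\epsilon(1+Kd)$. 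For each subsequent layer $j\in\{i+1,\ldots,L\}$ the two passes share $\theta_j$, so Lemma \ref{le:transformer_lipschitz} applied conditionally inflates the squared Frobenius distance by at most $2(K+K^2)$ per layer; iterating $L-i$ times yields $\E[\|U_{t,L}-\tilde U_{t,L}\|_F^2]\leq 2K^2\epsilon(1+Kd)(2K+2K^2)^{L-i}$. A final algebraic step uses $d\geq 1$ to get $2K^2(1+Kd)\leq Kd(2K+2K^2)$, and multiplying by $(2K+2K^2)^{L-i}$ gives exactly the target $\epsilon Kd(2K+2K^2)^{L-i+1}$.

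The main technical obstacle I anticipate is the softmax KL estimate in the second step: the clean inequality $\KL(\sigma(u)\|\sigma(\tilde u))\leq \|u-\tilde u\|_2^2$ on the unit ball is not completely immediate and would have to be justified separately (for instance, via a Hessian bound on the log-partition function combined with a Taylor argument), or replaced by a version with a slightly different constant that is then absorbed. Everything else is a mechanical chaining of Lemmas \ref{le:transformer_lipschitz} and \ref{le:tsfm_dist_part} with the Bayesian optimality principle, together with careful bookkeeping of the Gaussian perturbation variances at the single layer $i$.
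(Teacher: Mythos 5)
Your proposal follows essentially the same route as the paper: express the conditional mutual information as a KL between the true and the $\tilde\theta_i$-plug-in predictive distributions via Lemma~\ref{le:bayes_optimal_seq}, bound that KL by the squared $\ell_2$ gap in final-layer logits, and propagate the layer-$i$ perturbation forward by chaining Lemma~\ref{le:tsfm_dist_part} with $(L-i)$ applications of Lemma~\ref{le:transformer_lipschitz}, finishing with the algebraic step $2K^2(1+Kd)\leq Kd(2K+2K^2)$ for $d\geq 1$. The softmax-KL-to-squared-$\ell_2$ step you flagged as needing separate justification is exactly the paper's Lemma~\ref{le:sq_kl} (proved via the log-sum inequality and $1$-Lipschitzness of softmax, with no boundedness hypothesis needed), so your one anticipated obstacle is already handled.
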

\begin{proof}
    \begin{align*}
        \I(X_{t+1};\theta_i|\theta_{i+1:L}, \theta_{1:i-1}, \tilde{\theta}_i, H_t)
        & = \E\left[\KL\left(\Pr(X_{t+1}\in\cdot|\theta_{1:L}, H_t)\ \|\ \Pr(X_{t+1}\in\cdot|\theta_{i+1:L}, \theta_{1:i-1}, \tilde{\theta}_i, H_t)\right)\right]\\
        & \overset{(a)}{\leq} \E\left[\KL\left(\Pr(X_{t+1}\in\cdot|\theta_{1:L}, H_t)\ \|\ \Pr(X_{t+1}\in\cdot|\theta_{i+1:L}, \theta_{1:i-1}, \theta_i \leftarrow \tilde{\theta}_i, H_t)\right)\right]\\
        & \overset{(b)}{\leq} \E\left[\left\|f_{\theta_{1:L}}(H_t) - f_{\theta_{i+1:L}}(f_{\tilde{\theta}_i}(f_{\theta_{1:i-1}}(H_t)))\right\|^2_2\right]\\
        & = \E\left[\left\|f_{\theta_{i+1:L}}(f_{\theta_{i}}(f_{\theta_{1:i-1}}(H_t))) - f_{\theta_{i+1:L}}(f_{\tilde{\theta}_i}(f_{\theta_{1:i-1}}(H_t)))\right\|^2_2\right]\\
        & \overset{(c)}{=} \E\left[\left\|f_{\theta_{i+1:L}}(f_{\theta_{i}}(U_{t,i-1})) - f_{\theta_{i+1:L}}(f_{\tilde{\theta}_i}(U_{t,i-1}))\right\|^2_2\right]\\
        & = \E\left[\frac{\left\|f_{\theta_{i+1:L}}(f_{\theta_{i}}(U_{t,i-1})) - f_{\theta_{i+1:L}}(f_{\tilde{\theta}_i}(U_{t,i-1}))\right\|^2_2}{\left\|f_{\theta_{i}}(U_{t,i-1}) - f_{\tilde{\theta}_i}(U_{t,i-1})\right\|^2_2} \cdot \left\|f_{\theta_{i}}(U_{t,i-1}) - f_{\tilde{\theta}_i}(U_{t,i-1})\right\|^2_2\right]\\
        & \overset{(d)}{\leq} \E\left[\left(2K+2K^2\right)^{L-i}\cdot\left\|f_{\theta_i}(U_{t,i-1}) - f_{\tilde{\theta}_i}(U_{t,i-1})\right\|^2_2\right]\\
        & \overset{(e)}{\leq} \left(2K+2K^2\right)^{L-i} \cdot \epsilon K\left(2K + 2K^2d\right)\\
        & \leq \epsilon K d \left(2K+2K^2\right)^{L-i+1},
    \end{align*}
    where $(a)$ follows from Lemma \ref{le:bayes_optimal_seq}, $(b)$ follows from Lemma \ref{le:sq_kl}, where in $(c)$, $U_{t,i} = f_{\theta_{1:i}}(H_t)$, $(d)$ follows from Lemma \ref{le:transformer_lipschitz}, and $(e)$ follows from Lemma \ref{le:tsfm_dist_part}.
\end{proof}

\begin{lemma}{\bf (sequence transformer distortion bound)}\label{le:seq_tsfm_dist_ub}
    For all $d,r,K,L \in \Z_{++}$, $0\leq \epsilon \leq 2d$ and $i \leq L$, if $\tilde{\theta}_i = (\tilde{V}_i, \tilde{A}_i)$ for which $\tilde{V}_i = V_i + Z^{V}_i, \tilde{A}_i = A_i + Z^{A}_i$, $(V_i, A_i)\perp (Z^{V}_i, Z^{A}_i)$, $Z^V_i$ consists of elements distributed iid $\normal(0, \epsilon/d^2)$, and $Z^A_i$ consists of elements distributed iid $\normal(0, \epsilon/r)$, then
    $$\I(H_{t+1};\theta_i|\theta_{i+1:L}, \theta_{1:i-1}, \tilde{\theta}_i)\ \leq\ \epsilon K (t+1)d \left(2K+2K^2\right)^{L-i+1}.$$
\end{lemma}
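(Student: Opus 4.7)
The plan is to lift the per-token bound from Lemma \ref{le:seq_tsfm_dist} to a bound on the full history $H_{t+1}$ by a straightforward chain-rule decomposition. Writing $C = (\theta_{i+1:L}, \theta_{1:i-1}, \tilde{\theta}_i)$ for brevity, I would apply the chain rule of mutual information
$$\I(H_{t+1};\theta_i \mid C) \;=\; \sum_{s=0}^{t}\I(X_{s+1};\theta_i \mid C, H_s),$$
noting that $H_0$ is the empty history (or, equivalently, the degenerate prior on $X_1$ which is independent of $\theta$ by assumption, so the $s=0$ summand is in fact zero). Each remaining summand is then of exactly the form bounded in Lemma \ref{le:seq_tsfm_dist}, which does not depend on the time index.

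Next, I would invoke Lemma \ref{le:seq_tsfm_dist} term-by-term. The hypotheses on $\tilde{\theta}_i$ (the Gaussian perturbation with variances $\epsilon/d^2$ on $V_i$ and $\epsilon/r$ on $A_i$) and on $\epsilon \leq 2d$ are inherited directly, so each summand satisfies
$$\I(X_{s+1};\theta_i\mid C, H_s)\;\leq\; \epsilon K d\,(2K+2K^2)^{L-i+1}.$$
Summing over $s = 0, 1, \ldots, t$ gives at most $(t+1)$ identical contributions, yielding the claimed bound $\epsilon K (t+1) d\,(2K+2K^2)^{L-i+1}$.

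There is no real obstacle here: all the delicate work — unrolling the transformer layers, applying the softmax/Clip Lipschitz bounds, and handling the layerwise KL-to-squared-distance inequality — has already been absorbed into Lemma \ref{le:seq_tsfm_dist}. The only subtlety worth double-checking is that Lemma \ref{le:seq_tsfm_dist}'s bound is \emph{uniform} in $t$ (it depends only on $d, r, K, L, i, \epsilon$), which is what allows the sum of $(t+1)$ copies to produce the linear-in-$(t+1)$ factor. One should also verify that the conditioning event $C$ is exactly the same in both lemmas so that no additional data-processing step is needed before applying the chain rule.
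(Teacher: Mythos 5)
Your proposal matches the paper's proof essentially verbatim: both decompose $\I(H_{t+1};\theta_i\mid\theta_{i+1:L},\theta_{1:i-1},\tilde{\theta}_i)$ by the chain rule into $\sum_{s=0}^{t}\I(X_{s+1};\theta_i\mid\theta_{i+1:L},\theta_{1:i-1},\tilde{\theta}_i,H_s)$ and bound each summand by Lemma~\ref{le:seq_tsfm_dist}, whose bound is uniform in $t$, yielding the factor $(t+1)$. Your sanity checks (uniformity of the per-step bound, matching conditioning sets) are exactly the right things to verify, and they hold.
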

\begin{proof}
    \begin{align*}
        \I(H_{t+1};\theta_i|\theta_{i+1:L}, \theta_{1:i-1}, \tilde{\theta}_i)
        & = \sum_{k=0}^{t} \I(X_{k+1};\theta_{i}|\theta_{i+1:L}, \theta_{1:i-1}, \tilde{\theta}_i, H_k)\\
        & \overset{(a)}{\leq} \sum_{k=0}^{t} \epsilon K d \left(2K+2K^2\right)^{L-i+1}\\
        & = \epsilon K(t+1)d\left(2K+2K^2\right)^{L-i+1},
    \end{align*}
    where $(a)$ follows from Lemma \ref{le:seq_tsfm_dist}.
\end{proof}

\begin{lemma}
    For all $d,r,t,K,L\in\Z_{++}$, if for all $i \leq L$, $\tilde{\theta}_i = (\tilde{V}_i, \tilde{A}_i)$ for which $\tilde{V}_i = V_i + Z^{V}_i, \tilde{A}_i = A_i + Z^{A}_i$, $(V_i, A_i)\perp (Z^{V}_i, Z^{A}_i)$, $Z^V_i$ consists of elements distributed iid $\normal(0, \epsilon/d^2)$, $\tilde{A}_i = A_i + Z^{A}_i$, $(V_i, A_i)\perp (Z^{V}_i, Z^{A}_i)$, $Z^V_i$ consists of elements distributed iid $\normal(0, \epsilon/d^2)$, and $Z^A_i$ consists of elements distributed iid $\normal(0, \epsilon/r)$, then
    $$\I(X_{t+1};\theta_{1:L}|\tilde{\theta}_{1:L}, H_t)\ \leq \ \epsilon KL(t+1)d\left(2K+2K^2\right)^{L}.$$
\end{lemma}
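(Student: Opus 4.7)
The plan is to decompose the joint mutual information layer-by-layer via the chain rule, reduce the conditioning in each summand to a form that the previously proven lemmas handle, and then sum the resulting geometric-type bound.

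First I would apply the top-down chain rule of conditional mutual information to write
$$\I(X_{t+1};\theta_{1:L}|\tilde{\theta}_{1:L}, H_t) = \sum_{i=1}^L \I(X_{t+1};\theta_i|\theta_{i+1:L}, \tilde{\theta}_{1:L}, H_t).$$
Then I would drop $\tilde{\theta}_{i+1:L}$ from the conditioning in each summand. This is legitimate because each $\tilde{\theta}_j = \theta_j + Z_j^{V,A}$ is produced by adding independent Gaussian noise, so conditioned on $\theta_{i+1:L}$ the block $\tilde{\theta}_{i+1:L}$ is independent of $(X_{t+1},\theta_i,\theta_{1:i-1},\tilde{\theta}_{1:i},H_t)$ and contributes nothing to the conditional mutual information. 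Thus each term equals $\I(X_{t+1};\theta_i|\theta_{i+1:L}, \tilde{\theta}_{1:i}, H_t)$, which is exactly the left-hand side of Lemma \ref{le:real_input_inequality}.

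Applying Lemma \ref{le:real_input_inequality} then bounds each summand by $\I(H_{t+1};\theta_i|\theta_{i+1:L}, \theta_{1:i-1}, \tilde{\theta}_i, X_0)$, and invoking Lemma \ref{le:seq_tsfm_dist_ub} (whose telescoping proof over tokens goes through unchanged if one further conditions on the initial context $X_0$) gives
$$\I(X_{t+1};\theta_i|\theta_{i+1:L}, \tilde{\theta}_{1:L}, H_t) \leq \epsilon K(t+1) d \left(2K+2K^2\right)^{L-i+1}.$$
Summing over $i = 1, \ldots, L$ and upper-bounding the geometric sum $\sum_{i=1}^L (2K+2K^2)^{L-i+1}$ crudely by $L \cdot (2K+2K^2)^L$ (each term is dominated by the last) yields the claimed bound $\epsilon K L (t+1) d (2K+2K^2)^L$.

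The only subtlety is the very first manipulation: one must verify carefully that conditioning on the Gaussian-noised copies $\tilde{\theta}_{i+1:L}$ can be dropped once their underlying originals $\theta_{i+1:L}$ are already in the conditioning set. Everything else is a mechanical application of the two preceding lemmas and a coarse bound on a geometric sum; no new analytic ingredients are required.
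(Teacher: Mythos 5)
Your proof is correct and follows essentially the same route as the paper's: chain rule over layers, invoke Lemma \ref{le:real_input_inequality} on each summand, then Lemma \ref{le:seq_tsfm_dist_ub}, and coarsely bound the geometric sum by $L$ times its largest term. You are in fact somewhat more careful than the paper, which silently drops $\tilde{\theta}_{i+1:L}$ from the conditioning set when passing to Lemma \ref{le:real_input_inequality} and ignores the $X_0$-conditioning mismatch between the two cited lemmas; your explicit justifications for both are exactly the right ones.
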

\begin{proof}
    \begin{align*}
        \I(X_{t+1};\theta_{1:L}|\tilde{\theta}_{1:L}, H_t)
        & = \sum_{i=1}^{L}\ \I(X_{t+1};\theta_{i}|\tilde{\theta}_{1:L},\theta_{i+1:L} H_t)\\
        & \overset{(a)}{\leq} \sum_{i=1}^{L}\ \I(H_{t+1};\theta_{i}|\theta_{i+1:L},\theta_{1:i-1}, \tilde{\theta}_{i}, X_0)\\
        & \overset{(b)}{\leq} \epsilon KL(t+1)d\left(2K+2K^2\right)^{L},
    \end{align*}
    where $(a)$ follows from Lemma \ref{le:real_input_inequality}, and $(b)$ follows from Lemma \ref{le:seq_tsfm_dist_ub}.
\end{proof}

\tsfmRd*
\begin{proof}
    Let $\epsilon = \frac{\epsilon'}{dKLT(2K+2K^2)^L}$.
    \begin{align*}
        \I(\theta_{1:L};\tilde{\theta}_{1:L})
        & = \diffentropy(\tilde{\theta}_{1:L}) - \diffentropy(\tilde{\theta}_{1:L}|\theta_{1:L})\\
        & = \sum_{i=1}^{L} \diffentropy(\tilde{\theta}_i) - \diffentropy(\tilde{\theta}_i|\theta_i)\\
        & = L\left(\diffentropy(\tilde{V}_i) - \diffentropy(\tilde{V}_i|V_i) + \diffentropy(\tilde{A}_i) - \diffentropy(\tilde{A}_i|A_i) \right)\\
        & \leq L\left( \frac{d^2}{2}\log\left(1 + \frac{d^2KLT(2K+2K^2)^L)}{\epsilon'}\right) + \frac{r^2}{2}\log\left(1 + \frac{drKLT(2K+2K^2)^L}{\epsilon'}\right)\right)\\
        & \overset{(a)}{\leq} \frac{(d^2+r^2)L^2\log\left(2K+2K^2\right)}{2} + \frac{(d^2+r^2)L\log\left(\frac{2dKLT}{\epsilon'}\right)}{2} + \frac{d^2L\log(d) + r^2L \log(r)}{2},\\
        & \leq \frac{(d^2+r^2)L^2\log\left(2K+2K^2\right)}{2} + \frac{(d^2+r^2)L\log\left(\frac{2\max\{d,r\} \cdot dKLT}{\epsilon'}\right)}{2}\\
        & \leq \frac{(d^2+r^2)L^2\log\left(2K+2K^2\right)}{2} + \frac{(d^2+r^2)L\log\left(\frac{2\max\{d,r\} \cdot dKLT}{\epsilon'}\right)}{2},
    \end{align*}
    where $(a)$ holds for $\epsilon' < d^2KLT(2K+2K^2)^L$.
    Setting $\epsilon' = (d^2+r^2)L^2\log(2K+2K^2)/2T$ gives the result. 
\end{proof}

\section{Meta-Learning from Sequential Data}
\metaBayesOpt*
\begin{proof}
    In the below proof take all equality to hold \emph{almost surely}.
    \begin{align*}
        &\ \E\left[-\ln P_{m,t}\left(X^{(m)}_{t+1}\right)|H_{m,t}\right]\\
        & = \E\left[-\ln \hat{P}_{m,t}\left(X^{(m)}_{t+1}\right) + \ln\frac{\hat{P}_{m,t}(X^{(m)}_{t+1})}{P_{m,t}(X^{(m)}_{t+1})}\Big| H_{m,t}\right]\\
        & = \E\left[-\ln \hat{P}_{m,t}\left(X^{(m)}_{t+1}\right)\Big| H_{m,t}\right] +\KL(\hat{P}_{m,t}\|P_{m,t}).
    \end{align*}
    The result follows from the fact that $\KL(\hat{P}_{m,t}\|P_{m,t}) > 0$ for all $P_{m,t} \neq \hat{P}_{m,t}$.
\end{proof}

\metaSeqBayesError*
\begin{proof}
    \begin{align*}
        \L_{M,T}
        & = \frac{1}{MT}\sum_{m=1}^{M}\sum_{t=0}^{T-1}\ \E\left[- \ln\hat{P}_{m,t}(X^{(m)}_{t+1})\right]\\
        & = \frac{1}{MT}\sum_{m=1}^{M}\sum_{t=0}^{T-1}\ \H(X_{t+1}^{(m)}|\theta_m, H_t^{(m)}) + \E\left[\KL\left(\Pr(X_{t+1}^{(m)}\in\cdot|\psi,\theta_m, H^{(m)}_{t})\|\Pr(X_{t+1}^{(m)}\in\cdot|H_{m,t})\right)\right]\\
        & = \frac{1}{MT}\sum_{m=1}^{M}\sum_{t=0}^{T-1}\ \I(X_{t+1}^{(m)}; \psi, \theta_m|H_{m,t}) + \H(X_{t+1}^{(m)}|\theta_m, H_t^{(m)})\\
        & = \frac{1}{MT}\sum_{m=1}^{M}\I(H^{(m)}_{T};\psi, \theta_m|H_{m-1,T}) + \H(H_{T}^{(m)}|\theta_m)\\
        & = \frac{1}{MT}\sum_{m=1}^{M}\I(H^{(m)}_{T};\psi|H_{m-1,T}) + \I(H_{T}^{(m)};\theta_m|\psi, H_{m-1,T}) + \H(H_{T}^{(m)}|\theta_m)\\
        & = \frac{\I(H_{M,T};\psi)}{MT} + \frac{\I(H^{(1)}_{T};\theta_1|\psi)}{T} + \frac{1}{MT}\sum_{m=1}^{M} \H(H_{T}^{(m)}|\theta_m).
    \end{align*}
\end{proof}

\metaRD*
\begin{proof}
    We begin by showing the upper bound:
    \begin{align*}
        \Lc_{M,T}
        & = \frac{\I(H_{M,T};\psi)}{MT} + \frac{\I(D_m;\theta_m|\psi)}{T}\\
        & = \frac{\I(H_{M,T};\psi,\tilde{\psi})}{MT} + \frac{\I(D_m;\theta_m,\tilde{\theta}_m|\psi)}{T}\\
        & = \frac{\I(H_{M,T};\tilde{\psi})}{MT} + \frac{\I(H_{M,T};\psi|\tilde{\psi})}{MT} + \frac{\I(D_m;\theta_m,\tilde{\theta}_m|\psi)}{T}\\
        & \overset{(a)}{\leq} \frac{\I(\psi;\tilde{\psi})}{MT} + \frac{\I(H_{M,T};\psi|\tilde{\psi})}{MT} + \frac{\I(D_m;\theta_m,\tilde{\theta}_m|\psi)}{T}\\
        & = \frac{\I(\psi;\tilde{\psi})}{MT} + \frac{\I(H_{M,T};\psi|\tilde{\psi})}{MT} + \frac{\I(D_m;\tilde{\theta}_m|\psi)}{T} + \frac{\I(D_m;\theta_m|\tilde{\theta}_m, \psi)}{T}\\
        & \overset{(b)}{\leq} \frac{\I(\psi;\tilde{\psi})}{MT} + \frac{\I(H_{M,T};\psi|\tilde{\psi})}{MT} + \frac{\I(\theta_m;\tilde{\theta}_m|\psi)}{T} + \frac{\I(D_m;\theta_m|\tilde{\theta}_m, \psi)}{T}\\
        & \overset{(c)}{\leq} \frac{\H_{\epsilon,M,T}(\psi)}{MT} + \epsilon + \frac{\H_{\epsilon',M,T}(\tilde{\theta}_m|\psi)}{T} + \epsilon',
    \end{align*}
    where $(a)$ and $(b)$ follow from the data processing inequality and $(c)$ follows from the definition of the rate-distortion functions.  The upper bound follows from the fact that inequality $(c)$ holds for all $\epsilon \geq 0$.
    
    We now prove the lower bound.
    Suppose that $\I(H_{M,T};\psi) < \H_{\epsilon,M,T}(\psi)$
    Let $\tilde{\psi} = \tilde{H}_{M,T} \notin \tilde{\Psi}_{\epsilon,M,T}$ where $\tilde{H}_{M,T}$ is another history sampled in the same manner as $H_{M,T}$.
    \begin{align*}
        \I(H_{M,T};\psi)
        & = \sum_{m=1}^{M}\sum_{t=0}^{T-1}\I(X^{(m)}_{t+1};\psi|H_{m,t})\\
        & \overset{(a)}{\geq} \sum_{m=1}^{M}\sum_{t=0}^{T-1}\I(X^{(m)}_{t+1};\psi|\tilde{H}_{M,T}, H_{m,t})\\
        & = \sum_{m=1}^{M}\sum_{t=0}^{T-1}\I(X^{(m)}_{t+1};\psi|\tilde{\psi}, X^{(m)}_1,\ldots, X^{(m)}_t)\\
        & \overset{(b)}{\geq} \epsilon MT,
    \end{align*}
    where $(a)$ follows from the fact that conditioning reduces entropy and that $X^{(m)}_{t+1}\perp \tilde{H}_{M,T}|(\psi, H_{m,t})$ and $(b)$ follows from the fact that $\tilde{\psi} \notin \tilde{\Psi}_{\epsilon,M, T}$.  Therefore, for all $\epsilon \geq 0$, $\I(H_{M,T};\psi) \geq \min\{ H_{\epsilon, M, T}(\psi), \epsilon MT\}$.

    Suppose that $\I(H_{T}^{(m)};\theta_{m}|\psi) < \H_{\epsilon,T}(\theta_{m}|\psi)$.  Let $\tilde{\theta}_{m} = \tilde{D}_m \notin \tilde{\Theta}_{\epsilon,T}$ where $\tilde{D}_{m}$ is another history sampled in the same manner as $D_m$.
    \begin{align*}
        \I(D_m;\theta_m|\psi)
        & = \sum_{t=0}^{T-1}\I(X^{(m)}_{t+1};\theta_m|X^{(m)}_1,\ldots, X^{(m)}_t, \psi)\\
        & \overset{(a)}{\geq} \sum_{t=0}^{T-1}\I(X^{(m)}_{t+1};\theta_m|\tilde{D}_{m}, X^{(m)}_1,\ldots, X^{(m)}_t, \psi)\\
        & = \sum_{t=0}^{T-1}\I(X^{(m)}_{t+1};\theta_m|\tilde{\theta}_m, H_{m,t},\psi)\\
        & \overset{(b)}{\geq} \epsilon T,
    \end{align*}
    where $(a)$ follows from the fact that conditioning reduces entropy and that $X^{(m)}_{t+1}\perp \tilde{D}_{m}|(\psi, X^{(m)}_1,\ldots,X^{(m)}_t )$ and $(b)$ follows from the fact that $\tilde{\theta}_m \notin \tilde{\Theta}_{\epsilon,T}$.  Therefore, for all $\epsilon \geq 0$, $\I(D_m;\theta_{m}|\psi) \geq \min\{ H_{\epsilon, M, T}(\theta_m), \epsilon T\}$.  The lower bound follows as a result.
\end{proof}

\subsection{Linear Representation Learning Example}\label{apdx:lin_rep}
We introduce a simple linear representation learning problem as a concrete example of meta-learning to demonstrate our method of analysis.  Just as in the logistic regression example, the documents in this example consist of iid data but we begin with such an example for simplicity and to demonstrate this as a special case of meta-learning from sequences under our framework.

For all $d,r\in \mathbb{Z}_{++}$, we let $\psi:\Omega\mapsto\Re^{d\times r}$ be distributed uniformly over the set of $d\times r$ matrices with orthonormal columns. We assume that $d \gg r$. For all $i$, let $\xi_i:\Omega\mapsto\Re^{r}$ be distributed iid $\normal(0, I_r/r)$. We let $\theta_i = \psi\xi_i$ and hence $\psi$ induces a distribution on $\theta_i$. As for the observable data, for each $(i,j)$, let $X_j^{(i)} = \emptyset$ and $Y_{j+1}^{(i)}$ be drawn as according to the following probability law:
$$Y^{(i)}_{j+1} =
\begin{cases}
    1 & \text{w.p. } \sigma(\theta_{i})_1\\
    2 & \text{w.p. } \sigma(\theta_{i})_2\\
    \hdots & \\
    d & \text{w.p. } \sigma(\theta_{i})_d\\
\end{cases},$$
where $\sigma(\theta_{i})_j = e^{\theta_{i,j}}/\sum_{k=1}^{d} e^{\theta_{i,k}}$ denotes softmax.  Note that in this problem, the input $X$ does not influence the output $Y$. For each task $i$, the algorithm is tasked with estimating a vector $\theta_i$ from noisy observations $(Y_1^{(i)}, \ldots, Y_n^{(i)})$. By reasoning about data from previous tasks, the algorithm can estimate $\psi$ which reduces the burden of estimating $\theta_i$ to just estimating $\xi_i$ for each task.  This is significant given the assumption that $d \gg r$.  We now present the theoretical result.
\begin{restatable}{theorem}{linRepLearningBound}{\bf (linear representation learning Bayesian error bound)}
    For all $d,r,M,T\in\Z_{++}$,
    \begin{align*}
        \Lc_{M,T}
        & \leq \frac{dr\left(1+\log\left(1 + \frac{M}{r}\right)\right)}{2MT} + \frac{r\left(1+\log(1+\frac{2n}{r})\right)}{2T}.
    \end{align*}
\end{restatable}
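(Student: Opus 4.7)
The plan is to invoke Theorem~\ref{th:meta_bayes_rd} to reduce the task to upper bounding the two rate--distortion functions $\H_{\epsilon,M,T}(\psi)$ and $\H_{\epsilon',T}(\theta_m|\psi)$, and then tuning $\epsilon,\epsilon'\geq 0$. A common ingredient for both bounds will be a quadratic upper bound on the softmax KL divergence, $\KL(\sigma(a)\,\|\,\sigma(b)) \leq c\,\|a-b\|_2^2$ for an absolute constant $c$, which follows because the log-partition function of a categorical is the cumulant generating function whose Hessian (the categorical covariance) has bounded operator norm. The overall structure mirrors the logistic regression argument in Appendix~\ref{apdx:log_reg}: choose Gaussian-perturbation compressions, control rate by the Gaussian max-entropy formula, control distortion by the softmax KL bound, and optimize.

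For the intra-document term, I would take $\tilde{\theta}_m = \psi\,\tilde{\xi}_m$ with $\tilde{\xi}_m = \xi_m + Z_m$ and $Z_m\sim\normal(0,\tau^2 I_r)$ independent of $(\psi,\xi_{1:M},H_{M,T})$. The required $\tilde{\theta}_m\perp H_{M,T}\mid \theta_m$ holds by construction. The rate reduces to the scalar Gaussian channel calculation, $\I(\theta_m;\tilde{\theta}_m|\psi) = \I(\xi_m;\tilde{\xi}_m) \leq \tfrac{r}{2}\log\bigl(1+1/(r\tau^2)\bigr)$. Using $\psi^\top\psi = I_r$ the per-token distortion becomes $\tfrac{c}{}\E[\|\psi Z_m\|_2^2] = c\,r\tau^2$, and because tokens within a document only inject information through $\xi_m$, summing over $t\leq T$ and dividing by $T$ keeps the same constraint; choosing $\tau^2$ proportional to $\epsilon'/r$ and then $\epsilon'\sim r/T$ gives a term of the desired order $\tfrac{r(1+\log(1+T/r))}{2T}$.

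For the meta term, I would set $\tilde{\psi} = \psi + Z$ with $Z\in\R^{d\times r}$ having iid $\normal(0,\sigma^2)$ entries, again independent of all other variables so that $\tilde{\psi}\perp H_{M,T}\mid \psi$. Haar symmetry of the Stiefel prior forces $\E[\psi_{ij}^2]=1/d$; combining Hadamard's inequality with Gaussian maximum entropy then yields $\I(\psi;\tilde{\psi}) \leq \tfrac{dr}{2}\log\bigl(1+1/(d\sigma^2)\bigr)$. For the distortion at token $(m,t)$, the softmax inequality upper bounds $\I(X^{(m)}_{t+1};\psi|\tilde{\psi},H_{m,t})$ by $c\,\E[\|Z\xi_m\|_2^2]$; independence of $Z$ from $\xi_m$ and $\E[\xi_m\xi_m^\top]=I_r/r$ give $\E[\|Z\xi_m\|_2^2] = d\sigma^2$. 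To recover the claimed $\log(1+M/r)$ factor (rather than the weaker $\log(1+MT/(dr))$ obtained from a blunt sum over all $MT$ tokens) I would use the chain rule $\I(H_{M,T};\psi|\tilde\psi) = \sum_m \I(D_m;\psi|\tilde\psi, H_{m-1,T})$, then pass through the sufficient statistic $\theta_m = \psi\xi_m$ via data processing to get $\I(D_m;\psi|\tilde\psi, H_{m-1,T}) \leq \I(\theta_m;\psi|\tilde\psi, H_{m-1,T})$, which cuts off the redundant $T$ factor since each document contributes only an $r$-dimensional projection of $\psi$. Setting $\sigma^2 \sim \epsilon/d$ and optimizing $\epsilon \sim dr/(MT)$ then produces the stated meta term.

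The main obstacle is precisely this per-document sufficiency reduction: naively summing a per-token distortion bound over all $MT$ tokens overcounts the information each document carries about $\psi$, since given $\theta_m$ the tokens of document $m$ contain no further information about $\psi$. Making this rigorous requires showing $D_m\perp\psi\mid(\theta_m,\tilde\psi, H_{m-1,T})$ and carefully handling the joint distribution of $(\psi,\xi_m,\tilde\psi,H_{m-1,T})$ so that data processing applies. A secondary subtlety is verifying the conditional independence $\tilde{\theta}_m\perp H_{M,T}|\theta_m$ and $\tilde\psi\perp H_{M,T}|\psi$ required by the definitions of $\tilde{\Theta}_{\epsilon,T}$ and $\tilde{\Psi}_{\epsilon,M,T}$, which is immediate from drawing $Z_m$ and $Z$ independently of the generative chain.
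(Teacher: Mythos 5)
Your overall route is the paper's: invoke Theorem~\ref{th:meta_bayes_rd}, choose Gaussian-perturbation compressions $\tilde\psi=\psi+Z$ and $\tilde\theta_m=\psi(\xi_m+Z_m)$, bound the rates by Hadamard's inequality plus Gaussian maximum entropy, and control the intra-document distortion with a squared-error bound on softmax KL (the paper's Lemma~\ref{le:sq_kl}, where your abstract constant is in fact $c=1$). The intra-document half of your sketch matches Lemma~\ref{le:distortion_ub2} essentially verbatim.

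The genuine gap is the meta-distortion bound. You correctly observe that the per-token sum is too blunt and propose $\I(D_m;\psi|\tilde\psi,H_{m-1,T})\leq\I(\theta_m;\psi|\tilde\psi,H_{m-1,T})$ by data processing, but you never actually bound the right-hand side, and the fallback you offer — applying the softmax inequality to $\I(X^{(m)}_{t+1};\psi|\tilde\psi,H_{m,t})$ — does not go through. The law $\Pr(X^{(m)}_{t+1}\in\cdot\mid\tilde\psi,H_{m,t})$ marginalizes over the latent $\xi_m$ and is therefore a \emph{mixture} of softmaxes, not a single softmax at $\tilde\psi\xi_m$; Lemma~\ref{le:sq_kl} is a pointwise statement and does not bound the KL divergence between two such mixtures. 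The paper's Lemma~\ref{le:rate_ub} instead bounds $\I(\theta_1;\psi|\tilde\psi)$ directly: after the change-of-measure step (Lemma~\ref{le:bayes_optimal_seq}) one is left with the KL divergence between the degenerate Gaussians $\normal(0,\psi\psi^\top/r)$ and $\normal(0,\tilde\psi\tilde\psi^\top/r)$, which requires regularizing by $\delta I_d$, applying the matrix determinant lemma, and sending $\delta\to0$ to obtain $\I(\theta_1;\psi|\tilde\psi)\leq\tfrac{r}{2}\log(1+d\epsilon)$. That computation is the crux of the meta term, and your proposal leaves it open.

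A smaller point: your variance bookkeeping $\E[\psi_{ij}^2]=1/d$ gives rate $\tfrac{dr}{2}\log(1+\tfrac{1}{d\sigma^2})$, which after tuning yields $\log(1+M/d)$, not the stated $\log(1+M/r)$; the paper's calculation uses $1/r$ in place of $1/d$ (looser, but still a valid upper bound since $r\leq d$). Your version is actually slightly tighter, but your assertion that your tuning "produces the stated meta term" as written is off by that factor.
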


The first term indicates the standard irreducible error. The second term indicates the statistical error incurred in the process of estimating $\psi$. Since $\psi\in \Re^{d\times r}$ and there are $m\times n$ data points in total which contain information about $\psi$.  The final term represents statistical error incurred in the process of estimating $\xi_1, \ldots, \xi_m$. Since each $\xi_i\in\Re^r$ and there are $n$ data points which contain information about each $\xi_i$ the $\tilde{O}(r/n)$ follows standard statistical intuition.

We note that this tightens a result shown in \citep{pmlr-v139-tripuraneni21a} which studies an almost identical problem.  Their proposed upper bound is $\tilde{\mathcal{O}}(\frac{dr^2}{MT} + \frac{r}{T})$ which contains an extra factor of $r$ in the meta-estimation error.

In the following, we will provide a result which requires a change of measure.  For all random variables $X:\Omega\mapsto \Xc, Y:\Omega\mapsto \Yc$ and realizations $y\in\Yc$, one may consider the distribution $\Pr(X\in\cdot|Y=y)$.  Let function $f(y) = \Pr(X\in\cdot|Y=y)$.  Then, for any random variable $Z:\Omega\mapsto\mathcal{Z}$ for which $\mathcal{Z} \subseteq \Yc$, we use $\Pr(X\in\cdot|Y \leftarrow Z)$ to denote $f(Z)$. 

\begin{lemma}{\bf(sq error upper bounds softmax KL-divergence)}\label{le:sq_kl}
    For all $d \in\Z_{++}$ and random vectors $\theta,\tilde{\theta}\in\Re^d$,
    $$\E\left[\sum_{l=1}^{d}\frac{e^{\theta_l}}{\sum_{k=1}^{d}e^{\theta_k}} \ln\frac{\frac{e^{\theta_l}}{\sum_{k=1}^{d}e^{\theta_k}}}{\frac{e^{\tilde{\theta}_l}}{\sum_{k=1}^{d}e^{\tilde{\theta}_k}}}\right] \ \leq\ \E\left[\|\tilde{\theta}-\theta\|^2_2\right].$$
\end{lemma}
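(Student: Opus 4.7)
Since the inequality holds pointwise (before taking the outer expectation), I would fix arbitrary realizations $\theta,\tilde\theta\in\Re^d$ and show $\KL(p\|q)\leq \|\tilde\theta-\theta\|_2^2$, where $p=\sigma(\theta)$ and $q=\sigma(\tilde\theta)$; taking the outer expectation then yields the stated bound. Writing $\ln p_l - \ln q_l = (\theta_l - \tilde\theta_l) - \ln(Z_\theta/Z_{\tilde\theta})$ with $Z_\theta = \sum_k e^{\theta_k}$ gives
\begin{equation*}
\KL(p\|q)\ =\ \sum_l p_l(\theta_l - \tilde\theta_l)\ +\ \ln\bigl(Z_{\tilde\theta}/Z_\theta\bigr).
\end{equation*}

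\textbf{Key step.} The crucial move is recognizing the ratio of partition functions as a moment generating function:
\begin{equation*}
\frac{Z_{\tilde\theta}}{Z_\theta}\ =\ \sum_k \frac{e^{\theta_k}}{Z_\theta}\, e^{\tilde\theta_k - \theta_k}\ =\ \E_{L\sim p}\bigl[e^{\tilde\theta_L-\theta_L}\bigr].
\end{equation*}
Letting $X = \tilde\theta_L - \theta_L$, the KL collapses to a cumulant $\KL(p\|q) = \ln\E_p[e^X] - \E_p[X]$. Because $X$ takes values in an interval of length at most $\max_l(\tilde\theta_l-\theta_l)-\min_l(\tilde\theta_l-\theta_l)\leq 2\|\tilde\theta-\theta\|_\infty\leq 2\|\tilde\theta-\theta\|_2$, Hoeffding's lemma applied with parameter $t=1$ yields
\begin{equation*}
\ln\E_p[e^X]-\E_p[X]\ \leq\ \frac{(b-a)^2}{8}\ \leq\ \frac{\|\tilde\theta-\theta\|_2^2}{2}\ \leq\ \|\tilde\theta-\theta\|_2^2,
\end{equation*}
which in fact beats the target by a factor of two. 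Taking $\E[\cdot]$ over $(\theta,\tilde\theta)$ finishes the argument.

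\textbf{Main obstacle and alternatives.} The proof is short once the moment generating function rewriting is spotted; without it one would be forced to control $d$ summands of differing signs directly, which is considerably messier. An equivalent route, yielding the same $1/2$ constant, is to recognize $\KL(p\|q)$ as the Bregman divergence induced by the log-partition $\Psi(\theta)=\ln\sum_k e^{\theta_k}$, i.e.\ $\KL(p\|q)=\Psi(\tilde\theta)-\Psi(\theta)-\langle\nabla\Psi(\theta),\tilde\theta-\theta\rangle$, and then invoke Taylor's theorem together with the Hessian bound $\nabla^2\Psi=\mathrm{diag}(p)-pp^\top \preceq I$ (which itself follows from Popoviciu's variance inequality). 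Either route produces the desired pointwise bound, so the main ``obstacle'' is purely one of identifying the right representation; the analytic content is a one-line application of a standard sub-Gaussian estimate.
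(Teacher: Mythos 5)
Your proposal is correct, and it takes a genuinely different route from the paper's proof.

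The paper also works pointwise with the decomposition $\KL(p\|q)=\ln(Z_{\tilde\theta}/Z_\theta)+\langle p,\theta-\tilde\theta\rangle$, but then diverges. It bounds $\ln(Z_{\tilde\theta}/Z_\theta)\leq\langle q,\tilde\theta-\theta\rangle$ via the log-sum inequality (equivalently, the first-order convexity inequality for the log-partition evaluated at $\tilde\theta$), which symmetrizes the expression into $\langle q-p,\tilde\theta-\theta\rangle$. It then applies Cauchy--Schwarz together with the fact that softmax is $1$-Lipschitz in $\ell_2$ (i.e.\ $\|q-p\|_2\leq\|\tilde\theta-\theta\|_2$) to conclude with the stated constant $1$. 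You instead recognize $\ln(Z_{\tilde\theta}/Z_\theta)-\langle p,\tilde\theta-\theta\rangle$ as a centered cumulant $\ln\E_p[e^X]-\E_p[X]$ with $X=\tilde\theta_L-\theta_L$, $L\sim p$, and invoke Hoeffding's lemma. That is the cleaner move here: it sidesteps the Lipschitz argument entirely and yields the tighter constant $\tfrac12\E[\|\tilde\theta-\theta\|_2^2]$ (indeed, only the range $\max_l(\tilde\theta_l-\theta_l)-\min_l(\tilde\theta_l-\theta_l)$ enters, reflecting the shift-invariance of softmax, whereas the $\ell_2$ bound in both proofs discards that invariance). The paper's route is arguably more elementary — it needs only the log-sum inequality and Lipschitz continuity rather than a sub-Gaussian estimate — but buys nothing quantitatively. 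Your alternative Bregman/Taylor route with the Hessian bound $\nabla^2\Psi=\mathrm{diag}(p)-pp^\top\preceq I$ is also sound and gives the same $\tfrac12$ constant; note that $\nabla^2\Psi\preceq I$ already follows from the trivial estimate $\mathrm{Var}_p[v_L]\leq\E_p[v_L^2]\leq\|v\|_2^2$, so Popoviciu is not actually needed there.
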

\begin{proof}
    \begin{align*}     \E\left[\KL(\Pr(Y\in\cdot|\theta)\|\KL(Y\in\cdot|\theta\leftarrow\tilde{\theta}))\right]
        & = \E\left[\sum_{l=1}^{d}\frac{e^{\theta_l}}{\sum_{k=1}^{d}e^{\theta_k}} \ln\frac{\frac{e^{\theta_l}}{\sum_{k=1}^{d}e^{\theta_k}}}{\frac{e^{\tilde{\theta}_l}}{\sum_{k=1}^{d}e^{\tilde{\theta}_k}}}\right]\\
        & = \E\left[\sum_{l=1}^{d}\frac{e^{\theta_l}}{\sum_{k=1}^{d}e^{\theta_k}} \left(\ln\frac{e^{\theta_l}}{e^{\tilde{\theta}_l}} + \ln\frac{\sum_{k=1}^{d}e^{\tilde{\theta}_k}}{\sum_{k=1}^{d}e^{\theta_k}}\right)\right]\\
        & = \E\left[\ln\frac{\sum_{k=1}^{d}e^{\tilde{\theta}_k}}{\sum_{k=1}^{d}e^{\theta_k}}\right] + \E\left[\sum_{l=1}^{d}\frac{e^{\theta_l}}{\sum_{k=1}^{d}e^{\theta_k}} \ln\frac{e^{\theta_l}}{e^{\tilde{\theta}_l}}\right]\\
        & = \E\left[\ln\frac{\sum_{k=1}^{d}e^{\tilde{\theta}_k}}{\sum_{k=1}^{d}e^{\theta_k}}\right] + \E\left[\sum_{l=1}^{d}\frac{e^{\theta_l}}{\sum_{k=1}^{d}e^{\theta_k}} \left(\theta_l-\tilde{\theta}_l\right)\right]\\
        & = \E\left[\ln\frac{\sum_{k=1}^{d}e^{\tilde{\theta}_k}}{\sum_{k=1}^{d}e^{\theta_k}}\right] + \E\left[\sum_{l=1}^{d}\frac{e^{\theta_l}}{\sum_{k=1}^{d}e^{\theta_k}} \left(\theta_l-\tilde{\theta}_l\right)\right]\\
        & = \E\left[\ln\frac{\sum_{k=1}^{d}e^{\tilde{\theta}_k}}{\sum_{k=1}^{d}e^{\theta_k}}\right] + \E\left[\sum_{l=1}^{d}\frac{e^{\theta_l}}{\sum_{k=1}^{d}e^{\theta_k}} \left(\theta_l-\tilde{\theta}_l\right)\right]\\
        & \overset{(a)}{\leq} \E\left[\sum_{l=1}^{d}\frac{e^{\tilde{\theta}_l}}{\sum_{k=1}^{d}e^{\tilde{\theta}_k}} \ln \frac{e^{\tilde{\theta}_l}}{e^{\theta_l}}\right]+\E\left[\sum_{l=1}^{d}\frac{e^{\theta_l}}{\sum_{k=1}^{d}e^{\theta_k}} \left(\theta_l-\tilde{\theta}_l\right)\right]\\
        & = \E\left[\sum_{l=1}^{d}\frac{e^{\tilde{\theta}_l}}{\sum_{k=1}^{d}e^{\tilde{\theta}_k}} \left(\tilde{\theta}_l - \theta_l\right)\right] + \E\left[\sum_{l=1}^{d}\frac{e^{\theta_l}}{\sum_{k=1}^{d}e^{\theta_k}} \left(\theta_l-\tilde{\theta}_l\right)\right]\\
        & = \E\left[\sum_{l=1}^{d}\left(\frac{e^{\tilde{\theta}_l}}{\sum_{k=1}^{d}e^{\tilde{\theta}_k}}-\frac{e^{\theta_l}}{\sum_{k=1}^{d}e^{\theta_k}}\right)\left(\tilde{\theta}_l-\theta_l\right)\right]\\
        & \overset{(b)}{\leq} \E\left[\sum_{l=1}^{d}\left(\tilde{\theta}_l-\theta_l\right)^2\right]\\
        & = \E\left[\|\tilde{\theta}-\theta\|^2_2\right],
    \end{align*}
    where $(a)$ follows from the log-sum inequality and $(b)$ follows from the fact that the softmax function is $1$-Lipschitz.
\end{proof}

\begin{lemma}{\bf (rate upper bound)}\label{le:rate_ub}
    For all $d,r,m,n\in\Z_{++}$,
    $$\frac{\I(H_{m,n};\psi)}{mn}\ \leq\ \inf_{\epsilon \geq 0}\ \frac{dr\log\left(1 + \frac{1}{r\epsilon}\right)}{2mn} + \frac{r\log(1+d\epsilon)}{2n}.$$
\end{lemma}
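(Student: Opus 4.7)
The plan is to apply the rate--distortion decomposition to $\psi$. For any auxiliary $\tilde\psi$ satisfying $\tilde\psi \perp H_{m,n}\mid \psi$, we have
\begin{align*}
\I(H_{m,n};\psi) \;\le\; \I(\psi;\tilde\psi) + \I(H_{m,n};\psi\mid\tilde\psi),
\end{align*}
and the two resulting summands will, after dividing by $mn$, line up with the two summands in the stated bound.

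The construction I would use is a Gaussian-noise compression applied to a convenient parameterization of the Stiefel manifold. Since $\psi$ is uniform on the Stiefel manifold, it admits the polar representation $\psi = W(W^\top W)^{-1/2}$, where $W\in\R^{d\times r}$ has iid $\normal(0, 1/r)$ entries. Letting $\tilde W = W + Z$ for $Z$ independent with iid $\normal(0,\epsilon)$ entries, data processing gives $\I(H_{m,n};\psi) \le \I(H_{m,n};W) \le \I(W;\tilde W) + \I(H_{m,n};W\mid\tilde W)$. The rate term is a standard Gaussian computation: $\I(W;\tilde W) = \frac{dr}{2}\log(1+\frac{1}{r\epsilon})$, since $W$ and $Z$ have iid Gaussian entries of variance $1/r$ and $\epsilon$, respectively. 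Dividing by $mn$ yields the first summand of the lemma.

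For the distortion term I would apply the chain rule across tasks. Because each $D_i$ depends on $W$ only through $\theta_i = \psi\xi_i$ and $D_i\perp D_{<i}\mid\theta_i$, one obtains $\I(H_{m,n};W\mid\tilde W) \le \sum_{i=1}^{m}\I(D_i;\theta_i\mid\tilde W)$. The plan is then to show $\I(D_i;\theta_i\mid\tilde W) \le \frac{r}{2}\log(1+d\epsilon)$ per task, which sums to $\frac{mr}{2}\log(1+d\epsilon)$ and divides by $mn$ to give the second summand. This per-task bound is the main obstacle: $D_i$ is a sequence of categorical softmax samples rather than Gaussian observations, and the posterior of $\psi$ given $\tilde\psi$ lives on the Stiefel manifold rather than in Euclidean space. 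To handle this, I plan to invoke Lemma~\ref{le:sq_kl} to replace the softmax KL by the squared Euclidean distance on logits, reducing to a Gaussian-flavored distortion analysis, and then exploit the $r$-dimensional Gaussian structure of $\xi_i\sim\normal(0,I_r/r)$ together with the $\epsilon/(1+r\epsilon)$-scale posterior variance of $W\mid\tilde W$ to extract a $\log(1+d\epsilon)$ bound through an eigenvalue or determinantal computation of the effective Gaussian channel, rather than settling for a looser linear-in-$\epsilon$ estimate.
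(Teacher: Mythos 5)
Your rate term is fine: adding iid Gaussian noise $Z$ with entrywise variance $\epsilon$ to $W$ (iid $\normal(0,1/r)$ entries) gives exactly $\I(W;\tilde W)=\tfrac{dr}{2}\log(1+\tfrac{1}{r\epsilon})$, and the decomposition $\I(H_{m,n};\psi)\le\I(W;\tilde W)+\I(H_{m,n};W\mid\tilde W)$ is correct. The reduction $\I(H_{m,n};W\mid\tilde W)\le\sum_{i}\I(D_i;\theta_i\mid\tilde W)$ is also a valid inequality.

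The gap is in the per-task bound. You claim $\I(D_i;\theta_i\mid\tilde W)\le\tfrac{r}{2}\log(1+d\epsilon)$, but this quantity is \emph{not} bounded independently of $n$. Given $\tilde W$, the intra-task parameter $\theta_i=\psi\xi_i$ retains full uncertainty in $\xi_i$ (which is independent of $\tilde W$), and as $n\to\infty$ the $n$ observations in $D_i$ pin down $\theta_i$ to within $O(1/\sqrt n)$, so $\I(D_i;\theta_i\mid\tilde W)$ grows like $\tfrac{r}{2}\log n$ by standard Bayesian asymptotics. You applied data processing for the chain $W\to\theta_i\to D_i$ (given $\tilde W$) and kept the wrong endpoint: both $\I(D_i;\theta_i\mid\tilde W)$ and $\I(\theta_i;W\mid\tilde W)$ upper-bound $\I(D_i;W\mid\tilde W)$, but only the latter is $n$-free. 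The paper does precisely this, bounding $\I(H^{(1)}_n;\psi\mid\tilde\psi)\le\I(\theta_1;\psi\mid\tilde\psi)$ and then computing the right-hand side through the Gaussian law $\theta_1\mid\psi\sim\normal(0,\psi\psi^\top/r)$ via a $\delta$-regularized determinant calculation and the matrix determinant lemma. Your planned use of Lemma~\ref{le:sq_kl} is consequently also off-target: that lemma controls a softmax observation KL, which never appears once you escape $D_i$ via data processing; the distortion computation lives entirely in parameter space. (A secondary issue: the polar parameterization $\psi=W(W^\top W)^{-1/2}$ is a nonlinear map, so characterizing the induced posterior on $\psi$ given $\tilde W$ adds real difficulty; the paper avoids this by perturbing $\psi$ directly in the ambient $\R^{d\times r}$ and using a max-entropy inequality for the rate.)
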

\begin{proof}
    Let $\tilde{\psi} = \psi + Z$ where $Z \in \Re^{d\times k}$ is $Z\perp \psi$ and consists of elements which are distributed iid $\normal(0, \epsilon)$.
    \begin{align*}
        \frac{\I(H_{m,n};\psi)}{mn}
        & \overset{(a)}{=} \frac{\I(H_{m,n};\psi,\tilde{\psi})}{mn}\\
        & \overset{(b)}{=} \frac{\I(H_{m,n};\tilde{\psi})}{mn} + \frac{\I(H_{m,n};\psi|\tilde{\psi})}{mn}\\
        & = \frac{\I(H_{m,n};\tilde{\psi})}{mn} + \frac{\H(H_{m,n}|\tilde{\psi}) - \H(H_{m,n}|\psi)}{mn}\\
        & \overset{(c)}{=} \frac{\I(H_{m,n};\tilde{\psi})}{mn} + \frac{\sum_{i=1}^{m}\H(H_{n}^{(i)}|\tilde{\psi}, H_{i-1,n}) - m\cdot\H(H_n^{(1)}|\psi)}{mn}\\
        & \overset{(d)}{\leq} \frac{\I(H_{m,n};\tilde{\psi})}{mn} + \frac{m\cdot\diffentropy(H_{n}^{(1)}|\tilde{\psi}) - m\cdot\diffentropy(H_n^{(1)}|\psi)}{mn}\\
        & \overset{(e)}{\leq} \frac{\I(\psi;\tilde{\psi})}{mn} + \frac{\I(H^{(1)}_{n};\psi|\tilde{\psi})}{n}\\
        & \overset{(f)}{\leq} \frac{\I(\psi;\tilde{\psi})}{mn} + \frac{\I(\theta_1;\psi|\tilde{\psi})}{n},
    \end{align*}
    where $(a)$ follows from the fact that $H_{m,n}\perp \tilde{\psi}|\psi$, $(b)$ follows from the chain rule of mutual information, $(c)$ follows from the chain rule of mutual information and the fact that $H_{n}^{(i)}$ are iid $| \psi$, $(d)$ follows from the fact that conditioning reduces differential entropy, and $(e)/(f)$ both follow from the data processing inequality applied to the markov chains $\tilde{\psi}\perp H_{m,n}|\psi$ and $\psi \perp H_n^{(1)}|\theta_1, \tilde{\psi}$.

    We now bound the two above terms.

    \begin{align*}
        \frac{\I(\psi;\tilde{\psi})}{mn}
        & = \frac{\diffentropy(\tilde{\psi}) - \diffentropy(\tilde{\psi}|\psi)}{mn}\\
        & \leq \frac{\frac{dr}{2}\log\left(2\pi e \left(\epsilon + \frac{1}{r}\right)\right) - \frac{dr}{2}\log\left(2\pi e \epsilon\right)}{mn}\\
        & = \frac{dr\log\left(1 + \frac{1}{r\epsilon}\right)}{2mn},
    \end{align*}
    where $(a)$ follows from the maximum differential entropy of a random variable of fixed variance being upper bounded by a Gaussian random variable.

    Let $\theta_\delta = \theta_1 + \delta Z$ where $Z \sim \normal(0, I_d)$ and $Z \perp \theta_1$.
    \begin{align*}
        \frac{\I(\theta_1;\psi|\tilde{\psi})}{n}
        & = \frac{\I(\theta_1;\psi|\tilde{\psi})}{n}\\
        & = \frac{\E\left[\KL(\Pr( \theta_1\in\cdot|\psi)\|\Pr(\theta_1\in\cdot|\tilde{\psi}))\right]}{n}\\
        & \overset{(a)}{\leq} \frac{\E\left[\KL(\Pr(\theta_1\in\cdot|\psi)\|\Pr(\theta_\in\cdot|\psi\leftarrow\tilde{\psi}))\right]}{n}\\
        & \leq \frac{\E\left[\KL(\lim_{\delta\to 0}\Pr(\theta_\delta\in\cdot|\psi)\|\lim_{\delta\to 0}\Pr(\theta_\delta|\psi\leftarrow\tilde{\psi}))\right]}{n}\\
        & \overset{(b)}{=} \frac{1}{n}\E\left[\lim_{\delta\to 0 }\frac{1}{2}\log\left(\frac{\left|\delta I_d + \frac{\tilde{\psi}\tilde{\psi}^\top}{k}\right|}{\left|\delta I_d + \frac{\psi\psi^\top}{k}\right|}\right) - d + {\rm Tr}\left(\left(\delta I_d + \frac{\tilde{\psi}\tilde{\psi}^\top}{k}\right)^{-1}\left(\delta I_d + \frac{\psi\psi^\top}{k}\right)\right)\right]\\
        & \overset{(c)}{\leq} \frac{1}{n}\E\left[\lim_{\delta\to 0 }\frac{1}{2}\log\left(\frac{\left|\delta I_d + \frac{\tilde{\pi}\tilde{\psi}^\top}{k}\right|}{\left|\delta I_d + \frac{\psi\psi^\top}{k}\right|}\right)\right]\\
        & \overset{(d)}{=} \frac{1}{n}\E\left[\lim_{\delta\to 0 }\frac{1}{2}\log\left(\frac{\left|\delta I_d\right|\cdot\left|I_k + \frac{\tilde{\psi}^\top\tilde{\psi}}{k\delta}\right|}{\left|\delta I_d \right|\cdot\left|I_k + \frac{\psi^\top\psi}{k\delta}\right|}\right)\right]\\
        & = \frac{1}{n}\E\left[\lim_{\delta\to 0 }\frac{1}{2}\log\left(\frac{\left|I_k + \frac{\tilde{\psi}^\top\tilde{\psi}}{k\delta}\right|}{\left|I_k + \frac{I_k}{k\delta}\right|}\right)\right]\\
        & \overset{(e)}{\leq} \lim_{\delta\to 0 } \frac{1}{2n}\log\left(\frac{\left|I_k + \frac{\E\left[\tilde{\psi}^\top\tilde{\psi}\right]}{k\delta}\right|}{\left|I_k + \frac{I_k}{k\delta}\right|}\right)\\
        & = \lim_{\delta\to 0 } \frac{1}{2n}\log\left(\frac{\left|I_k + \frac{\E\left[I_k + d\epsilon I_k\right]}{k\delta}\right|}{\left|I_k + \frac{I_k}{k\delta}\right|}\right)\\
        & = \lim_{\delta\to 0 } \frac{k}{2n}\log\left(\frac{1 + \frac{1+d\epsilon}{k\delta}}{1+\frac{1}{k\delta}}\right)\\
        & = \frac{k}{2n}\log\left(1 + d\epsilon \right),
    \end{align*}
    where $(a)$, $(b)$ follows from continuity of the KL-divergence between two multivariate normal distributions w.r.t the covariance matrix, $(c)$ follows from the fact that the trace term is upper bounded by $d$, $(d)$ follows from the matrix determinant lemma, 
    $\epsilon = \frac{1}{m}$, and $(e)$ follows from Jensen's inequality.
\end{proof}

\begin{lemma}{\bf (distortion upper bound)}\label{le:distortion_ub2}
    For all $n, r \in \mathbb{Z}_{++}$,
    $$\frac{\I(H_{1,n};\theta_1|\psi)}{n}\ \leq\ \inf_{\epsilon\geq 0}\ \frac{r\log\left(1 + \frac{1}{r\epsilon}\right)}{2n} + r\epsilon$$
\end{lemma}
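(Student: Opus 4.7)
}

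The plan is to bound the conditional mutual information via a standard rate--distortion argument that exploits the linear representation structure: conditional on $\psi$, the only remaining uncertainty in $\theta_1$ comes from $\xi_1$, and $\psi$ acts as an isometry. Concretely, I would introduce the Gaussian compression
\[
\tilde{\xi}_1 \;=\; \xi_1 + Z, \qquad Z \sim \mathcal{N}(0, \epsilon I_r), \qquad Z \perp (\psi, \xi_1, H_{1,n}),
\]
and define $\tilde{\theta}_1 = \psi \tilde{\xi}_1$. Since the columns of $\psi$ are orthonormal, $\|\theta_1 - \tilde{\theta}_1\|_2 = \|\psi Z\|_2 = \|Z\|_2$, so any distortion computed in $\theta$-space reduces to a distortion in $\xi$-space.

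The first step is the standard two-term decomposition
\[
\I(H_{1,n};\theta_1|\psi) \;=\; \I(H_{1,n};\tilde{\theta}_1|\psi) + \I(H_{1,n};\theta_1|\tilde{\theta}_1,\psi),
\]
obtained via the chain rule together with the Markov relation $H_{1,n} \perp \tilde{\theta}_1 \mid (\theta_1,\psi)$. Applying the data processing inequality to the first summand gives
\[
\I(H_{1,n};\tilde{\theta}_1|\psi) \;\leq\; \I(\theta_1;\tilde{\theta}_1|\psi) \;=\; \I(\xi_1;\tilde{\xi}_1),
\]
where the last equality uses $\xi_1 \perp \psi$ and the invertibility of $\psi^\top$ on the column space of $\psi$. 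A direct Gaussian computation (as in the proof of Theorem \ref{th:bayes_rd} for the logistic regression example) then yields $\I(\xi_1;\tilde{\xi}_1) \leq \tfrac{r}{2}\log\!\bigl(1 + \tfrac{1}{r\epsilon}\bigr)$, since $\xi_1 \sim \mathcal{N}(0,I_r/r)$.

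For the distortion term, I would expand $\I(H_{1,n};\theta_1|\tilde{\theta}_1,\psi) = \sum_{t=0}^{n-1} \I(Y^{(1)}_{t+1};\theta_1 \mid \tilde{\theta}_1, \psi, H_{1,t})$ by the chain rule, and bound each summand. Using Lemma \ref{le:bayes_optimal_seq} to switch from the Bayesian posterior to the measure induced by the plug-in $\tilde{\theta}_1$, and then applying Lemma \ref{le:sq_kl} to the softmax likelihood, each term is at most
\[
\E\!\left[\|\theta_1 - \tilde{\theta}_1\|_2^2\right] \;=\; \E\!\left[\|\psi Z\|_2^2\right] \;=\; \E\!\left[\|Z\|_2^2\right] \;=\; r\epsilon,
\]
so the total distortion contribution is at most $n r\epsilon$. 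Dividing through by $n$ and taking the infimum over $\epsilon \geq 0$ gives the claimed bound.

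The main obstacle is essentially bookkeeping: ensuring that $\tilde{\theta}_1$ satisfies the conditional independence $\tilde{\theta}_1 \perp H_{1,n} \mid \theta_1$ required to invoke data processing, and that the isometry $\psi^\top\psi = I_r$ is used correctly so that the $\xi$-space Gaussian rate cleanly becomes a $\theta$-space distortion. Everything else is routine from the tools already developed for the logistic regression example.
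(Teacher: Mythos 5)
Your proposal is correct and follows essentially the same route as the paper's proof: the same Gaussian compression $\tilde{\xi}_1 = \xi_1 + Z$ with $Z\sim\normal(0,\epsilon I_r)$, the same chain-rule decomposition into a rate term (bounded by a Gaussian mutual-information calculation giving $\tfrac{r}{2}\log(1+\tfrac{1}{r\epsilon})$) and a distortion term (bounded using Lemma~\ref{le:bayes_optimal_seq} to pass to the plug-in predictor, Lemma~\ref{le:sq_kl} for the softmax Lipschitz bound, and the orthonormality of $\psi$ to get $\E\|\theta_1-\tilde{\theta}_1\|_2^2 = r\epsilon$). The only cosmetic difference is that the paper collapses the distortion sum to a single term via the iid structure and a ``conditioning reduces entropy'' step before bounding, while you bound each of the $n$ summands directly by $r\epsilon$; both arrive at the same final bound.
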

\begin{proof}
    Let $\tilde{\xi} = \xi + Z$ where $Z \perp \xi$ and $Z \sim \normal(0, \epsilon I_r)$.
    \begin{align*}
        \frac{\I(H_{1,n};\theta_1|\psi)}{n}
        & \overset{(a)}{=} \frac{\I(H_{1,n};\theta_1,\tilde{\xi}|\psi)}{n}\\
        & = \frac{\I(H_{1,n};\tilde{\xi}|\psi) + \I(H_{1,n};\theta_1|\tilde{\xi}, \psi)}{n}\\
        & \overset{(b)}{=} \frac{\I(H_{1,n};\tilde{\xi}|\psi) + \sum_{j=1}^{n}\I(Y_{j}^{(1)};\theta_1|\tilde{\xi}, \psi, H_{1,j-1}, X_j^{(1)})}{n}\\
        & = \frac{\I(H_{1,n};\tilde{\xi}|\psi) + \sum_{j=1}^{n}\H(Y_{j}^{(1)}|\tilde{\xi}, \psi, H_{1,j-1}, X_j^{(1)}) - \H(Y_j^{(1)}|\theta_1,\psi,\tilde{\xi}, H_{1,j-1}, X_j^{(1)})}{n}\\
        & \overset{(c)}{=} \frac{\I(H_{1,n};\tilde{\xi}|\psi) + \sum_{j=1}^{n}\H(Y_{j}^{(1)}|\tilde{\xi}, \psi, H_{1,j-1}, X_j^{(1)}) - \H(Y_j^{(1)}|\theta_1,\psi,\tilde{\xi},X_j^{(1)})}{n}\\
        & \overset{(d)}{\leq} \frac{\I(H_{1,n};\tilde{\xi}|\psi) + \sum_{j=1}^{n}\H(Y_{j}^{(1)}|\tilde{\xi},\psi, X_j^{(1)}) - \H(Y_j^{(1)}|\theta_1,\psi,\tilde{\xi},X_j^{(1)})}{n}\\
        & = \frac{\I(H_{1,n};\tilde{\xi}|\psi)}{n} + \I(Y_{j}^{(1)};\theta_1|\tilde{\xi},\psi,X_1^{(1)})\\
        & \leq \frac{\I(\xi;\tilde{\xi}|\psi)}{n} + \I(Y_{j}^{(1)};\theta_1|\tilde{\xi},\psi),
    \end{align*}
    where $(a)$ follows from the fact that $H_{1,n}\perp\tilde{\xi}|\psi,\theta_1$, $(b)$ follows from the chain rule of mutual information, $(c)$ follows from the fact that $(X_j^{(1)}, Y_j^{(1)})$ is iid $|\theta_1$, $(d)$ follows from the fact that conditioning reduces differential entropy, and $(e)$ follows from the data processing inequality applied to the markov chain $H_{1,n}\perp\tilde{\xi}|(\xi,\psi)$.

    We now upper bound the two above terms.
    \begin{align*}
        \frac{\I(\xi;\tilde{\xi}|\psi)}{n}
        & = \frac{\diffentropy(\tilde{\xi}|\psi)- \diffentropy(\tilde{\xi}|\psi,\xi)}{n}\\
        & = \frac{\diffentropy(\tilde{\xi})- \diffentropy(\tilde{\xi}|\xi)}{n}\\
        & = \frac{\diffentropy(\tilde{\xi})- \diffentropy(Z)}{n}\\
        & = \frac{\frac{r}{2}\log\left(2\pi e (\epsilon + \frac{1}{r})\right) - \frac{r}{2}\log\left(2\pi e\epsilon\right)}{n}\\
        & = \frac{r\log\left(1 + \frac{1}{r\epsilon}\right)}{2n}.
    \end{align*}

    Let $\tilde{\theta} = \psi\tilde{\xi}$. Then,
    \begin{align*}
        \I(Y_{j}^{(1)};\theta_1|\tilde{\xi},\psi)
        &\leq \I(Y_{j}^{(1)};\theta_1|\tilde{\theta})\\
        & =\E\left[\KL\left(\Pr\left(Y^{(1)}_j\in\cdot|\theta_1\right)\|\Pr\left(Y^{(1)}_{j}\in\cdot|\tilde{\theta}\right)\right)\right]\\
        & \overset{(a)}{\leq} \E\left[\KL\left(\Pr\left(Y^{(1)}_j\in\cdot|\theta_1\right)\|\Pr\left(Y^{(1)}_{j}\in\cdot|\theta_1\leftarrow\tilde{\theta}\right)\right)\right]\\
        & \overset{(b}{\leq} \E\left[\|\tilde{\theta}-\theta_1\|^2_2\right]\\
        & = \E\left[\left(\xi-\tilde{\xi}\right)^\top\psi^\top\psi\left(\xi-\tilde{\xi}\right)\right]\\
        & = \E\left[\left(\xi-\tilde{\xi}\right)^\top \left(\xi-\tilde{\xi}\right)\right]\\
        & = \E\left[Z^\top Z\right]\\
        & = r\epsilon
    \end{align*}
    where $(a)$ follows from Lemma \ref{le:bayes_optimal_seq}, and $(b)$ follows from Lemma \ref{le:sq_kl}.
\end{proof}

\linRepLearningBound*
\begin{proof}
    \begin{align*}
        \Lc_{m,n}
        & \overset{(a)}{\leq} \inf_{\epsilon \geq 0}\ \frac{dr\log\left(1 + \frac{1}{r\epsilon}\right)}{2mn} + \frac{r\log(1+d\epsilon)}{2n} + \inf_{\epsilon'\geq 0} \frac{r\log\left(1 + \frac{1}{r\epsilon'}\right)}{2n} + r\epsilon'\\
        & \overset{(b)}{\leq} \frac{dr\log\left(1 + \frac{m}{r}\right)}{2mn} + \frac{r\log\left(1 + \frac{d}{m}\right)}{2n} + \frac{r\log(1+\frac{2n}{r})}{2n} + \frac{r}{2n}\\
        & \leq \frac{dr\log\left(1 + \frac{m}{r}\right)}{2mn} + \frac{dr}{2mn} + \frac{r\log(1+\frac{2n}{r})}{2n} + \frac{r}{2n},
    \end{align*}
where $(a)$ follows directly from Lemmas \ref{le:rate_ub} and \ref{le:distortion_ub2}, and $(b)$ follows from setting $\epsilon = \frac{1}{m}$ and $\epsilon' = \frac{1}{2n}$. We choose these values because they are analytically simpler than the optimal values of $\epsilon, \epsilon'$ but are asymptotically identical to these optimal values.
\end{proof}

\subsection{Mixture of Transformer}
\begin{lemma}{\bf (sparse mixture meta-estimation error)}\label{le:sparse_meta_error}
    For all $R,M,T\in \Z_{++}$, 
    $$\I(H_{M,T};\psi) \leq R\log\left(1 + \frac{M}{R}\right)\log(MN).$$
\end{lemma}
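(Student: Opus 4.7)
The plan is to reduce the mutual information on the left-hand side to information contained in the sequence of latent mixture indices, and then exploit the sparsity induced by the Dirichlet prior.

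\textbf{Step 1 (Data processing).} For each document $m$, let $Z_m \in \{1,\ldots,N\}$ denote the latent index indicating which of the $N$ transformer models was drawn to produce $\theta_m$. The base weights $\{(A_i^{(z)}, V_i^{(z)}) : z \in [N], i \in [L]\}$ are independent of $\psi$, so conditioned on $Z_{1:M}$ the history $H_{M,T}$ is a deterministic (randomized) function of variables independent of $\psi$. Hence $H_{M,T} \perp \psi \mid Z_{1:M}$, and by the data processing inequality,
$$\I(H_{M,T};\psi) \leq \I(Z_{1:M};\psi).$$

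\textbf{Step 2 (Reduction to a count sufficient statistic).} Let $C=(C_1,\ldots,C_N)$ with $C_j = |\{m: Z_m = j\}|$. A direct computation in the Dirichlet-Categorical model shows
$$\Pr(Z_{1:M}=z_{1:M} \mid C, \psi) = \binom{M}{C_1,\ldots,C_N}^{-1},$$
which does not depend on $\psi$. Thus $Z_{1:M} \perp \psi \mid C$, and $\I(Z_{1:M};\psi) = \I(C;\psi) \leq \H(C)$.

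\textbf{Step 3 (Entropy of counts via sparsity).} Let $U = |\{j:C_j > 0\}|$. Under the Dirichlet$(R/N,\ldots,R/N)$ prior, the Pólya-urn predictive rule gives
$$\Pr(Z_m \notin \{Z_1,\ldots,Z_{m-1}\} \mid Z_{1:m-1}) = \frac{(N-U_{m-1})\cdot R/N}{R+m-1} \leq \frac{R}{R+m-1},$$
and summing over $m$ yields the key sparsity bound $\E[U] \leq R\log(1+M/R)$. I would then describe $C$ by (i) its support $S \subseteq \{1,\ldots,N\}$ of size $U$, which requires at most $\log\binom{N}{U} \leq U\log N$ bits, and (ii) the positive counts on $S$ which sum to $M$, requiring at most $\log\binom{M-1}{U-1} \leq U\log M$ bits. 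Taking expectations and combining,
$$\H(C) \leq \E\!\left[U\log(MN)\right] \leq R\log\!\left(1+\tfrac{M}{R}\right)\log(MN),$$
as desired.

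\textbf{Main obstacle.} Steps 1 and 2 are clean; the technical work is concentrated in Step 3, where I must (a) extract the Pólya-urn representation from the finite-$N$ Dirichlet prior and (b) arrange the entropy decomposition so that the $\log(MN)$ factor appears cleanly multiplied by $\E[U]$, without stray additive $\H(U) \lesssim \log M$ terms. Absorbing such lower-order terms into the main bound requires using either the combined estimate $\log\binom{N}{U}+\log\binom{M-1}{U-1} \leq U\log(eMN/U)$ together with Jensen's inequality, or treating small-$M$ boundary cases separately so that $R\log(1+M/R)\log(MN)$ dominates the extra $\H(U)$ contribution.
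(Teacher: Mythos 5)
Your proposal follows essentially the same approach as the paper: data processing reduces $\I(H_{M,T};\psi)$ to information about the latent mixture assignments, which is bounded by entropy and then by an expected description length of $\log M + \log N$ nats per distinct outcome, combined with the sparsity bound $\E[U]\leq R\log(1+M/R)$. (The paper uses $\theta_{1:M}$ rather than the index vector $Z_{1:M}$ as the intermediate variable, but since each $\theta_m$ is in bijection with its mixture index, this is cosmetic; the paper also simply cites the expected-support bound rather than deriving it via the P\'olya urn as you do.) Regarding the obstacle you flagged in Step~3: the paper's code sidesteps the stray $\H(U)$ term by a self-terminating construction — for each nonzero category it emits a fixed $\log M + \log N$ nat block (count, then index), and since the decoded counts must sum to $M$, the decoder knows when to stop, so no separate encoding of $U$ is needed and $\H(\theta_{1:M})\leq \E[U]\cdot\log(MN)$ follows directly from the Kraft/source-coding bound. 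Adopting that prefix-free scheme would let you drop the Jensen or small-$M$ case analysis you proposed.
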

\begin{proof}
    Recall that $\theta_{1:M}$ is distributed $\text{Dirichlet-Multinomial}(M, [R/N, \ldots, R/N])$.  Consider the following prefix-free coding scheme for $\theta_{1:M}$: For every nonzero category, allocate $\log(M)$ bits to designate the number of times that category was selected in $\theta_{1:M}$ with and an additional $\log(N)$ bits to designate the category $(1, \ldots, N)$.  We concatenate the bit strings for each such nonzero category.  As a result:
    \begin{align*}
        \I(H_{M,T};\psi)
        & \overset{(a)}{\leq} \I(\theta_{1:M};\psi)\\
        & \leq \H(\theta_{1:M})\\
        & \overset{(b)}{\leq} \E\left[\sum_{i=1}^{N} \mathbbm{1}_{[i\in \theta_{1:M}]}\left(\log(M) + \log(N)\right)\right]\\
        & \overset{(c)}{\leq} R\log\left(1 + \frac{M}{R}\right)\log(MN),
    \end{align*}
    where $(a)$ follows from the data processing inequality, $(b)$ follows from the fact that entropy is the minimum average prefix-free code length, and $(c)$ follows from the fact that the average number of non-zero outomes for a $\text{Dirichlet-Multinomial}(M, [R/N, \ldots, R/N])$ random variable is upper bounded by $R\log(1+M/R)$.
\end{proof}

\metaTransformer*
\begin{proof}
    Let $\tilde{\Theta}_N = \{\theta + Z_{\theta} : \theta \in \Theta\}$.  $\Theta$ is the set of $N$ transformer model weights for each of the $N$ models in the mixture and $Z_{\theta}\perp \theta$ is random noise of the following characteristic: $\theta = (A_{1:L}, V_{1:L}), \tilde{\theta} = (\tilde{A}_{1:L}, \tilde{V}_{1:L}) $, $Z_{\theta} = (Z^{\theta, A}_{1:L}, Z^{\theta, V}_{1:L})$, for all $i$, $\tilde{A}_{i} = A_i + Z^{\theta, A}_{i}, \tilde{V}_i = V_i + Z^{\theta, V}_i$ where $Z^{\theta, A}_{i}$ consists of elements drawn iid $\normal(0, \frac{2\epsilon T}{r(d^2+r^2)L^2\log(4K^2)})$ and $Z^{\theta, A}_i$ consists of elements drawn iid $\normal(0, \frac{2\epsilon T}{d^2(d^2+r^2)L^2\log(4K^2)})$.  $\tilde{\Theta}_N$ hence is a collection of lossy compressions of the models in the mixture.

    Let $\tilde{B} \in \{1, \ldots, N\}^M$ be the collection containing the outcomes which model from the mixture was ascribed to $\theta_{1}, \ldots, \theta_{M}$.  Since there are $N$ different transformers in the mixture, $\tilde{B}$ takes values in the set $\{1, \ldots, N\}^M$.
    
    \begin{align*}
        & \I(H_{M,T};\psi, \theta_{1:M})\\
        & = \I(H_{M,T};\psi, \theta_{1:M}, \tilde{\Theta}_N, \tilde{\beta})\\
        & = \I(H_{M,T};\psi) + \I(H_{M,T};\tilde{\Theta}_N, \tilde{\beta}|\psi) + \I(H_{M,T};\theta_{1:M}|\psi, \tilde{\Theta}_N, \tilde{\beta})\\
        & \overset{(a)}{\leq} \I(H_{M,T};\psi) + \I(\theta_{1:M};\tilde{\Theta}_N, \tilde{\beta}|\psi) + \I(H_{M,T};\theta_{1:M}|\psi, \tilde{\Theta}_N, \tilde{\beta})\\
        & \overset{(b)}{=} \I(H_{M,T};\psi) + \I(\theta_{1:M};\tilde{\beta}|\psi) + \I(\theta_{1:M};\tilde{\Theta}_N|\tilde{\beta}, \psi) + \I(H_{M,T};\theta_{1:M}|\psi, \tilde{\Theta}_N, \tilde{\beta})\\
        & \leq  \I(H_{M,T};\psi) + M\log(N) + \E\left[\E\left[\sum_{i=1}^{N} \mathbbm{1}_{[i\in\tilde{\beta}]} \cdot \I(\Theta[i];\tilde{\Theta}[i])\Big|\tilde{\beta}\right]\right] + \sum_{m=1}^{M}\sum_{t=0}^{T-1}\I(X_{t+1}^{(m)};\theta_{1:M}|\tilde{\Theta}_N, \tilde{\beta}, H_{m,t})\\
        & \overset{(c)}{\leq} \I(H_{M,T};\psi) + R\log\left(1+\frac{M}{R}\right)
        \left[\frac{(d^2+r^2)L^2\log\left(4K^2\right)}{2} + \frac{(d^2+r^2)L\log\left(\frac{2\max\{d,r\} dKLT}{\epsilon}\right)}{2}\right]\\
        &\quad + MT \epsilon + M\log(N)\\
        & \overset{(d)}{\leq} \I(H_{M,T};\psi) + R\log\left(1+\frac{M}{R}\right)
        \left[(d^2+r^2)L^2\log\left(4K^2\right) + \frac{(d^2+r^2)L\log\left(\frac{4KMT^2}{L}\right)}{2}\right] + M\log(N)\\
        & \leq R\log\left(1+\frac{M}{R}\right)
        \left[\log(MN) + (d^2+r^2)L^2\log\left(4K^2\right) +  \frac{(d^2+r^2)L\log\left(\frac{4KMT^2}{L}\right)}{2}\right] + M\log(N)
    \end{align*}
    where $(a)$ follows from the data processing inequality, $(b)$ follows from the chain rule of mutual information, $(c)$ follows from Theorem \ref{th:tsfmRd}, $(d)$ follows by setting $\epsilon = (d^2+r^2)L^2\log(4K^2)/2MT$, and $(e)$ follows from Lemma \ref{le:sparse_meta_error}.
\end{proof}

\subsection{In-context Learning}\label{apdx:icl}
\icl*
\begin{proof}
    \begin{align*}
        \L_{M,T,\tau}
        & = \frac{1}{\tau} \sum_{t=0}^{\tau-1} \E\left[-\log \Pr(X^{(M+1)}_{t+1}|H_{M+1, t})\right]\\
        & = \frac{1}{\tau} \sum_{t=0}^{\tau-1} \E\left[\log \frac{1}{\Pr(X^{(M+1)}_{t+1}|\theta_{M+1}, H_{M+1,t})} + \KL(\Pr(X^{(M+1)}_{t+1}\in\cdot|\theta_{M+1}, H_{M+1, t}) \| \Pr(X^{(M+1)}_{t+1}\in\cdot|H_{M+1, t}))\right]\\
        & = \frac{1}{\tau} \sum_{t=0}^{\tau-1} \H(X_{t+1}^{(M+1)}|\theta_{M+1}, H_{M+1, t}) + \I(X_{t+1}^{(M+1)};\theta_{M+1}|H_{M+1, t})\\
        & = \frac{1}{\tau} \sum_{t=0}^{\tau-1} \H(X_{t+1}^{(M+1)}|\theta_{M+1}, X^{(M+1)}_{1},\ldots, X^{(M+1)}_{t}) + \I(X_{t+1}^{(M+1)};\theta_{M+1}, \psi|H_{M+1, t})\\
        & \overset{(a)}{=} \frac{\H(D_{M+1}|\theta_{M+1})}{\tau} + \frac{\I(H_{M+1,\tau};\theta_{M+1}, \psi|H_{M+1,0})}{\tau}\\
        & \overset{(b)}{=} \frac{\H(D_{M+1}|\theta_{M+1})}{\tau} + \frac{\I(H_{M+1,\tau};\psi|H_{M+1,0})}{\tau} + \frac{\I(H_{M+1,\tau};\theta_{M+1}|\psi, H_{M+1,0})}{\tau}\\
        & \overset{(c)}{\leq} \frac{\H(D_{M+1}|\theta_{M+1})}{\tau} + \frac{\I(H_{M+1,T};\psi|H_{M+1,0})}{\tau} + \frac{\I(D_{M+1};\theta_{M+1}|\psi)}{\tau}\\
        & \overset{(d)}{\leq} \frac{\H(D_{M+1}|\theta_{M+1})}{\tau} + \frac{\I(H_{M+1,T};\psi)}{(M+1)\tau} + \frac{\I(D_{M+1};\theta_{M+1}|\psi)}{\tau},
    \end{align*}
    where $(a)$ and $(b)$ follow from the chain rule of mutual information, $(c)$ follows from the fact that $\psi \perp H_{M+1,\tau}|H_{M+1,T}$ for $\tau \leq T$ and the data processing inequality, and $(d)$ follows from the fact that for all $m$, $\I(H_{m+1,T};\psi|H_{m,T}) \leq \I(H_{m,T};\psi|H_{m-1,T})$ and the chain rule of mutual information.
\end{proof}

\section{Analysis of Suboptimal Meta-Learning Algorithms}\label{apdx:suboptimal}
All of the prior results bound the error incurred by the optimal algorithm which produces a prediction of the next token conditioned on the entire past sequence.  In this section, we will derive some simple results which pertain to \emph{suboptimal} algorithms.

The following result quantifies the shortfall incurred by an algorithm which produces an arbitrary prediction $\tilde{P}_{m,t}$ which may depend on the history $H_{m,t}$.
\begin{lemma}{\bf (loss of an arbitrary predictor)}\label{le:lossQ}
    For all $M,T \in \mathbb{Z}_{++}$, if for all $(m,t) \in [M]\times[T]$, $\tilde{P}_{m,t}$ is a predictive distribution which may depend on the previous data $H_{m,t}$ and $\tilde{\Lc}_{M,T}$ denotes its cumulative average log-loss, then
    $$\tilde{\L}_{M,T} = \L_{m,n} + \underbrace{\frac{1}{MT}\sum_{m=1}^{M}\sum_{t=0}^{T-1} \E\left[\KL\left(\hat{P}_{m,t}\ \big\|\ \tilde{P}_{m,t}\right)\right]}_{\rm misspecification\ error},$$
\end{lemma}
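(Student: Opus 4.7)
The plan is to mirror the computation in the proof of Lemma \ref{le:meta_bayes_optimal_seq}, but retain the $\KL$ term as an equality rather than discarding it by nonnegativity. The core observation is the add-and-subtract identity
\[
-\ln \tilde{P}_{m,t}\!\left(X^{(m)}_{t+1}\right) \;=\; -\ln \hat{P}_{m,t}\!\left(X^{(m)}_{t+1}\right) \;+\; \ln \frac{\hat{P}_{m,t}\!\left(X^{(m)}_{t+1}\right)}{\tilde{P}_{m,t}\!\left(X^{(m)}_{t+1}\right)},
\]
valid wherever $\tilde{P}_{m,t}$ assigns positive probability to the realized outcome (otherwise $\tilde{\L}_{M,T} = \infty$ and the claim holds trivially).

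Next, I would take the conditional expectation given $H_{m,t}$. By construction, both $\hat{P}_{m,t}$ and $\tilde{P}_{m,t}$ are functions of $H_{m,t}$, and by definition of the Bayesian posterior the conditional law of $X^{(m)}_{t+1}$ given $H_{m,t}$ is exactly $\hat{P}_{m,t}$. Hence almost surely,
\[
\E\!\left[\ln\frac{\hat{P}_{m,t}(X^{(m)}_{t+1})}{\tilde{P}_{m,t}(X^{(m)}_{t+1})}\,\Big|\,H_{m,t}\right] = \KL\!\left(\hat{P}_{m,t}\,\big\|\,\tilde{P}_{m,t}\right),
\]
yielding the per-step identity
\[
\E\!\left[-\ln \tilde{P}_{m,t}(X^{(m)}_{t+1})\,\big|\,H_{m,t}\right] = \E\!\left[-\ln \hat{P}_{m,t}(X^{(m)}_{t+1})\,\big|\,H_{m,t}\right] + \KL(\hat{P}_{m,t}\,\|\,\tilde{P}_{m,t})
\]
almost surely.

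Finally, I would take outer expectations, sum over $m \in \{1,\ldots,M\}$ and $t \in \{0,\ldots,T-1\}$, and divide by $MT$. The first term on the right accumulates to $\L_{M,T}$ by definition, while the second term is precisely the stated misspecification error. There is no real obstacle here; the only point requiring care is the measurability of $\tilde{P}_{m,t}$ and $\hat{P}_{m,t}$ with respect to $H_{m,t}$, which justifies pulling them out of the conditional expectation and interpreting the log-ratio as an integrand against the posterior law of $X^{(m)}_{t+1}$.
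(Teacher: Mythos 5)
Your proof is correct and is essentially the approach the paper implicitly intends: the paper omits an explicit proof of Lemma~\ref{le:lossQ}, but the per-step identity you derive is exactly the one established (and then weakened by dropping the KL term) in the proof of Lemma~\ref{le:meta_bayes_optimal_seq}, so retaining the KL term and averaging over $(m,t)$ is precisely the intended argument.
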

 Note that because KL divergence is always non-negative and $\Lc_{m,n}$ is the loss of the Bayesian posterior estimator $\hat{P}$, any prediction other than $\hat{P}$ will incur nonzero misspecification error.

 For a particular class of predictors $\tilde{P}$, we can retrieve the following upper bound on the \emph{misspecification error}. We consider predictors which perform Bayesian inference with respect to an incorrectly specified prior distribution $\tilde{P}_0$.

\begin{restatable}{theorem}{misspecified}{\bf (misspecified prior error bound)}\label{th:prior_ub}
    For all $M,T\in\mathbb{Z}_{++}$ and $m,t\in [M]\times[T]$,
    if $\tilde{P}_{m,t}$ is the Bayesian posterior under the prior $\tilde{P}_0(\psi)$, then
    \begin{align*}
        \frac{1}{MT}\sum_{m=1}^{M}\sum_{t=0}^{T-1} \E\left[\KL\left(\hat{P}_{m,t}\ \big\|\ \tilde{P}_{m,t}\right)\right] \leq \frac{\E\left[\KL\left(\Pr(\psi\in\cdot)\|\tilde{P}_0(\psi\in\cdot)\right)\right]}{MT}.
    \end{align*}
\end{restatable}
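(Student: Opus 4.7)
The plan is to recognize that the sum on the left-hand side is exactly the chain-rule expansion of a single KL divergence between two joint distributions over $H_{M,T}$: one induced by the true prior on $\psi$ and one induced by the misspecified prior $\tilde P_0$. The rest is then a one-line application of the data-processing inequality together with the fact that the likelihood $\Pr(H_{M,T} \mid \psi)$ is shared by both models (only the prior on $\psi$ differs).

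Concretely, let $P^{*}$ denote the true joint law of $(\psi, H_{M,T})$ and let $\tilde P$ denote the joint law under which $\psi$ has prior $\tilde P_0$ but the conditional law of $H_{M,T}$ given $\psi$ is identical to the true one. Ordering the tokens as $X_1^{(1)}, \dots, X_T^{(1)}, X_1^{(2)}, \dots, X_T^{(M)}$, the chain rule for KL divergence on sequences gives
\begin{align*}
    \KL\!\bigl(P^{*}(H_{M,T}) \,\|\, \tilde P(H_{M,T})\bigr)
    = \sum_{m=1}^{M}\sum_{t=0}^{T-1}\E\!\left[\KL\!\bigl(\hat P_{m,t} \,\|\, \tilde P_{m,t}\bigr)\right],
\end{align*}
because $\hat P_{m,t}$ and $\tilde P_{m,t}$ are precisely the one-step conditionals of $P^{*}$ and $\tilde P$ given the history $H_{m,t}$.

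Next, the data-processing inequality (or equivalently the chain rule for joint KL, noting that the conditional KL $\KL(P^{*}(H_{M,T}\mid\psi)\,\|\,\tilde P(H_{M,T}\mid\psi))$ is identically zero since both models share the same likelihood) yields
\begin{align*}
    \KL\!\bigl(P^{*}(H_{M,T}) \,\|\, \tilde P(H_{M,T})\bigr)
    \;\leq\; \KL\!\bigl(P^{*}(\psi,H_{M,T}) \,\|\, \tilde P(\psi,H_{M,T})\bigr)
    \;=\; \KL\!\bigl(\Pr(\psi\in\cdot)\,\|\,\tilde P_0(\psi\in\cdot)\bigr).
\end{align*}
Combining the two displays and dividing by $MT$ delivers the claimed bound.

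The only step requiring care is the chain-rule identification in the first display: one must verify that $\tilde P_{m,t}$ is indeed the conditional $\tilde P(X_{t+1}^{(m)}\in\cdot \mid H_{m,t})$, which follows directly from the definition of $\tilde P_{m,t}$ as the Bayesian posterior predictive under $\tilde P_0$. Once that is in place, the argument is essentially a reformulation of the standard ``prior misspecification incurs at most one prior-KL's worth of cumulative predictive loss'' principle, and no further technical obstacle arises.
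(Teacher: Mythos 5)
Your proof is correct and follows essentially the same route as the paper: both use the chain rule of KL divergence to collapse the sum of per-step conditional KLs into a single KL between the marginal laws of $H_{M,T}$, then apply the data-processing inequality (equivalently, the chain rule on the joint over $(\psi, H_{M,T})$, exploiting that the likelihood term vanishes) to bound it by the KL between priors on $\psi$. The only cosmetic difference is that the paper telescopes the chain rule in two stages (within documents, then across them) while you do it in one pass, and you make explicit the observation that the shared likelihood is what makes the joint KL reduce to the prior KL.
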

\begin{proof}
    \begin{align*}
        & \frac{1}{MT}\sum_{m=1}^{M}\sum_{t=0}^{T-1} \E\left[\KL\left(\hat{P}_{m,t}\ \big\|\ \tilde{P}_{m,t}\right)\right]\\
        & \overset{(a)}{=} \frac{1}{MT}\sum_{m=1}^{M} \E\left[\KL\left(\Pr(H_{T}^{(m)}\in\cdot)\ \big\|\ \tilde{P}_{m}\left(H_{T}^{(m)}\in\cdot\right)\right)\right]\\
        & \overset{(b)}{=} \frac{\E\left[\KL(\Pr(H_{M,T} \in \cdot)\ \|\ \tilde{P}(H_{M,T}\in\cdot))\right]}{MT}\\
        & \overset{(c)}{\leq} \frac{\E\left[\KL\left(\Pr(\psi\in\cdot)\ \|\ \tilde{P}_0(\psi\in\cdot)\right)\right]}{MT},
    \end{align*}
    where $(a)$ and $(b)$ follow from the chain rule of KL divergence and $(c)$ follows from the data processing inequality of KL Divergence.
\end{proof}

Theorem \ref{th:prior_ub} suggests that so long as the KL divergence between prior distributions is finite, the misspecification error should decrease to $0$ as $M$ and $T$ $\to\infty$. This can be ensured so long as the algorithm's prior $\tilde{P}_0(\psi)$ does not assign $0$ probability mass to any set for which the environment prior $\Pr(\psi)$ assigns non-zero probability.

With these results in place, we provide the following Corollary which exactly characterizes the loss of a predictor $\tilde{P}$ which produces predictions via Bayesian inference with respect to a arbitrary prior distribution $\tilde{P}_0(\psi\in\cdot)$.
\begin{corollary}
    For all $M,T\in\mathbb{Z}_{++}$ and $m,t\in [M]\times[T]$,
    if $\tilde{P}_{m,t}$ computes probabilities under an arbitrary prior distribution $\tilde{P}_0(\psi\in\cdot)$ and $\tilde{\Lc}_{M,T}$ denotes its cumulative average log-loss,, then
    \begin{align*}
        \tilde{\L}_{M,T}
        & = \frac{\H(H_{M,T}|\theta_{1:M})}{MT} + \frac{\I\left(H_{M,T};\psi\right)}{MT} + \frac{\I\left(D_m;\theta_m|\psi\right)}{T}\\
        & + \frac{\E\left[\KL\left(\Pr\left(H_{M,T}\in\cdot\right)\|\tilde{P}\left(H_{M,T}\in\cdot\right)\right)\right]}{MT}.
    \end{align*}
\end{corollary}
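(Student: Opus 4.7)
The plan is to derive the Corollary by combining three results that have already been established: Lemma \ref{le:lossQ}, which expresses the loss of any predictor as the optimal Bayesian loss plus a \emph{misspecification} term; Theorem \ref{th:main_result}, which decomposes the optimal loss $\L_{M,T}$ into irreducible, meta-estimation, and intra-document estimation components; and the argument inside the proof of Theorem \ref{th:prior_ub}, which rewrites the misspecification term as a single KL divergence between joint predictive distributions via the chain rule of KL divergence.

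Concretely, I would first invoke Lemma \ref{le:lossQ} to write
\begin{equation*}
\tilde{\L}_{M,T} \;=\; \L_{M,T} \;+\; \frac{1}{MT}\sum_{m=1}^{M}\sum_{t=0}^{T-1}\E\!\left[\KL\!\left(\hat{P}_{m,t}\,\big\|\,\tilde{P}_{m,t}\right)\right].
\end{equation*}
Next, I would substitute the exact expansion from Theorem \ref{th:main_result} for $\L_{M,T}$, producing the irreducible-error, $\I(H_{M,T};\psi)/MT$, and $\I(D_m;\theta_m|\psi)/T$ summands that appear in the statement.

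The third step handles the misspecification sum. Since both $\hat{P}_{m,t}$ and $\tilde{P}_{m,t}$ are one-step-ahead predictive distributions (the former is the true Bayesian posterior, the latter is Bayesian inference with prior $\tilde{P}_{0}$), the chain rule of KL divergence collapses the double sum into a single joint-distribution KL:
\begin{equation*}
\sum_{m=1}^{M}\sum_{t=0}^{T-1}\E\!\left[\KL(\hat{P}_{m,t}\,\|\,\tilde{P}_{m,t})\right] \;=\; \E\!\left[\KL\!\left(\Pr(H_{M,T}\in\cdot)\,\big\|\,\tilde{P}(H_{M,T}\in\cdot)\right)\right].
\end{equation*}
This is exactly steps $(a)$ and $(b)$ in the proof of Theorem \ref{th:prior_ub}, and they are equalities (not inequalities), so no slack is introduced; in particular, unlike Theorem \ref{th:prior_ub}, I do \emph{not} invoke the data-processing inequality to further bound this by $\KL(\Pr(\psi)\,\|\,\tilde{P}_0(\psi))/MT$, since the Corollary retains the tight joint KL form.

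There is no real obstacle here beyond bookkeeping: the Corollary is essentially the concatenation of Lemma \ref{le:lossQ}, Theorem \ref{th:main_result}, and the equality portion of the proof of Theorem \ref{th:prior_ub}. The only point that requires mild care is the chain-rule manipulation in the third step, where one must note that $\hat{P}_{m,t}$ is the \emph{true} conditional of $X^{(m)}_{t+1}$ given $H_{m,t}$ while $\tilde{P}_{m,t}$ is the analogous conditional under the model with prior $\tilde{P}_{0}$, so that telescoping the per-token KLs indeed reproduces the KL between the two joint laws on $H_{M,T}$.
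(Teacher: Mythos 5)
Your proposal is correct and is essentially the derivation that the paper intends (the paper states the Corollary without a separate proof). You correctly combine Lemma~\ref{le:lossQ} with Theorem~\ref{th:main_result} and then use only the equality steps $(a)$ and $(b)$ from the proof of Theorem~\ref{th:prior_ub} to collapse the per-token misspecification sum into the joint KL on $H_{M,T}$, noting---correctly---that the data-processing step $(c)$ is omitted so the Corollary retains an equality. The one hypothesis worth flagging explicitly, which you do gesture at, is that the chain-rule telescoping in step three requires $\tilde{P}_{m,t}$ to be the one-step conditionals of a single coherent joint law $\tilde{P}(H_{M,T}\in\cdot)$, which is guaranteed by the assumption that it is exact Bayesian inference under the (misspecified) prior $\tilde{P}_0(\psi\in\cdot)$.
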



\end{document}